\documentclass{article}
\usepackage[margin=1in]{geometry}
\usepackage{amsthm}

\usepackage{graphicx} %
\usepackage{amsmath}
\usepackage{amssymb}
\usepackage{mathtools}
\usepackage{soul}
\usepackage{stmaryrd}
\usepackage{enumitem}
\usepackage{comment}
\usepackage{rotating}
\usepackage{booktabs}       %
\usepackage{amsfonts}       %
\usepackage{nicefrac}       %
\usepackage{microtype}      %
\usepackage{multirow}
\usepackage{xcolor}
\usepackage{hyperref}
\usepackage{subcaption}
\usepackage{tabularx}
\usepackage[font=small,skip=4pt]{caption}
\allowdisplaybreaks

\usepackage{algorithm}
\usepackage{algcompatible}
\makeatletter
\AddToHook{env/algorithmic/begin}{\def\@currentcounter{ALG@line}}
\makeatother
\usepackage{cleveref}
\crefalias{ALG@line}{line}

\newtheorem{proposition}{Proposition}
\newtheorem{theorem}{Theorem}
\newtheorem{lemma}{Lemma}
\newtheorem{corollary}{Corollary}
\newtheorem{corollary*}{Corollary}
\newtheorem{definition}{Definition}

\newtheorem{remark}{Remark}
\crefname{corollary}{corollary}{Corollary}
\crefname{lemma}{lemma}{Lemma}
\crefname{definition}{definition}{Definition}
\crefname{example}{example}{Example}

\newcommand{\relu}{\text{ReLU}}

\renewcommand{\L}{\mathcal{L}}

\newcommand{\by}{\mathbf{y}}
\newcommand{\bx}{\mathbf{x}}

\newcommand{\bu}{\mathbf{u}}

\newcommand{\bw}{\mathbf{w}}

\newcommand{\CP}{\mathcal{P}}
\newcommand{\CR}{\mathcal{R}}
\newcommand{\din}{d_{\textnormal{in}}}
\newcommand{\dout}{d_{\textnormal{out}}}

\title{Multilevel Training for Kolmogorov Arnold Networks}
\author{Ben S. Southworth\thanks{Theoretical Division, Los Alamos National Laboratory, USA.}
   \and Jonas A. Actor\thanks{Center for Computing Research, Sandia National Laboratories, USA.}
   \and Graham Harper\footnotemark[3]
   \and Eric C. Cyr\footnotemark[3]
}

\ifpdf
\hypersetup{ pdftitle={Multilevel Training for KANs} }
\fi

\begin{document}
\maketitle
\allowdisplaybreaks

\begin{abstract}
Algorithmic speedup of training common neural architectures is made difficult by the lack of structure guaranteed by the function compositions inherent to such networks. In contrast to multilayer perceptrons (MLPs), Kolmogorov-Arnold networks (KANs) provide more structure by expanding learned activations in a specified basis. This paper exploits this structure to develop practical algorithms and theoretical insights, yielding training speedup via multilevel training for KANs. To do so, we first establish an equivalence between KANs with spline basis functions and multichannel MLPs with power ReLU activations through a linear change of basis. We then analyze how this change of basis affects the geometry of gradient-based optimization with respect to spline knots. The KANs change-of-basis motivates a multilevel training approach, where we train a sequence of KANs naturally defined through a uniform refinement of spline knots with analytic geometric interpolation operators between models. The interpolation scheme enables a ``properly nested hierarchy'' of architectures, ensuring that interpolation to a fine model preserves the progress made on coarse models, while the compact support of spline basis functions ensures complementary optimization on subsequent levels. Numerical experiments demonstrate that our multilevel training approach can achieve orders of magnitude improvement in accuracy over conventional methods to train comparable KANs or MLPs, particularly for physics informed neural networks. Finally, this work demonstrates how principled design of neural networks can lead to exploitable structure, and in this case, multilevel algorithms that can dramatically improve training performance. 
\end{abstract}

\section{Introduction}\label{sec:intro}

Multilayer perceptrons (MLPs) \cite{mcculloch1943logical, rosenblatt1958perceptron, pinkus1999approximation} are a classical deep learning architecture that exploit the composition of affine maps with a nonlinear scalar activation function. MLP architectures appear as blocks in many state-of-the-art applications, including (but not limited to) variational autoencoders \cite{kingma2013auto} and transformers \cite{vaswani2017attention}. Training MLPs, especially modern multichannel and multihead variants of state-of-the-art architectures, is a nontrivial numerical task. Typically these methods rely on iterative variants of stochastic gradient descent resulting in relatively slow convergence. In contrast, in classical numerical methods, multilevel and multigrid methods for solving numerical partial differential equations (PDEs), are some of the most powerful solvers of linear and nonlinear equations, capable of solving a sparse $n\times n$  linear system in $\mathcal{O}(n)$ operations.

Motivated by this success, there have been a number of attempts to extend multigrid ideas to machine learning, e.g. \cite{Chang2018,Gunther2020,cyr2025torchbraid,ke2017multigrid,He2019,Eliasof2023,Kopanicakova2022,GaedkeMerzhauser2021,Feischl2024t1}. The connection between deep neural networks and discretized dynamical systems \cite{haber2017stable, ruthotto2020deep, chen2018neural} provides theoretical motivation for applying multilevel methods from numerical PDEs to neural network training, as ResNet depth can be interpreted as a time discretization. Building on this interpretation, nested iteration via multilevel-in-layer is considered in \cite{Chang2018} observing modest speedups, and multigrid-in-time concepts are applied to develop a layer-parallel training strategy in \cite{Gunther2020,cyr2025torchbraid}, but the obtained speedups are due to parallelization and are not algorithmic. Several works have also designed architectures that explicitly incorporate multigrid structure, e.g. \cite{ke2017multigrid,He2019,Eliasof2023}. While these approaches demonstrate that multigrid principles can inform architecture design, they focus on model structure rather than training algorithms and do not report training speedups. Rigorous multilevel training in the context of nonlinear multigrid and cycling is considered in \cite{Kopanicakova2022,GaedkeMerzhauser2021}, demonstrating modest convergence improvements. Most recently, Feischl et al. \cite{Feischl2024t1} develops a theoretical framework for refinement of feed-forward neural networks that is incorporated into the optimization procedure.

Nevertheless, to the best of our knowledge no multilevel machine learning works have provided \emph{algorithmic} speedups like seen in other fields such as numerical PDEs. Broadly, this is due to the lack of multilevel machine learning hierarchies with good approximation properties between levels, and grid-specific optimization or ``relaxation'' routines that complement the chosen coarsening and interpolation. Implicit in this context is the substantial difficulty in defining coarse representations of a machine learning model. Because in machine learning the coarse and fine models typically operate on the same dimensional space, there is no clear extension of approximation properties from multigrid literature to motivate the choice of coarse model and transfer operators. On a high level, coarse models must be (i) cheaper to (approximately) solve than the fine model, (ii) not conflict with the fine model objective, and moreover (iii) provide correction or descent direction that is difficult to capture/identify with the fine model, i.e. be complementary to ``relaxation,'' or grid-specific optimization methods. %

An alternative architecture to MLPs, Kolmogorov-Arnold Networks (KANs) \cite{liu2024kan}, has gained popularity in recent literature, having been used for a range of tasks including computer vision \cite{cang2024can, cacciatore2024preliminary, cheon2024demonstrating}, time series analysis \cite{vaca2024kolmogorov, dong2024kolmogorov}, scientific machine learning \cite{abueidda2024deepokan, toscano2024pinns, wu2024kolmogorov, koenig2024kan, jacob2024spikans, patra2024physics, howard2024finite, rigas2024adaptive}, graph analysis \cite{kiamari2024gkan, bresson2024kagnns}, and beyond; see \cite{somvanshi2024survey, hou2024comprehensive} and references therein for a list of versions, extensions, and applications of KANs, and \cite{noorizadegan2025practitioner} for a comprehensive review. The structural similarity between MLPs and KANs has been noted multiple times in the literature, e.g.
\cite{pinkus1999approximation, igelnik2003kolmogorov, leni2013kolmogorov, he2024mlp, qiu2025powermlp}.  
Due to their similarities, KANs tend to be comparable to MLPs for learning tasks, with the same asymptotic complexity and convergence rates \cite{gao2024convergence} and performance in  variety of head-to-head comparisons \cite{zeng2024kan, shukla2024comprehensive, yu2024kan}. Compared to MLPs, KANs in particular are known for (i) being more interpretable, as the model output consists of analytical functional composition, and (ii) being able to better capture low-regularity solutions and mappings than traditional MLPs. However, theoretical insights and practical algorithms for KANs that exhibit these properties while maintaining (or outperforming) MLPs are generally inconsistent or lacking, and it is this gap that this paper aims to address, with the particular aim of advancing training strategies to exploit multilevel optimization algorithms.

\subsection{Contributions}

In this paper, we advance theoretical and practical insights pertaining to KANs, improving upon our conceptual understanding of how architectures and optimizers pair together to achieve better training and model performance. We translate these insights into a demonstration that shows how multilevel methods can dramatically improve training of a properly designed neural network. To do so, we:
\begin{enumerate}
\item exploit the relationship between KANs and multichannel MLPs to introduce a change-of-basis map between the two architectures;
\item analyze how this change of basis alters the dynamics of gradient descent; and
\item introduce the concept of \emph{properly nested hierarchy} for multilevel optimization, show that KANs with appropriate interpolation operators satisfy this ansatz, and design and demonstrate a corresponding multilevel training approach inspired by multigrid methods.
\end{enumerate}
The outline of the paper and the technical steps and reasons that necessitate the accompanying analysis are described next.

\subsection{The rest of the paper}
We begin in \Cref{sec:kans-mlp:cob} by exploiting the relationship between B-Splines and ReLU activations to reformulate KANs in the language of multichannel MLPs. We show that KAN architectures with spline activations are equivalent to a certain form of multichannel MLPs under an appropriate linear change of basis, where the biases are fixed to match the spline knots. In \Cref{sec:feature:eval}, we show that the linear change of basis between KANs and multichannel MLPs exactly matches a finite-difference discretization of the $r$th derivative operator on spline knots. This change of basis immediately yields a direct, non-recursive implementation of spline basis KANs that is faster than the typical Cox-de Boor recursive form (see \Cref{sec:kans-mlp:cost}). 

Despite the equivalence (as forward operators) under the change of basis, gradient-based training of KANs and multichannel MLPs yield fundamentally distinct weight evolution. This is discussed abstractly in terms of the geometry of the primal and dual space in \Cref{sec:feature}, where a function of the change of basis matrix between KANs and multichannel MLPs acts as a preconditioner of descent methods depending on your choice of spaces. Combining these results with the analysis from \Cref{sec:feature:eval} shows substantial benefit to the training dynamics due to the eigenstructure of differential operators: the multichannel MLP formulation will strongly prioritize training smooth functions across spline knots. This makes an expressive model of complex functions easier for the KAN to realize, as compared to a multichannel MLP.

In \Cref{sec:multilevel}, we exploit the mathematical formulations from \Cref{sec:kans-mlp} and \Cref{sec:feature} to develop an efficient multilevel training framework for spline-based KANs, built on the structure that comes with the spline parameterization, and structure that MLPs lack. 
To do this, we introduce a new concept of \emph{properly nested hierarchies} for multilevel optimization, which ensures that interpolation to a fine model does not \emph{undo} progress made on the coarse model.  Our transfer operators are built geometrically, naturally yielding a properly nested hierarchy, and accelerating the grid transfer methods posed in \cite{liu2024kan} to be fast enough to be constructed/applied during training. The theory developed in \Cref{sec:feature} then indicates that gradient-based optimization applied to the properly nested coarse and fine models are \emph{complementary}, a fundamental requirement for any successful multigrid method. Altogether, we believe that a properly nested hierarchy with level-complementary gradient-based optimization routines provide the necessary components for a successful multigrid method. 

In \Cref{sec:results}, we then numerically demonstrate the multilevel training to provide significant improvements in both accuracy and efficiency applied to functional regression and physics informed neural networks (PINNs).
We similarly show for the same problems that multilevel training applied to the equivalent multichannel MLP basis, which lacks the level-complementary gradient-based optimization, yields effectively zero improvement over just the coarse model. This is because training of the fine-grid multichannel MLP prioritizes modes already captured by the coarse model. %

\section{KANs and multichannel MLPs}\label{sec:kans-mlp}
We begin our analysis with an introduction of KANs in the spline basis, and then demonstrate how a linear transformation relates each layer in the spline basis to an equivalent multichannel MLP. From a historical and approximation theory standpoint \cite{lorentz1962metric, sprecher1993universal, pinkus1999approximation, sprecher2017algebra}, both KANs and MLPs stem from arguments that expand upon the Kolmogorov Superposition Theorem (KST) \cite{kolmogorov1957representation}, also called the Kolmogorov-Arnold Superposition Theorem. This theorem states there exist functions $\phi_{pq} \in C([0,1])$, for $p=1,\dots,n$ and $q=1,\dots,2n+1$, such that for any function $f \in C([0,1]^n)$, there are functions $\varphi_{q} \in C(\mathbb{R})$ such that 
$f(x_1,\dots,x_n) = \sum_{q=1}^{2n+1} \varphi_{q}\left( \sum_{p=1}^n \phi_{pq}(x_p) \right).$
Classically $\varphi_{q}$ and $\phi_{pq}$ are not differentiable at a dense set of points and only H\"older or at best Lipschitz continuous \cite{braun2009constructive, sprecher1993universal,actor2018computation}.
Inspired by the KST, KANs replace the functions $\varphi_{q}$ and $\phi_{pq}$ 
 with learned functions expressed in a fixed basis, often a spline basis \cite{liu2024kan}.  While the original KST invokes a single layer of function composition creating a shallow network, KANs use more layers of function composition to increase the depth of the resulting representation. For a layer $\ell$ with $P$ inputs and $Q$ outputs define the output of a KAN layer as
 $x^{(\ell+1)}_q = \sum_{p=1}^P \phi^{(\ell)}_{pq}( x^{(\ell)}_p ) \mbox{ for } q=1,\dots,Q,$
with $\phi^{(\ell)}_{pq}$ trained from data.
In practice, each $\phi_{pq}^{(\ell)}$ must be represented in a computationally tractable form. Following 
\cite{liu2024kan}, we expand each $\phi_{pq}^{(\ell)}$ in a basis of B-splines of order $r$, 
yielding a finite-dimensional parameterization where the spline coefficients become the trainable 
weights of the network. This choice of basis is motivated by the approximation properties of 
splines and their compact support, which we will exploit throughout this work. Specifically, 
given a set of knots, each $\phi_{pq}^{(\ell)}$ is represented as
$\phi_{pq}^{(\ell)}(x) = \sum_{i=1-r}^{n-1} \widetilde{W}^{(\ell)}_{qpi} \, b_i^{[r]}(x),$
where $\widetilde{W}^{(\ell)}_{qpi}$ are learnable weights and $\{b_i^{[r-1]}\}_{i=1-r}^{n-1}$ forms a B-spline basis.

\subsection{Change of basis}\label{sec:kans-mlp:cob}

We formalize the structure of the spline basis to establish an equivalence to an alternative representation using ReLU activations via a linear change of basis. Let $T = \{t_i\}_{i=1-r}^{n+r-1}$ be a strictly ordered\footnote{We assume a strictly ordered, i.e. nondegenerate, set of spline knots, which simplifies the required assumptions and definitions. While the authors know of no theoretical reasons why degenerate splines should not be considered under this framework, there is substantial technical rigor necessary to treat these properly, so we omit them from this discourse.} set of $n+2r-1$ spline knots, where $t_i < t_{i+1}$, with $t_0 = a$ and $t_n = b$. Let $\mathbb{P}^k$ denote the space of polynomials of degree $k$. The \emph{spline space of order $r$ with knots $T$}, denoted by $S_r(T)$, is the set of functions
\begin{equation}
S_r(T) = \{ f \in C^{r-2}([a,b]) \,\,:\,\, f\lvert_{[t_i, t_{i+1}]} \in \mathbb{P}^{r-1} ,\, t_i \in T\}.
\end{equation}
The spline space of order $r$ consists of piecewise polynomials of degree $r-1$ (subject to additional smoothness constraints). For $r=1$, define $b_i^{[1]}(x) = \begin{cases} 1 & x \in [t_i, t_{i+1}] \\
    0 & \text{else}\end{cases}$
and for $r >1$, we use the Cox-de Boor \cite{deboor1978practical} recursion formula:
\begin{equation}\label{eq:spline-basis}
    b^{[r]}_i(x) = \frac{x - t_i}{t_{i+r-1} - t_i} b^{[r-1]}_i(x) + \frac{t_{i+r} - x}{t_{i+r} - t_{i+1}} b^{[r-1]}_{i+1}(x).
\end{equation}
Each function $b^{[r]}_i$ is supported on the interval $[t_i, t_{i+r}]$.
We additionally define a $\text{ReLU}^{r-1}$ basis, spanned by functions 
\begin{equation}\label{eq:relu-basis}
    \psi^{[r]}_i(x) = \text{ReLU}(x - t_i)^{r-1}.
\end{equation}
We can now restate a classical spline approximation result, cf. \cite{chui1988multivariate}.
\begin{lemma}
Both  $B_S^{[r]} = \{ b^{[r]}_i \}_{i=1-r}^{n-1}$ and $B_R^{[r]} = \{ \psi_i^{[r]} \}_{i=1-r}^{n-1}$
are bases for $S_r(T)$.
\end{lemma}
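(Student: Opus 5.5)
The approach is a dimension count followed by linear independence for each family. Restricted to $[a,b]=[t_0,t_n]$, the interior knots $t_1,\dots,t_{n-1}$ split the interval into $n$ subintervals. On each subinterval a function in $S_r(T)$ is a polynomial of degree at most $r-1$, which gives $rn$ parameters. At each of the $n-1$ interior knots, $C^{r-2}$ continuity imposes $r-1$ independent linear constraints: the jumps in derivatives $0,\dots,r-2$ must vanish. Hence
\[
\dim S_r(T)=rn-(n-1)(r-1)=n+r-1,
\]
which is exactly the cardinality of each index set $\{1-r,\dots,n-1\}$.

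To make the count rigorous, I will write an arbitrary $f\in S_r(T)$ as $f=p+\sum_{i=1}^{n-1}c_i(x-t_i)_+^{r-1}$, where $p$ is the polynomial on $[t_0,t_1]$. The argument goes knot by knot. The difference of consecutive polynomial pieces is divisible by $(x-t_i)^{r-1}$, because it vanishes to order $r-1$ at $t_i$, and since it has degree at most $r-1$ it is a multiple of $(x-t_i)^{r-1}$. This gives the spanning set $\{1,x,\dots,x^{r-1}\}\cup\{(x-t_i)_+^{r-1}\}_{i=1}^{n-1}$, which has exactly $n+r-1$ elements. Linear independence of this set follows by looking at successive subintervals, which forces each $c_i=0$ in turn, and then $p=0$. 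So the dimension count is established.

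\textbf{ReLU basis.} Each $\psi_i^{[r]}$ with $i\ge 1$ lies in $S_r(T)$. For $i\le 0$ we have $t_i\le a$, so on $[a,b]$ the function $\psi_i^{[r]}$ equals the polynomial $(x-t_i)^{r-1}$. The $r$ polynomials $(x-t_i)^{r-1}$ for $i=1-r,\dots,0$ have distinct $t_i$, and they span $\mathbb{P}^{r-1}$. Expanding in the monomial basis shows this: the coefficient matrix is a scaled Vandermonde matrix in the distinct nodes $t_i$, hence invertible. Combined with the truncated powers at the interior knots, $B_R^{[r]}$ is therefore related to the spanning set above by an invertible change of basis, so it is a basis.

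\textbf{B-spline basis.} From the recursion \eqref{eq:spline-basis}, an induction on $r$ shows that each $b_i^{[r]}$ is piecewise polynomial of degree $r-1$, supported on $[t_i,t_{i+r}]$, and $C^{r-2}$ at the simple knots. Given the dimension count, it then suffices to show that $B_S^{[r]}$ is linearly independent on $[a,b]$; this is the step I expect to be the main obstacle. I would use the divided-difference representation $b_i^{[r]}(x)=(t_{i+r}-t_i)[t_i,\dots,t_{i+r}](\cdot-x)_+^{r-1}$. Alternatively, a local argument works: suppose $\sum c_i b_i^{[r]}=0$. On each interval $[t_j,t_{j+1}]\subset[a,b]$ only the $r$ functions $b_{j-r+1}^{[r]},\dots,b_j^{[r]}$ are nonzero. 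Using Marsden's identity, which is proved by induction from the recursion, these $r$ functions span $\mathbb{P}^{r-1}$ on that interval, so they are independent there and all their coefficients vanish. Every index $i\in\{1-r,\dots,n-1\}$ is active on some interval in $[a,b]$, so all $c_i=0$, and $B_S^{[r]}$ is a basis.
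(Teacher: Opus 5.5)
The paper gives no proof of this lemma. It quotes it as a classical fact, with a pointer to Chui, so your argument is more self-contained than the source. Your dimension count via the truncated-power basis $\{1,x,\dots,x^{r-1}\}\cup\{(x-t_i)_+^{r-1}\}_{i=1}^{n-1}$ is complete, and so is your treatment of $B_R^{[r]}$. The Vandermonde step is exactly where strict ordering of the exterior knots $t_{1-r},\dots,t_0$ is used.

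The step that does not go through as written is in the B-spline half, and it is not the one you flagged. Your reduction ``independent $+$ dimension count $\Rightarrow$ basis'' needs $B_S^{[r]}\subset S_r(T)$, and you justify $b_i^{[r]}\in C^{r-2}$ by ``an induction on $r$''. A direct induction only gives $C^{r-3}$, because the recursion multiplies $C^{r-3}$ functions by linear polynomials. Gaining one more order requires the jumps of the $(r-2)$th derivatives of the two terms to cancel at every knot. You cannot verify that without carrying explicit information about those jumps in the inductive hypothesis. Two ways to do so are the derivative formula $\tfrac{d}{dx}b_i^{[r]}=(r-1)\bigl(b_i^{[r-1]}/(t_{i+r-1}-t_i)-b_{i+1}^{[r-1]}/(t_{i+r}-t_{i+1})\bigr)$, or the divided-difference representation you mention (whose equivalence with the recursion needs Leibniz's rule for divided differences). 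The paper's recursion for $A^{[r]}$ is no shortcut, since its proof cancels the $\mathrm{ReLU}^{r-2}$ terms by invoking exactly this smoothness. Even continuity depends on a convention: with the closed intervals the paper uses for $b_i^{[1]}$, the recursion gives $b_i^{[2]}(t_{i+1})=2$, so you need half-open indicators.

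The cleanest repair uses the tool you already invoke. Marsden's identity follows from the recursion by a purely algebraic induction, with no smoothness needed. For $r\ge 2$, on $[a,b]$ it reads
\[
(y-x)^{r-1}=\sum_{i=1-r}^{n-1}q_i(y)\,b_i^{[r]}(x),\qquad q_i(y)=\prod_{k=1}^{r-1}(y-t_{i+k}).
\]
Take $y=t_m$ with $1\le m\le n-1$. The coefficients with $m-r+1\le i\le m-1$ vanish. The remaining $b_i^{[r]}$ with $i\le m-r$ are supported in $(-\infty,t_m]$, and those with $i\ge m$ in $[t_m,\infty)$. Hence $(t_m-x)_+^{r-1}=\sum_{i=1-r}^{m-r}q_i(t_m)\,b_i^{[r]}(x)$ on $[a,b]$, and so $(x-t_m)_+^{r-1}=(x-t_m)^{r-1}-(-1)^{r-1}(t_m-x)_+^{r-1}$ lies in $\mathrm{span}\,B_S^{[r]}$. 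Your Vandermonde argument applied to $r$ distinct values of $y$ gives $\mathbb{P}^{r-1}\subset\mathrm{span}\,B_S^{[r]}$. Together these give $\mathrm{span}\,B_S^{[r]}\supseteq S_r(T)$. Since $B_S^{[r]}$ has $n+r-1=\dim S_r(T)$ elements, the two spaces coincide. That one step yields both $b_i^{[r]}\in S_r(T)$, so the smoothness comes for free, and linear independence. Neither the smoothness induction nor the local independence argument is then needed, and the case $r=1$ is immediate.
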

Thus there exists a linear change of basis between them; denote this change-of-basis matrix as $A^{[r]} \in \mathbb{R}^{(n+r-1) \times (n+r-1)}$, so that
\begin{equation}\label{eq:cob}
    B^{[r]}_S = A^{[r]}B^{[r]}_R,
\end{equation} 
with $B^{[r]}_S$ and $B^{[r]}_R$ understood to be square matrices with columns given by basis vectors. The matrix $A^{[r]}$ is structured and banded, with a particularly elegant expression for the case of uniform knot spacing:
\begin{lemma}\label{lemma:basis}
    For $r > 1$, let $A^{[r-1]}$ denote splines of order $r-1$ constructed on knots for splines of order $r$. Then the matrix $A^{[r]}$ is defined entry-wise as:
    \begin{equation} \label{eq:A_recurrence}
         A^{[r]}_{ij} = \frac{1}{t_{i+r-1} - t_i} A^{[r-1]}_{ij} - \frac{1}{t_{i+r} - t_{i+1}} A^{[r-1]}_{i+1,j},
    \end{equation}
    where we define $A^{[r-1]}_{n+r,:} = 0$ to account for the special case of the final row $i=n+r-1$. 
\end{lemma}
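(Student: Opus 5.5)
The plan is to prove \eqref{eq:A_recurrence} by differentiating both sides of the change of basis \eqref{eq:cob} and then matching coefficients in the $\text{ReLU}^{r-2}$ basis, which is linearly independent. In coordinates, \eqref{eq:cob} reads $b_i^{[r]}(x)=\sum_j A^{[r]}_{ij}\psi^{[r]}_j(x)$. Likewise, the order-$(r-1)$ splines on the same knot set satisfy $b_i^{[r-1]}=\sum_j A^{[r-1]}_{ij}\psi^{[r-1]}_j$. There is one more such function, since the knot set $T$ supports one extra basis function of order $r-1$; this is where the convention $A^{[r-1]}_{n+r,:}=0$ enters. I would first fix these index ranges carefully, so that the shifted index $i+1$ in the recurrence makes sense for every row $i$.

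\textbf{Step 1: derivative identities.} On the ReLU side, $\frac{d}{dx}\psi_j^{[r]}=(r-1)\psi_j^{[r-1]}$. For $r-1\ge 2$ this holds classically. For $r=2$ it holds piecewise, i.e.\ away from knots, with $\psi^{[1]}_j$ the Heaviside step at $t_j$. On the spline side I would use the classical derivative formula for the Cox--de Boor normalization \eqref{eq:spline-basis}:
\[
\frac{d}{dx} b_i^{[r]}(x) = (r-1)\left[\frac{b_i^{[r-1]}(x)}{t_{i+r-1}-t_i}-\frac{b_{i+1}^{[r-1]}(x)}{t_{i+r}-t_{i+1}}\right].
\]
I would prove this by induction on $r$ directly from \eqref{eq:spline-basis}. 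Differentiating the recursion produces the terms $b^{[r-1]}/(\cdot)$ together with terms involving derivatives of $b^{[r-1]}$. The induction hypothesis rewrites the latter as combinations of order-$(r-2)$ splines, and these regroup, via the recursion at order $r-1$, into $(r-2)$ times the same bracket. Summing gives the factor $(r-1)$. This regrouping is the classical de Boor computation. I expect it to be the main technical obstacle, mostly bookkeeping of the knot differences. If it becomes unwieldy, I would instead cite it from \cite{deboor1978practical}.

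\textbf{Step 2: match coefficients.} Differentiate $b_i^{[r]}=\sum_j A^{[r]}_{ij}\psi^{[r]}_j$. Substitute the spline derivative formula on the left and expand $b_i^{[r-1]}$ and $b_{i+1}^{[r-1]}$ via $A^{[r-1]}$. This gives, away from the knots,
\[
(r-1)\sum_j\left[\frac{A^{[r-1]}_{ij}}{t_{i+r-1}-t_i}-\frac{A^{[r-1]}_{i+1,j}}{t_{i+r}-t_{i+1}}\right]\psi^{[r-1]}_j=(r-1)\sum_j A^{[r]}_{ij}\psi^{[r-1]}_j .
\]
The $\psi^{[r-1]}_j$ are linearly independent on the union of open knot intervals, since each introduces a new break at a distinct knot. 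We may therefore cancel $r-1$ and read off \eqref{eq:A_recurrence} entrywise. For the final row, $b_{i+1}^{[r-1]}$ does not exist, so the second term is absent, which is exactly what the convention $A^{[r-1]}_{n+r,:}=0$ encodes.

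\textbf{Step 3: the constant of integration.} Equality of derivatives only determines $b_i^{[r]}-\sum_j A^{[r]}_{ij}\psi^{[r]}_j$ up to a constant. To fix it, evaluate at a point $x<t_{1-r}$, to the left of all knots. There every $\psi_j^{[r]}$ vanishes and $b_i^{[r]}$ vanishes by its support $[t_i,t_{i+r}]$. Hence the constant is zero, and the recurrence holds for $A^{[r]}$ as defined by \eqref{eq:cob}.
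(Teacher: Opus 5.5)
Your proof is correct, but it takes a different route from the paper's. The paper never differentiates. It substitutes $b^{[r-1]}_i=\sum_j A^{[r-1]}_{ij}\,\text{ReLU}(x-t_j)^{r-2}$ directly into the Cox--de Boor recursion \eqref{eq:spline-basis}. It then raises the power with the identity $(x-a)\,\text{ReLU}(x-b)^{r-2}=\text{ReLU}(x-b)^{r-1}+(b-a)\,\text{ReLU}(x-b)^{r-2}$. Next it argues that all leftover $\text{ReLU}(\cdot-t_j)^{r-2}$ terms must cancel, because $b^{[r]}_i\in C^{r-2}$ while each such term has a jump in its $(r-2)$nd derivative at $t_j$. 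The recurrence is read off from the surviving $\text{ReLU}^{r-1}$ coefficients, and the case $r=2$ is taken from the literature.

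You instead use that $d/dx$ acts diagonally on truncated powers, $\psi^{[r]\prime}_j=(r-1)\psi^{[r-1]}_j$. This makes \eqref{eq:A_recurrence} literally the coefficient form of de Boor's derivative formula. What your route buys:
\begin{itemize}
\item It replaces the paper's cancellation step, which the paper justifies by a smoothness argument or ``laborious arithmetic,'' with a short algebraic induction. In that induction the only nontrivial check is that the $b^{[r-2]}_{i+1}$ coefficients combine as $-1+1=0$.
\item It treats $r=2$ uniformly (piecewise), with no separate base case.
\item It makes the finite-difference structure of $A^{[r]}$ natural, since each step of the recurrence is the coefficient-space image of one differentiation.
\end{itemize}
The paper's route, in exchange, needs nothing beyond the recursion and an elementary ReLU identity.

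\textbf{Domain of the ReLU expansions.} State explicitly that all the ReLU expansions are identities on $(-\infty,t_n)$, not merely on $[a,b]$, where \eqref{eq:cob} defines $A^{[r]}$. On $[a,b]$, the $r$ functions $\psi^{[r-1]}_j$ with $j\le 0$ reduce to polynomials of degree $r-2$ and are linearly dependent. Your entrywise matching would then fail. Likewise, row $1-r$ of $A^{[r-1]}$ (for $b^{[r-1]}_{1-r}$, supported left of $a$) would not be well defined. The order-$r$ identity does extend to $(-\infty,t_n)$ with the same $A^{[r]}$. The reason is that $b^{[r]}_i$ restricted there is a combination of the $\psi^{[r]}_j$, and the coefficients are pinned down by uniqueness in the basis $B_R^{[r]}$ on $[a,b]$.

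\textbf{Final row.} $b^{[r-1]}_n$ does exist on $T$; it simply vanishes on $(-\infty,t_n)$. That vanishing is what the zero-row convention records.

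\textbf{Step 3.} This step is superfluous in the direction you actually argue in Step 2: starting from \eqref{eq:cob} and differentiating, no integration constant appears. It is the right closing step if you instead define a matrix by the recurrence and identify it with $A^{[r]}$. In that case, gluing the piecewise constants into one constant uses continuity of $b^{[r]}_i$ and $\psi^{[r]}_j$, which holds for $r\ge 2$.
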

\begin{proof}
    See \Cref{app:cob_proof}.
\end{proof}
\begin{corollary}[Uniform knots]\label{cor:uni}
In the case of uniform knots, with spacing $h = t_i - t_{i-1}$, $A^{[r]}$ is upper triangular Toeplitz, with entries
    \begin{equation*}
        A^{[r]}_{ij} = \begin{cases}
            \frac{(-1)^{j-i} }{(j-i)!\,(r-j+i)!} \, \frac{r}{h^{r-1}}  & i \le j \le i+r \\
            0 & \text{else}
        \end{cases}.
    \end{equation*}
\end{corollary}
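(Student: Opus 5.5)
The plan is to induct on the order $r$, using the recurrence of \Cref{lemma:basis} specialized to uniform knots. The key simplification is that with spacing $h$ both denominators in \eqref{eq:A_recurrence} equal the same constant:
\[
t_{i+r-1}-t_i = t_{i+r}-t_{i+1} = (r-1)h .
\]
The recurrence therefore collapses to a scaled first difference of consecutive rows,
\[
A^{[r]}_{ij} = \frac{1}{(r-1)h}\bigl(A^{[r-1]}_{ij} - A^{[r-1]}_{i+1,j}\bigr).
\]
Since the coefficient no longer depends on $i$, an inductive hypothesis that $A^{[r-1]}_{ij}$ depends only on $k=j-i$ (Toeplitz) and vanishes for $k<0$ (upper triangular) is inherited by $A^{[r]}$. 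I will use the convention $1/m! = 0$ for negative integers $m$, so that the claimed formula automatically vanishes for $k<0$ and $k>r$.

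\textbf{Base case $r=1$.} I will check this directly from the definitions rather than from \Cref{lemma:basis}. Note that $\psi^{[1]}_i(x)=\mathrm{ReLU}(x-t_i)^0$ is the Heaviside step at $t_i$, with the half-open convention matching $b^{[1]}_i$. Then
\[
b^{[1]}_i = \psi^{[1]}_i - \psi^{[1]}_{i+1},
\]
so $A^{[1]}_{ii}=1$, $A^{[1]}_{i,i+1}=-1$, and all other entries are zero. The claimed formula with $r=1$ gives $(-1)^k/(k!\,(1-k)!)$, which is $1$ for $k=0$, $-1$ for $k=1$, and $0$ otherwise, as required.

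\textbf{Inductive step.} Substitute the hypothesis for order $r-1$ into the collapsed recurrence, with $k=j-i$. The entry $A^{[r-1]}_{i+1,j}$ corresponds to offset $k-1$. The bracket becomes
\[
\frac{(-1)^k (r-1)}{h^{r-2}}\Bigl[\frac{1}{k!\,(r-1-k)!}+\frac{1}{(k-1)!\,(r-k)!}\Bigr].
\]
By Pascal's rule, $\binom{r-1}{k}+\binom{r-1}{k-1}=\binom{r}{k}$, this equals
\[
\frac{(-1)^k (r-1)}{h^{r-2}}\cdot\frac{r}{k!\,(r-k)!}.
\]
Dividing by $(r-1)h$ yields exactly
\[
\frac{(-1)^{k}\, r}{k!\,(r-k)!\,h^{r-1}} .
\]
The endpoint offsets $k=0$ and $k=r$ are handled by the vanishing-factorial convention, since one of the two terms drops out.

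\textbf{Main obstacle: bookkeeping of indices and the boundary.} \Cref{lemma:basis} uses order-$(r-1)$ splines built on the order-$r$ knot set. That matrix has one more row and column than $A^{[r]}$ and a shifted index range. I must check that, on the overlapping index range, its entries are still given by the order-$(r-1)$ Toeplitz formula. This holds because uniformity makes the entries depend only on differences of indices, not on absolute positions. For the final row, the convention $A^{[r-1]}_{n+r,:}=0$ is used. Here I need to argue that it only truncates the band at the matrix corner, which is consistent with the Toeplitz description restricted to a finite matrix. Inside that restricted matrix the offset $j-i$ never exceeds the available range, so the formula stands.
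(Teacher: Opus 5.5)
Your proposal is correct and is essentially the paper's own argument. The paper likewise observes that uniform spacing makes both denominators in \Cref{lemma:basis} equal to $(r-1)h$, so the recurrence becomes left multiplication by $\tfrac{1}{(r-1)h}A^{[1]}$. This yields $A^{[r]}=\tfrac{h^{1-r}}{(r-1)!}(A^{[1]})^{r}$, and the paper reads off the entries from the binomial expansion of this power of an upper bidiagonal Toeplitz matrix, which is exactly what your entrywise Pascal's-rule induction computes.

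Your explicit base case and boundary bookkeeping go beyond what the paper writes out. The boundary point closes cleanly as follows: in the last row $i=n-1$, the needed entries $A^{[r-1]}_{n,j}$ have $j\le n-1<n$, so they vanish by upper triangularity. Hence the zero-row convention agrees with the truncated Toeplitz matrix.
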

\begin{proof}
    See \Cref{app:cob_proof}.
\end{proof}

We now show that the change of basis $A^{[r]}$ \eqref{eq:cob} yields an equivalence between KANs and a certain form of multichannel MLP. Each layer of a KAN architecture in the spline basis $B_S^{[r]}$ has spline coefficients stored as parameters/weights in a 3-tensor $\widetilde{W}^{(\ell)}$ such that
\begin{equation} \label{eq:kan-spline-basis}
\bx_q^{(\ell+1)} = \sum_{p=1}^P \sum_{i=1-r}^{n-1} \widetilde{W}^{(\ell)}_{qpi} \, b_i^{[r]}(\bx_p^{(\ell)}).
\end{equation}
Define $W^{(\ell)} = \widetilde{W}^{(\ell)} \times_3 (A^{[r]})^T$, where $\times_3$ denotes the $3$-mode tensor product\footnote{See \cite{kolda2009tensor} for more details on this operation.}; at the index level, $W^{(\ell)}_{qpj} \coloneqq  \sum_{i=1-r}^{n-1} \widetilde{W}^{(\ell)}_{qpi} A_{ij}^{[r]}$. Substituting the change of basis from \eqref{eq:cob} and definition of $W^{(\ell)}$ into \eqref{eq:kan-spline-basis} yields
\begin{subequations} \label{eq:kan_relu}
\begin{align}   
   \bx_q^{(\ell+1)} & = \sum_{p=1}^P \sum_{i=1-r}^{n-1} \widetilde{W}^{(\ell)}_{qpi} \left( \sum_{j=1-r}^{n-1} A_{ij}^{[r]} \,\relu(\bx^{(\ell)}_p - t_j)^{r-1} \right) \\
   & = \sum_{p=1}^P \sum_{j=1-r}^{n-1} W^{(\ell)}_{qpj} \, \relu(\bx^{(\ell)}_p - t_j)^{r-1}.
\end{align}
\end{subequations}
With a notational shift outlined in \Cref{app:notational_shift}, we observe that \eqref{eq:kan_relu} is an equivalent dense mulitchannel MLP layer with ReLU$^{r-1}$ activations. Altogether, we arrive at the following result.
\begin{lemma}[Equivalence of KANs and multichannel MLPs]
    A single layer of a KAN in the form of \eqref{eq:kan-spline-basis}, with weight three-tensor $\widetilde{W}^{(\ell)}$, 
    is equivalent to a single layer of a multichannel MLP in the form of \eqref{eq:kan_relu} with weight three-tensor $W^{(\ell)} = \widetilde{W}^{(\ell)} \times_3 (A^{[r]})^T$.
\end{lemma}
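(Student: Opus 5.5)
The approach is to substitute the change of basis \eqref{eq:cob} pointwise into the KAN layer \eqref{eq:kan-spline-basis} and then interchange finite sums, exactly as sketched in \eqref{eq:kan_relu}.

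First, I will make \eqref{eq:cob} precise at the level of functions. Since $B_S^{[r]}$ and $B_R^{[r]}$ are both bases of $S_r(T)$, each $b_i^{[r]}$ has a unique expansion
\[
b_i^{[r]}(x) = \sum_{j=1-r}^{n-1} A^{[r]}_{ij}\,\psi_j^{[r]}(x) \quad \text{for all } x\in[a,b].
\]
This identity defines the rows of $A^{[r]}$. Its explicit entries are not needed here, although \Cref{lemma:basis} and \Cref{cor:uni} supply them. The point that requires care is the domain. Both sides are elements of $S_r(T)$ on $[a,b]$, so the identity holds there. The inputs $\bx_p^{(\ell)}$ must therefore lie in $[a,b]$, which is the standard KAN setting where the grid covers the input range. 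I will state this assumption explicitly, since outside $[a,b]$ the truncated powers and the B-splines do not in general agree.

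Next, for fixed $q$ and $p$, I plug this expansion into $\phi_{pq}^{(\ell)}(\bx_p^{(\ell)}) = \sum_i \widetilde{W}^{(\ell)}_{qpi} b_i^{[r]}(\bx_p^{(\ell)})$. I then swap the two finite sums over $i$ and $j$. This collects the coefficient of $\relu(\bx_p^{(\ell)}-t_j)^{r-1}$ as $\sum_i \widetilde{W}^{(\ell)}_{qpi}A^{[r]}_{ij}$, which is by definition the mode-3 product entry $W^{(\ell)}_{qpj}$ of $\widetilde{W}^{(\ell)}\times_3 (A^{[r]})^T$. Summing over $p$ reproduces \eqref{eq:kan_relu}, so the two layers define the same map from inputs to outputs.

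Finally, I will note that the correspondence is bijective. Invertibility of $A^{[r]}$ follows from both families being bases, and uniform knots additionally give the triangular Toeplitz structure of \Cref{cor:uni} with nonzero diagonal. Hence every multichannel MLP layer of this form, with biases fixed at the knots, arises from a unique KAN layer via $\widetilde{W}^{(\ell)} = W^{(\ell)}\times_3 (A^{[r]})^{-T}$. By the notational shift in \Cref{app:notational_shift}, \eqref{eq:kan_relu} is then literally a dense multichannel layer with $\relu^{r-1}$ activations.

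The only real obstacle is the bookkeeping of index conventions. I need to check that the transpose in $\times_3 (A^{[r]})^T$ matches the convention in \eqref{eq:cob}, where the basis functions are columns, and to state the domain restriction correctly. Everything else is linear algebra on finite sums.
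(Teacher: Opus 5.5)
Your proposal is correct and follows the paper's own argument: substitute $b_i^{[r]}=\sum_j A^{[r]}_{ij}\psi_j^{[r]}$ into \eqref{eq:kan-spline-basis}, interchange the finite sums, and read off $W^{(\ell)}_{qpj}=\sum_i \widetilde{W}^{(\ell)}_{qpi}A^{[r]}_{ij}$, exactly as in the derivation \eqref{eq:kan_relu}. Your explicit restriction of the inputs to $[a,b]$ and your invertibility remark are sound additions that the paper leaves implicit; the domain caveat is genuinely needed, since the truncated last rows of $A^{[r]}$ make the identity fail for inputs beyond $b$.
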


Now consider a full network consisting of multiple layers. For ease of notation, and with Section \ref{sec:feature} in mind, let us vectorize weights across all layers, and denote $\bu$ as the full vector of weights in the spline basis and $\bw$ as the vector of weights in the ReLU basis; both $\bu,\,\bw \in \mathbb{R}^{N_w}$. Following the above discussion, with appropriate concatenation and reordering/flattening of $A^{[r]}$ we can define the change of basis matrix $A$ such that
\[
    A^T\bu = \bw.
\]
Note, here the change of basis maps from weights in the spline basis to weights in the ReLU basis, \emph{opposite of the direction of the functional change of basis} \eqref{eq:cob}. 
Assume the vectorization operation on the weights $\bu$ and $\bw$ is ordered with respect to layer, output neuron, input neuron, and then spline knot. With this ordering the resulting $A$ is block diagonal over the first three dimensions (layer, output neuron, and input neuron) with diagonal blocks given by $(A^{[r]})$ of the appropriate dimension over spline knots. 

\subsection{Eigenstructure of $A^{[r]}$}\label{sec:feature:eval}

We now prove the following lemma regarding the structure of $A^{[r]}$ as a discrete approximation of certain differential operators.
\begin{lemma}\label{lem:deriv}
    Up to constant scaling by $\pm(r-1)!/h$ (with sign depending on $r$ being odd or even), $A^{[r]}$ is  a forward finite difference approximation of the $r$th derivative on a 1d uniform grid of mesh spacing $h$, with strongly enforced zero Dirichlet boundary conditions. Up to boundary nodes and constant scaling, $(A^{[r]})^TA^{[r]}$ is also Toeplitz and a finite difference approximation of the $(2r)$th derivative on a 1d uniform grid of mesh spacing $h$.
\end{lemma}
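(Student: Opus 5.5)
Starting from Corollary~\ref{cor:uni}, row $i$ of $A^{[r]}$ has nonzero entries only in columns $j=i,\dots,i+r$. Writing $k=j-i$, the entries are
\[
A^{[r]}_{i,i+k} = \frac{r}{h^{r-1}}\frac{(-1)^k}{k!\,(r-k)!} = \frac{1}{(r-1)!\,h^{r-1}}(-1)^k\binom{r}{k}.
\]
Apply row $i$ to grid values $u_j=u(t_j)$. This gives a scalar multiple of $\sum_{k=0}^r (-1)^k\binom{r}{k} u_{i+k}$. Since $\sum_{k=0}^r(-1)^{r-k}\binom{r}{k}u_{i+k} = \Delta_+^r u_i$, the sum equals $(-1)^r\Delta_+^r u_i$. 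Using $\Delta_+^r u_i/h^r = u^{(r)}(t_i)+O(h)$ by Taylor expansion, we get
\[
(A^{[r]}u)_i = \frac{(-1)^r h}{(r-1)!}\,\frac{\Delta_+^r u_i}{h^r}.
\]
So $A^{[r]}$ equals $\pm h/(r-1)!$ times the standard forward difference approximation of the $r$th derivative, with sign $(-1)^r$. Equivalently, the operator rescaled by $\pm(r-1)!/h$ is exactly that forward difference; I would reconcile this with the statement's phrasing of the scaling constant. Since $A^{[r]}$ is Toeplitz by Corollary~\ref{cor:uni}, the stencil is identical in every row.

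For the boundary condition, note that $A^{[r]}$ is square and upper triangular. In the last $r$ rows the stencil would reach columns beyond the index set. Those entries are simply absent, which is the same as extending $u$ by $u_j=0$ for $j>n-1$. This is a strongly imposed zero Dirichlet condition at the right end, and its natural reading is as $r$ ghost values set to zero. I would state this explicitly, and also note that the convention $A^{[r-1]}_{n+r,:}=0$ in Lemma~\ref{lemma:basis} encodes exactly this truncation.

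For $(A^{[r]})^TA^{[r]}$, write $A^{[r]}=c\,D_+^r$ with $D_+$ the truncated forward difference. Then $(A^{[r]})^T=c\,(D_+^T)^r$, and $D_+^T$ is a truncated backward difference with sign $-1$. Hence, away from boundaries,
\[
(A^{[r]})^TA^{[r]} = c^2(-1)^r D_-^rD_+^r,
\]
whose stencil is the convolution of the two binomial stencils. That convolution is $(-1)^{r}$ times the centered $2r$th difference stencil $(-1)^{k}\binom{2r}{r+k}$, by the Vandermonde identity $\sum_m\binom{r}{m}\binom{r}{m+k}=\binom{2r}{r+k}$. It therefore approximates $h^{2r}u^{(2r)}$, and the Toeplitz property holds on interior rows.

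The main obstacle is the boundary rows. Truncation means $(A^{[r]})^TA^{[r]}$ is only Toeplitz up to modifications in the first $r$ rows and columns, namely those near index $1-r$. There the $A^T$ truncation removes terms from the convolution sum. I would show this explicitly by computing the entry $(i,i')$ as $\sum_{m}A_{m i}A_{m i'}$ over $m$ restricted to $[1-r,\,\min(i,i')]$. The full binomial convolution is recovered exactly when the lower limit is not active, which is the sense of ``up to boundary nodes'' in the statement.
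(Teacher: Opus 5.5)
Your proposal is correct and follows essentially the same route as the paper: read off the row stencil $\frac{1}{(r-1)!\,h^{r-1}}(-1)^k\binom{r}{k}$ from Corollary~\ref{cor:uni}, match it (with sign $(-1)^r$) against the standard forward difference formula, and interpret the truncated final rows as zero Dirichlet data. The one difference is that for $(A^{[r]})^TA^{[r]}$ the paper cites an external Toeplitz lemma for the stencil. Your factorization $A^{[r]}=cD_+^r$ with $D_+^T=-D_-$, together with the Vandermonde identity, instead derives the central $2r$th-difference stencil directly, and it correctly places the non-Toeplitz entries in the top-left $r\times r$ block near index $1-r$.
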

\begin{proof}
Let us consider the stencil $\mathbf{v}^{(r)}$ for spline power $r$ starting from the diagonal and extending to upper triangular indices. Let the diagonal index correspond to zero and $\mathbf{v}^{(r)}_i$ denote the $i$th stencil index. Then 
\begin{align}
    \mathbf{v}^{(r)}_i & \coloneqq \frac{(-1)^{i} }{i!\,(r-i)!} \, \frac{r}{h^{r-1}}
    = \frac{1}{(r-1)!h^{r-1}} \Big[ (-1)^{i} {r \choose i} \Big], \quad i\leq r .
\end{align}
Now we recall the general forward finite difference for $f^{(r)}(x)$ centered at $x$ is given by \cite{sloane1995encyclopedia}
\begin{align}
    f^{(r)}(x) & \approx \frac{1}{h^r} \sum_{i=0}^r (-1)^{r-i} {r \choose i} f(x+i h).
\end{align}
This completes the proof for $A^{[r]}$, where zero Dirichlet boundary conditions correspond to truncated Toeplitz structure in the final matrix rows. For Toeplitz properties of $(A^{[r]})^TA^{[r]}$ and corresponding stencil values see \cite[Lemma 23]{southworth2019necessary}. Consistent with above discussion, the stencil of the inner Toeplitz operator up to constant global scaling takes the form of the general \emph{central} finite difference approximation to the $(2r)$th derivative,
\begin{align}\label{eq:cent-fd}
    f^{(2r)}(x) & \approx \frac{1}{h^{2r}} \sum_{i=0}^{2r} (-1)^{i} {2r \choose i} f(x+(r-i) h).
\end{align}
\end{proof}

\Cref{lem:deriv} proves a discrete differential relationship between the ReLU and spline bases in a KANs architecture. For example, up to boundary nodes, for $r=1$ we have that $(A^{[r]})^TA^{[r]}$ corresponds to the $[1,-2,1]$ stencil for isotropic diffusion in 1d. This immediately gives us strong intuition on the effects of $(A^{[r]})^TA^{[r]}$ as an operator. Let us consider eigenvalues of the $(2r)$th derivative operator $D^{2r} \coloneqq \frac{d^{2r}}{dx}$ and consider the simplified setting of domain $x\in[0,1]$ with periodic boundary conditions $f^{(k)}(0) = f^{(k)}(1)$ for $k\in\{0,...,2r-1\}$. Consider the Fourier basis for eigenvectors
\begin{equation}
    f_i(x) \coloneqq e^{-2\pi i\mathrm{i}x} \quad i\in\mathbb{Z}.
\end{equation}
Differentiating $2r$ times yields
\begin{equation}
    f_i^{(2r)}(x) = D^{2r} e^{-2\pi i\mathrm{i}x}
        = (-2\pi i\mathrm{i})^{2r}e^{-2\pi i\mathrm{i}x} 
        = (-1)^r(2\pi i)^{2r}e^{-2\pi i\mathrm{i}x},
\end{equation}
for $i\in\mathbb{Z}$.
Thus we have eigenvalues of $D^{2r}$ given by $\lambda \in\{(-1)^r(2\pi i)^{2r}\}$ for $i\in \mathbb{Z}$, with magnitude of eigenvalue $\lambda$ directly related to Fourier frequency of corresponding eigenvector $f_i(x) = e^{-2\pi i\mathrm{i}x}$.

Moving to the discrete setting and spline change of basis, $(A^{[r]})^TA^{[r]}$ does not impose periodic boundary conditions and, by nature of the normal-equation form, eigenvalues will be strictly positive rather than alternating sign with $r$ as in the continuous analysis. This is because $(A^{[r]})^TA^{[r]}$ is approximating alternating signs of the $(2r)$th derivative compared with the FD stencil in \eqref{eq:cent-fd}. However, we expect the broad properties to still hold. First, the eigenvalues will span a large range, and (up to constant scaling) roughly take the form $\ell^{2r}$ for eigenvalue index $\ell\in[1,n]$ for $n$ spline knots. Second, the eigenmodes will be correspondingly ordered with respect to smoothness, with the smallest eigenvalues corresponding to the smoothest eigenmodes and the largest eigenvalues corresponding to the most oscillatory eigenmodes on the spline grid, although this eigenbasis will not be a strict Fourier basis due to non-periodic boundary conditions. Each of these properties is demonstrated for $r=1$ and $r=3$ in \Cref{fig:cob-evec}. \Cref{fig:eval-ratio} then shows the difference in scaling of smooth vs oscillatory modes on a spline grid by $(A^{[r]})^TA^{[r]}$ by considering the ratio of smallest to largest eigenvalues. Continuing with the observation that eigenvalues approximately follow $\ell^{2r}$, we expect the ratio to scale like $n^{2r}$, which is confirmed in \Cref{fig:eval-ratio}. Thus for even a small number of $n=10$ spline knots, the most oscillatory modes are scaled more than $100\times$ stronger than the smoothest modes for $r=1$ and $10^6\times$ stronger for $r=3$. To demonstrate the geometry in a simpler manner, we also plot the number of sign changes in the discrete eigenvectors in \Cref{fig:sign-change}. Here we see the expected simple structure, independent of $r$ -- the eigenvector associated with the smallest eigenvalue of $(A^{[r]})^TA^{[r]}$ has no sign changes and the eigenvector associated with the largest eigenvalue has $n-1$. Naturally $(A^{[r]}(A^{[r]})^T)^{-1}$ follows the exact opposite ordering.
\begin{figure}[!bht]
    \begin{subfigure}[t]{\textwidth}
        \centering
        \includegraphics[width=0.9\linewidth]{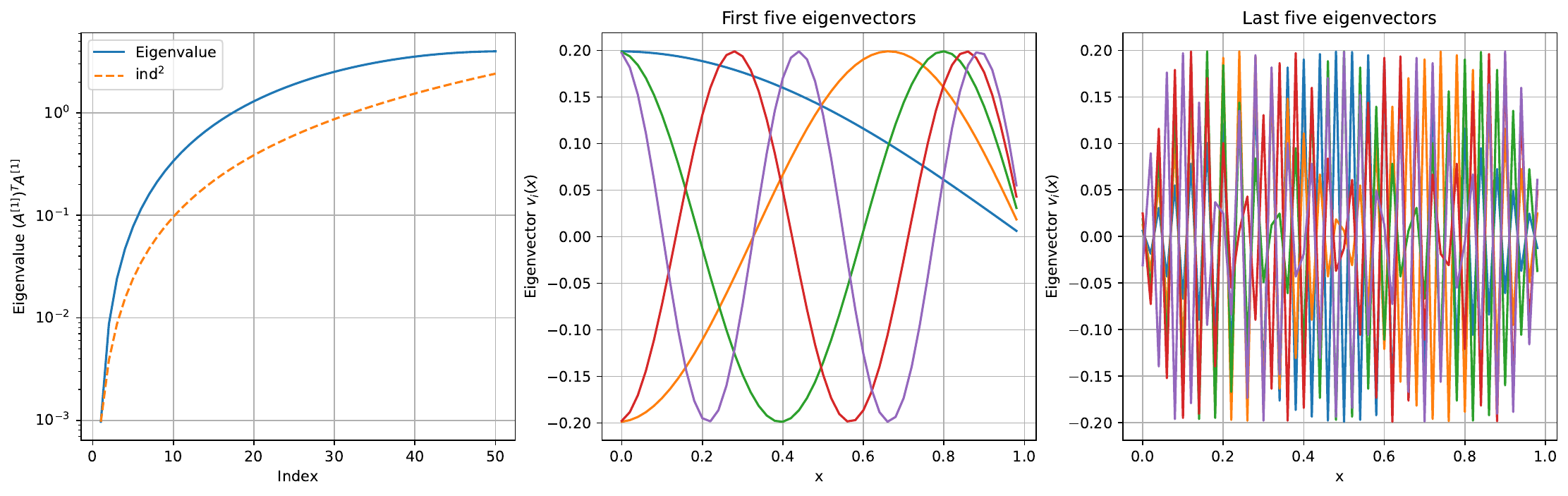}
        \caption{$r=1$}
    \end{subfigure}
    \\
    \begin{subfigure}[t]{\textwidth}
        \centering
        \includegraphics[width=0.9\linewidth]{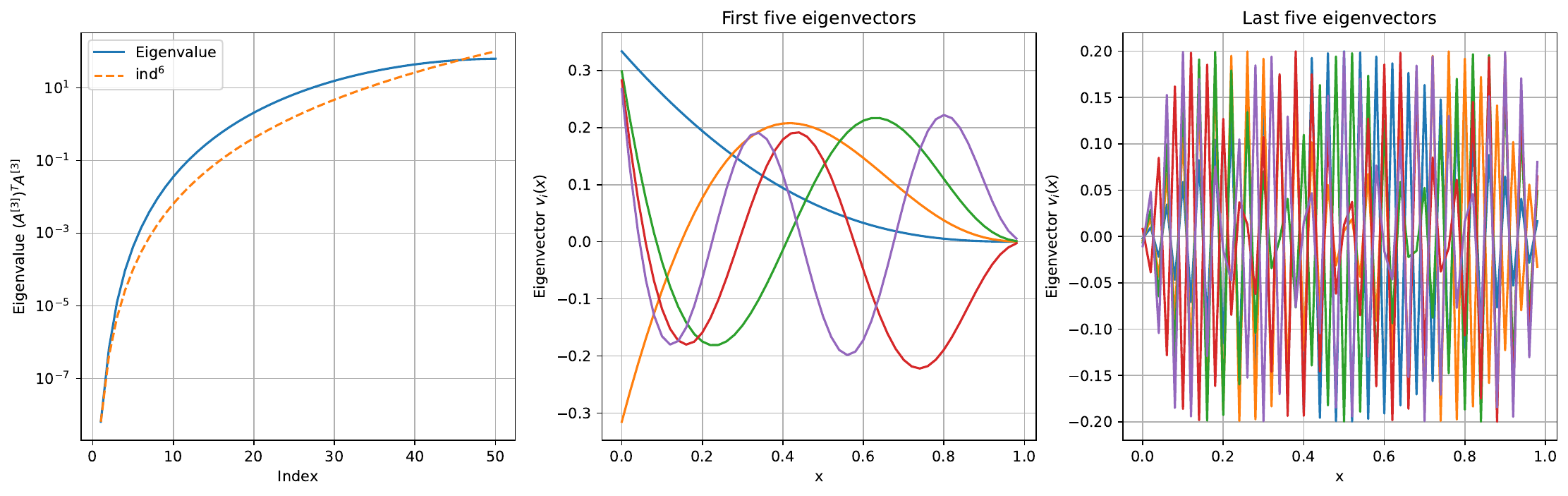}
        \caption{$r=3$}
    \end{subfigure}
    \caption{Eigenvalues of $(A^{[r]})^TA^{[r]}$ ordered smallest to largest (left) and the first (center) and last (right) five corresponding eigenvectors for $n=50$ splines knots and example orders $r\in\{1,3\}$.}
    \label{fig:cob-evec}
    \vspace{-3ex}
\end{figure}

\begin{figure}[!bht]
    \centering
    \begin{subfigure}[b]{0.475\linewidth}
        \includegraphics[width=0.8\linewidth]{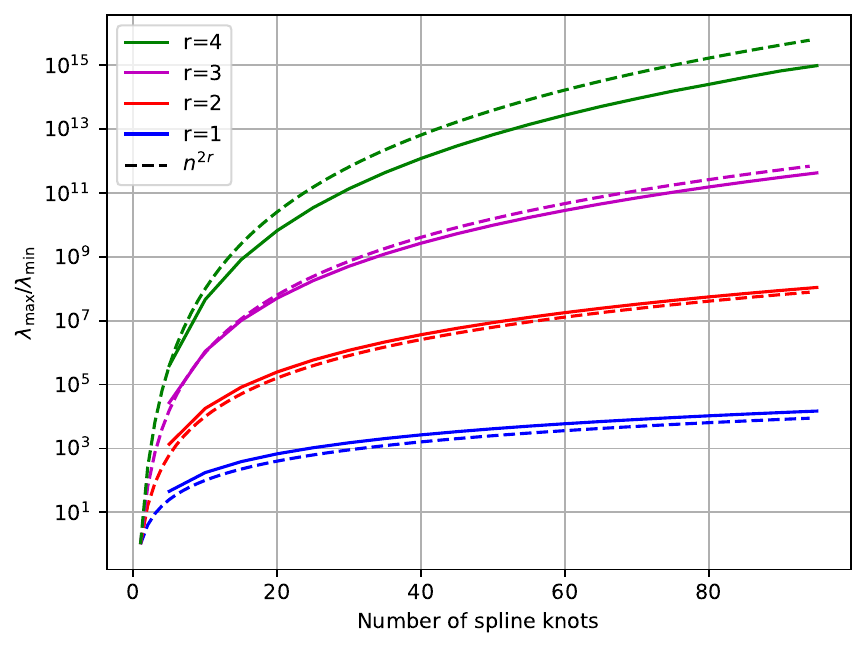}
        \caption{Ratio of largest to smallest eigenvalue of $(A^{[r]})^TA^{[r]}$ as a function of number of spline knots, shown for $r\in[1,4]$. This ratio provides the weighting of corresponding modes in gradient descent based optimization.}
        \label{fig:eval-ratio}
    \end{subfigure}
    \hspace{2ex}
    \begin{subfigure}[b]{0.475\linewidth}
        \includegraphics[width=0.8\linewidth]{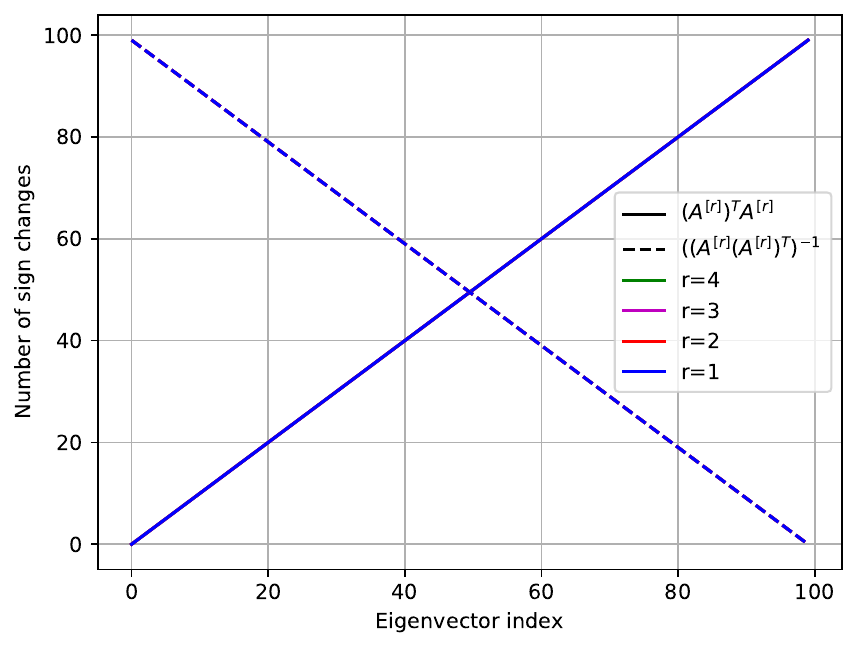}
        \caption{Number of sign changes in discrete eigenvectors as a function of eigenvector index, for $n=100$ knots, $r\in[1,4]$, and eigenvalues ordered in ascending order. Shown for $(A^{[r]})^TA^{[r]}$ and $(A^{[r]}(A^{[r]})^T)^{-1}$.}
        \label{fig:sign-change}
    \end{subfigure}
    \caption{Comparison of eigenvalue and eigenvector properties across spline orders.}
    \label{fig:eigs}
    \vspace{-5ex}
\end{figure}

We will
show in \Cref{sec:feature} that the eigen-structure of these transfer operators has important impacts on the training dynamics.

\subsection{Computational cost}\label{sec:kans-mlp:cost}
While both bases, $B_R^{[r]}$ and $B_S^{[r]}$, are equivalent in terms of their outputs, they differ in their computational cost for evaluating the output of each layer. The Cox-de Boor formula has been observed to be computationally expensive \cite{qiu2024relu}, and this reformulation using $B_R^{[r]}$ provides an impressive speedup to the underlying computational graph. Consider that for a single layer a forward-pass through the standard B-spline basis takes $O(PQ(nr+r^2))$ floating-point operations per layer, the ReLU-based formulation requires only order $O(PQ(n+r))$ operations, resulting in a speedup (in FLOPs) by a factor equal to the spline degree. This removes the need to implement different versions of the spline activations as in \cite{qiu2024relu, so2024higher}. The change-of-basis operation, i.e., multiplication by $A$, requires only $O(nr)$ operations per layer, since $A$ is a banded matrix with $r+1$ nonzero diagonals (see Appendix \ref{app:cob_proof} for explicit construction). Thus for small spline order $r$, and even moderate values of $P$ and $Q$ the cost is negligible compared to the contraction against the learnable weights. On a set of predetermined spline knots, the matrix $A$ can additionally be implemented via a convolution stencil, which is particularly efficient for uniform knots.  

To illustrate these results, we train a simple architecture with three hidden layers of width 64 using direct implementations of \eqref{eq:kan-spline-basis} and \eqref{eq:kan_relu}, where the spline basis functions are implemented via the standard Cox-de Boor recursive form. We measure the wallclock time per epoch for a range of spline orders $r$ and number of knots $n$, repeating each experiment 10 times to smooth out any noise in wallclock measurements; our wallclock measurements include only the forward and backward passes through the network, and exclude time for data loading and evaluation of a validation dataset. Linear algebra operations on modern GPUs take advantage of the hardware architectures which complicates wallclock measurements, but in Figure \ref{fig:speedup}, we still see significant improvement in wallclock time that grows with $r$.
\begin{figure}[!htb]
\centering
\includegraphics[height=0.18\paperheight]{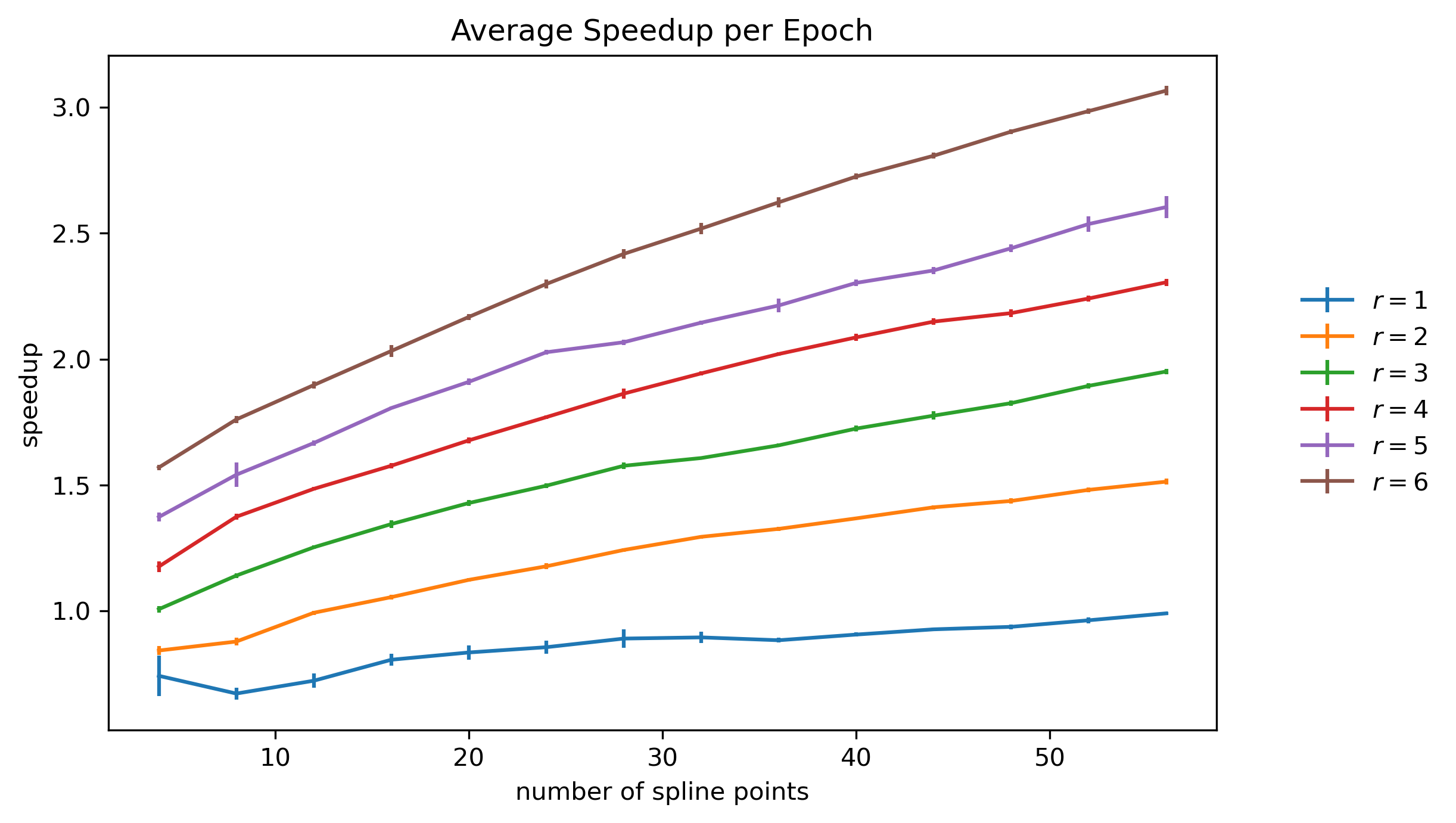}
\caption{Speedup of evaluating a layer, by applying the ReLU$^r$ activation and then the change-of-basis matrix, compared to computing the Cox-de Boor recursive formula for spline functions. Error bars show 1 standard deviation, computed over 10 instances. \label{fig:speedup} 
}
\vspace{-3ex}
\end{figure}

\section{Gradient descent and choice of basis}\label{sec:feature}

Despite an equivalence (as forward operators) under the change-of-basis, gradient-based training of KANs and multichannel MLPs yield fundamentally distinct dynamics during training, which we show by connecting the change-of-basis operations to standard results from preconditioning.

To complement the architectures in \Cref{sec:kans-mlp}, we formalize the notation for our training problem. We consider training of a neural network with input data of dimension $\din\in\mathbb{N}$ and output data of dimension $\dout\in\mathbb{N}$, where we train on a flattened batch of $k$ input-output data pairs $\{\bx,\by\}$ for $\bx\in\mathbb{R}^{k\din}, \by\in\mathbb{R}^{k\dout}$. Let $f(\bx;\bw):\mathbb{R}^{k\din}\times \mathbb{R}^{N_w} \mapsto \mathbb{R}^{k\dout}$ denote the action of the ReLU-based network on batch input data $\bx$ given weights $\bw$, and let $\L:\mathbb{R}^{k\din}\times \mathbb{R}^{k\dout}\times \mathbb{R}^{N_w} \mapsto\mathbb{R}$ be a scalar valued loss on the data for the given weights. The resulting corresponding unconstrained optimization over weights is then given by
    $\min_{\bw\in\mathbb{R}^n} \L(f(\bx;\bw),\by).$
We introduce the change of basis into this optimization problem, and define a nonlinear function $g(\bx;\bu):\mathbb{R}^{k\din}\times \mathbb{R}^{N_w} \mapsto \mathbb{R}^{k\dout}$ such that $g(\bx;\bu)= f(\bx;A^T\bu) = f(\bx;\bw)$, corresponding to a spline-based KAN. Then we can equivalently minimize
\begin{equation}\label{eq:two-loss}
    \min_{\bw\in\mathbb{R}^n} \L(f(\bx;\bw),\by) = 
        \min_{\bu\in\mathbb{R}^n} \L(g(\bx;\bu),\by).
\end{equation}
Let $\nabla_{\bw}\L\in\mathbb{R}^{N_w}$ and $\nabla_{\bu}\L\in\mathbb{R}^{N_w}$ denote the standard $\ell^2$-gradient (easily computed with back propagation) with respect to pairs $\{\bw,f\}$ or $\{\bu,g\}$ respectively. For gradient descent, we are interested in the basis representing our primal and dual spaces. If we work exclusively in the $\bw$ or $\bu$ bases for the primal and dual space, we arrive at standard potentially preconditioned gradient descent iterations:
\begin{subequations}
\begin{align}
    \bw_{k+1} & = \bw_k - \eta D_{\bw}\nabla_{\bw}\L, \label{eq:gd-w}\\
    \bu_{k+1} & = \bu_k - \eta D_{\bu}\nabla_{\bu}\L \label{eq:gd-u},
\end{align}
\end{subequations}
for linear preconditioner $D_{\bw},D_{\bu}$, e.g. from ADAM or LBFGS (or $D=I$ for gradient descent). Now consider a change of basis $\bw \mapsto A^T\bu$ or $\bu\mapsto A^{-T}\bw$. Let $J_f\coloneqq \partial f/\partial \bw \in \mathbb{R}^{k\times N_w}$ and $J_g\coloneqq \partial g/\partial\bu \in \mathbb{R}^{k\times N_w}$ denote the Jacobians of $f$ and $g$ with respect to their natural variables. Since $g(\bx;\bu) = f(\bx;A^T\bu)$ we have
    $J_g = \tfrac{\partial f}{\partial \bw} \tfrac{\partial \bw}{\partial\bu} = J_f A^T.$
Let $\nabla_m\L \in\mathbb{R}^k$ denote the derivative of $\L$ with respect to model output over $k$ data samples. From \eqref{eq:two-loss} we can compute a gradient in $\bw$-space via
\begin{equation}
    \nabla_{\bw}\L \coloneqq \left(\frac{\partial f}{\partial \bw}\right)^T\frac{\partial \L}{\partial f} = J_f^T\nabla_m \L.
\end{equation}
Similarly, we can compute a gradient in the $\bu$-space and arrive at the change of basis in the dual (gradient) space:
\begin{equation}
    \nabla_{\bu}\L \coloneqq \left(\frac{\partial g}{\partial \bu}\right)^T\frac{\partial \L}{\partial g} = J_g^T\nabla_m \L = AJ_f^T\nabla_m\L = A\nabla_{\bw}\L.
\end{equation}
Substituting $\bw \mapsto A^T\bu$ and $\nabla_{\bw}\L\mapsto A^{-1}\nabla_{\bu}\L$ into \eqref{eq:gd-w}, and $\bu\mapsto A^{-T}\bw$ and $\nabla_{\bu}\L\mapsto A\nabla_{\bw}\L$ into \eqref{eq:gd-u} we arrive at the preconditioned gradient descent iterations, respectively,
\begin{subequations}
\begin{align}
    \bu_{k+1} & = \bu_k - \eta A^{-T}D_{\bw}A^{-1} \nabla_{\bu}\L \label{eq:pgd-u}, \\
    \bw_{k+1} & = \bw_k - \eta A^TD_{\bu}A \nabla_{\bw}\L \label{eq:pgd-w},
\end{align}
\end{subequations}
where we have maintained the original preconditioners through the change of basis. This change-of-basis can also be seen as imposing the geometry of the $\bw$ space on the $\bu$ space. A change of basis by $A^T$, $\bu\mapsto A^{-T}\bu$, induces the pullback metric in $U$-space, $\langle \bx,\by\rangle_U \coloneqq \by^TAA^T\bx$, where letting $D_{\bu}=I$, the gradient is now taken with respect to the $(AA^T)$-inner product. Recall the following result \cite{Botsaris.1978}:
\begin{lemma}\label{lem:gradient-inner-product}
Consider $\L:\mathbb{R}^n\mapsto\mathbb{R}$ and let $\nabla \L(\mathbf{x})$ denote the gradient of $L$ in the $\ell^2$-inner product at $\mathbf{x}\in\mathbb{R}^n$. Let $M\in\mathbb{R}^{n\times n}$ be an SPD matrix. Then the gradient of $\L$ with respect to the $M$-induced inner product is given by 
\begin{equation}
    \nabla_M \L(\mathbf{x}) = M^{-1}\nabla \L(\mathbf{x}).
\end{equation}
\end{lemma}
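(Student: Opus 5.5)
The approach is to use the Riesz-representation definition of the gradient: the gradient is the vector that represents the differential in whichever inner product has been chosen. Concretely, for an inner product $\langle\cdot,\cdot\rangle_M$ on $\mathbb{R}^n$ with $\langle \mathbf{a},\mathbf{b}\rangle_M = \mathbf{b}^T M \mathbf{a}$, the $M$-gradient $\nabla_M\L(\mathbf{x})$ is the unique vector $\mathbf{g}$ such that, for every direction $\mathbf{v}\in\mathbb{R}^n$,
\begin{equation*}
    D\L(\mathbf{x})[\mathbf{v}] = \lim_{t\to 0}\frac{\L(\mathbf{x}+t\mathbf{v})-\L(\mathbf{x})}{t} = \langle \mathbf{g},\mathbf{v}\rangle_M .
\end{equation*}
Existence and uniqueness follow because $M$ is SPD. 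The bilinear form $\mathbf{v}^TM\mathbf{g}$ is then a genuine inner product, and the map $\mathbf{g}\mapsto \mathbf{v}\mapsto \mathbf{v}^T M\mathbf{g}$ from $\mathbb{R}^n$ to its dual is injective, since $M$ is nonsingular. Hence this map is bijective.

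The key steps, in order, are as follows.
\begin{enumerate}
\item Express the directional derivative through the $\ell^2$ gradient: $D\L(\mathbf{x})[\mathbf{v}] = \mathbf{v}^T\nabla\L(\mathbf{x})$ for all $\mathbf{v}$.
\item Insert the identity $I = MM^{-1}$, which gives $\mathbf{v}^T\nabla\L(\mathbf{x}) = \mathbf{v}^T M\bigl(M^{-1}\nabla\L(\mathbf{x})\bigr) = \langle M^{-1}\nabla\L(\mathbf{x}),\mathbf{v}\rangle_M$.
\item Conclude by uniqueness of the Riesz representer that $\nabla_M\L(\mathbf{x}) = M^{-1}\nabla\L(\mathbf{x})$. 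Uniqueness holds because if $\langle \mathbf{g}_1-\mathbf{g}_2,\mathbf{v}\rangle_M=0$ for all $\mathbf{v}$, then choosing $\mathbf{v}=\mathbf{g}_1-\mathbf{g}_2$ and using positive definiteness forces $\mathbf{g}_1=\mathbf{g}_2$.
\end{enumerate}
The symmetry of $M$ is what lets us write the inner product in either argument order without introducing $M^T$.

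There is no real obstacle. The only point that needs care is fixing the definition of the gradient in a non-Euclidean inner product, since the result is essentially definitional once that is settled.

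As a cross-check, I would also confirm the characterization of the gradient as the steepest-ascent direction. Maximizing $\mathbf{v}^T\nabla\L$ subject to $\mathbf{v}^TM\mathbf{v}=1$ with a Lagrange multiplier gives $\nabla\L = 2\lambda M\mathbf{v}$, so the optimal $\mathbf{v}$ is proportional to $M^{-1}\nabla\L$, which agrees with the formula.

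Finally, to connect to the surrounding discussion, take $M=AA^T$. The lemma then gives $(AA^T)^{-1}\nabla_{\bu}\L = A^{-T}A^{-1}\nabla_{\bu}\L$, which matches the preconditioner in \eqref{eq:pgd-u} with $D_{\bw}=I$.
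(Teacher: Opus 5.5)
Your proof is correct and complete. Once $\nabla_M\L(\mathbf{x})$ is defined as the Riesz representer of the differential $D\L(\mathbf{x})$ in $\langle\cdot,\cdot\rangle_M$, inserting $MM^{-1}$ and invoking uniqueness (from positive definiteness) gives the formula in one line.

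The paper gives no argument of its own; it only recalls the lemma from \cite{Botsaris.1978}. Your derivation is therefore a self-contained replacement for that citation rather than a competing route. Its main value is that it states explicitly which definition of the gradient the result rests on; with that definition fixed, the statement is indeed essentially definitional.

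Your steepest-ascent cross-check is the characterization closest to how the paper uses the lemma, namely to read \eqref{eq:pgd-u} as descent in the $(AA^T)$ geometry. It determines only the direction $M^{-1}\nabla\L$; the scale comes from the Riesz definition.

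Your matching with $D_{\bw}=I$ is also the right one. The preconditioner appearing in \eqref{eq:pgd-u} is $D_{\bw}$, even though the paper's surrounding text writes $D_{\bu}=I$ there.
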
\
We see that \eqref{eq:pgd-u} for $D_{\bu}=I$ is gradient descent with respect to $\{\bu,g\}$ in the $(AA^T)$-inner product, arising from imposing the geometry of $W$-space on $U$-space. Similarly, for $D_{\bw}=I$ \eqref{eq:pgd-w} is equivalent to gradient descent in $\{\bw,f\}$ with respect to the $(A^TA)^{-1}$ inner product, arising from the metric pullback imposing the geometry of $U$-space on $W$-space. 

For completeness, one can also mix spaces. Suppose we iterate in $\bw$ but consider a gradient with respect to $\bu$. Such iterations would take the form $\bw_{k+1} = \bw_k - \eta D_{\bu}\nabla_{\bu}\L$. Substituting either $\nabla_{\bu}\L \mapsto A\nabla_{\bw}\L$ or $\bw \mapsto A^T\bu$ we arrive at iterations $\bw_{k+1} = \bw_k - \eta D_{\bu}A \nabla_{\bw}\L$ or $\bu_{k+1} = \bu_k - \eta A^{-T}D_{\bu}\nabla_{\bu}\L$.
Similarly we can iterate in $\bu$ but consider a gradient with respect to $\bw$. Such iterations would take the form $\bu_{k+1} = \bu_k - \eta D_{\bw}\nabla_{\bw}\L$. Substituting either $\nabla_{\bw}\L \mapsto A^{-1}\nabla_{\bu}\L$ or $\bu \mapsto A^{-T}\bw$ we arrive at iterations $\bu_{k+1} = \bu_k - \eta D_{\bw}A^{-1}\nabla_{\bu}\L$ or $\bw_{k+1} = \bw_k - \eta A^T D_{\bw}\nabla_{\bw}\L$.
Note that computing gradients in the same basis you are iterating in yields preconditioned gradient descent with SPD preconditioners related to the induced inner product. In contrast, computing a gradient with respect to a different basis then you are iterating in yields potentially nonsymmetric preconditioned descent. These methods correspond to a modified choice of duality pairing, specifying the unique mapping between every dual vector (gradient) and primal vector. The duality pairing is an invertible but not necessarily SPD matrix, in this case specifically given by the leading preconditioning operator in each equation (assuming we don't also consider a modified inner product). 

Altogether we have four distinct iterations, with equivalent realizations in $\bu$ or $\bw$ show in \Cref{tab:iterations}. Note that in the remainder of this paper we will consider iterating in the spline space $\bu$ due to its interpretability, and consider the iterations that arise from considering the $\bu$ geometry and gradient \eqref{eq:gd-u} or the $\bw$ geometry and gradient \eqref{eq:pgd-u}, with the former being the gradient descent that arises naturally in KANs. 
\begin{table}[h!]
\centering
\begin{tabular}{|c|c|c|l|}
\hline
\textbf{Geometry} & \textbf{Gradient} & \textbf{Space} & \textbf{Prec. update} \\
\hline
\multirow{ 2}{*}{$\bw$} & \multirow{ 2}{*}{$\nabla_{\bw}\L $} & $\bw$ & $\bw_{k+1} = \bw_k - \eta D_{\bw}\nabla_{\bw}\L$ \\
&& $\bu$ & $\bu_{k+1} = \bu_k - \eta A^{-T}D_{\bw}A^{-1} \nabla_{\bu}\L$ \\
\hline
\multirow{ 2}{*}{$\bw$} & \multirow{ 2}{*}{$\nabla_{\bu} \L $} & $\bw$ & $\bw_{k+1} = \bw_k - \eta D_{\bu}A \nabla_{\bw}\L$ \\
&& $\bu$ & $\bu_{k+1} = \bu_k - \eta A^{-T}D_{\bu}\nabla_{\bu}\L$ \\
\hline
\multirow{ 2}{*}{$\bu$} & \multirow{ 2}{*}{$\nabla_{\bu} \L $} & $\bw$ & $\bw_{k+1} = \bw_k - \eta A^TD_{\bu}A \nabla_{\bw}\L$ \\
&& $\bu$ & $\bu_{k+1} = \bu_k - \eta D_{\bu} \nabla_{\bu}\L$ \\
\hline
\multirow{ 2}{*}{$\bu$} & \multirow{ 2}{*}{$\nabla_{\bw} \L $} & $\bw$ & $\bw_{k+1} = \bw_k - \eta A^T D_{\bw}\nabla_{\bw}\L$ \\
&& $\bu$ & $\bu_{k+1} = \bu_k - \eta D_{\bw}A^{-1}\nabla_{\bu}\L$ \\
\hline
\end{tabular}
\caption{Preconditioned gradient descent update rules under different parameterizations and gradient representations. We associate the initial preconditioner to be consistent with the gradient, e.g. $\nabla_{\bu}$ has preconditioner $D_{\bu}$.}
\label{tab:iterations}
\vspace{-4ex}
\end{table}

The preconditioning induced by the change of basis coupled with the spectral theory from \Cref{sec:feature:eval} provides a framework for understanding and ensuring \emph{complementary} optimization or ``relaxation'' on each level in the multilevel hierarchy proposed in the next section (see \Cref{sec:multilevel:relax}). 

\begin{remark}
    Given the structure of $A$ and each block $A^{[r]}$ derived in \Cref{sec:feature}, we can bound the spectral radius of the neural tangent kernel (NTK) to show that gradient based optimization routines for a ReLU and spline basis as derived above can be used with comparable learning rates, despite the preconditioning induced by the change of basis. However, the resulting training dynamics will differ significantly; for more on the NTK of these matrices, see the Supplementary Material.
\end{remark}

\section{Multilevel training of KANs}\label{sec:multilevel}

We build upon the change-of-basis and preconditioning results for KANs to develop an efficient multilevel training framework for spline-based KANs, exploiting the additional structure that comes with the spline parameterization.
With the preconditioning results above, we can view multichannel MLPs in a natural spline basis that provides a functional framework for building models with straightforward hierarchy of scales and transfer operators. KANs with spline basis functions and geometric transfer operators provide properly nested hierarchies of different refinement and complementary level-specific optimization. 

\subsection{General multilevel formulation}

 Let $\CP : \mathbb{R}^{N_c}\mapsto\mathbb{R}^{N_f}$ be a linear interpolation and change of basis from coarse weight space of dimension $N_c$ to fine weight space of dimension $N_f$. Optimizing over coarse space $\bu^{(c)}$, we have gradient updates
\begin{equation}
    \bu^{(c)}_{k+1} = \bu^{(c)}_k + \eta \CP^T \nabla \L(\CP\bu^{(c)}_k).
\end{equation}
Suppose $\nabla\L(\bu) = L\bu$ for linear operator $L$. Then this reduces to
\begin{equation}
    \bu^{(c)}_{k+1} = \bu^{(c)}_k + \eta \CP^T L\CP \bu^{(c)}_k,
\end{equation}
which is exactly a Richardson iteration on a Galerkin coarse grid operator $L_c \coloneqq \CP^TL\CP$. If we further include a diagonal scaling in $\CP\bu^{(c)}$ such that $L_c$ has unit diagonal, this results in a Jacobi relaxation iteration applied to the Galerkin coarse grid operator, which is exactly the computational kernel in algebraic multigrid coarse grid correction.\footnote{Petrov-Galerkin transfer operators with restriction $\CR\neq \CP^T$ can be achieved by considering a gradient in a non-standard inner product or duality pairing. We will not consider such cases here.}

Returning to optimization, consider model $g(\bx;\bu):\mathbb{R}^{k\din}\times \mathbb{R}^{N_w} \mapsto \mathbb{R}^{k\dout}$ and loss $\L:\mathbb{R}^{k\din}\times \mathbb{R}^{k\dout}\times \mathbb{R}^{N_w} \mapsto\mathbb{R}$. To ensure good approximation of the fine-level objective by coarse-level updates, a natural approach would be defining a coarse subspace optimization problem via
\begin{equation}\label{eq:coarse-opt}
    \min_{\bu^{(c)}\in\mathbb{R}^{N_c}} \L(g(\bx;\CP\bu^{(c)}),\by).
\end{equation}
This interpretation provides a framework to coarsen in weight space while ensuring good coarse approximation (a key component of multigrid almost all multilevel machine learning methods currently lack). In practice we do not want every level to be as expensive to evaluate as the finest level, as is naively the case in \eqref{eq:coarse-opt} due to the size of the inner model $g$ through which gradients are back-propagated. This does provide a natural target for designing multilevel hierarchies and considering approximation properties though. Consider a two-level hierarchy. Following the definition in \eqref{eq:coarse-opt}, we define a natural relation between coarse and fine architectures and corresponding transfer operators for a hierarchy that is properly nested in terms of functional approximation. 
\begin{definition}[Properly nested hierarchy] \label{def:prop-nested}
Let $g_f(\bx; \bu^{(f)}):\mathbb{R}^{k\din}\times \mathbb{R}^{N_f}\mapsto\mathbb{R}^{\dout}$ denote the action of the fine operator on flattened input vector $\bx\in \mathbb{R}^{k\din}$ with fine-grid weights $\bu^{(f)}\in\mathbb{R}^{N_f}$ and $g_c(\bx; \bu^{(c)}):\mathbb{R}^{k\din}\times \mathbb{R}^{N_c}\mapsto\mathbb{R}^{\dout}$ denote the action of the coarse operator on flattened input vector $\bx\in \mathbb{R}^{k\din}$ with coarse-grid weights $\bu^{(c)}\in\mathbb{R}^{N_c}$. Let $\CP : \mathbb{R}^{N_c}\mapsto\mathbb{R}^{N_f}$ be a linear interpolation operator from coarse to fine weight space. We define $\{g_f,g_c,\CP\}$ as a \emph{properly nested hierarchy} if
\begin{equation}\label{eq:nested}
    g_c(\bx; \bu^{(c)}) = g_f(\bx; \CP\bu^{(c)}),
\end{equation}
that is, a properly nested hierarchy imposes that the fine operator exactly preserves the action of the coarse operator under interpolation of weights.
\end{definition}
This definition guarantees that interpolation of weights does not undo progress made via coarse level optimization, as the fine model and corresponding loss will match that of the coarse model exactly. If we satisfy the above definition, we can construct a change-of-basis hierarchy without having to evaluate a network with fine level cost on every level. In particular, we satisfy the following proposition. 

\begin{proposition}[Subspace optimization] \label{prop:coarse-opt}
Let $g_f,g_c:\mathbb{R}^{\din}\mapsto\mathbb{R}^{\dout}$ and $\CP : \mathbb{R}^{N_c}\mapsto\mathbb{R}^{N_f}$ be as in \Cref{def:prop-nested}, and define fine and coarse loss functions $\L:\mathbb{R}^{k\din}\times \mathbb{R}^{k\dout}\times \mathbb{R}^{N_f} \mapsto\mathbb{R},\L_c:\mathbb{R}^{k\din}\times \mathbb{R}^{k\dout}\times \mathbb{R}^{N_c} \mapsto\mathbb{R}$ that are identical with respect to model output, that is, if $g_c(\bx; \bu^{(c)}) = g_f(\bx; \bu^{(f)})$ then $\L_c(g_c(\bx; \bu^{(c)}),\by) = \L(g_f(\bx; \bu^{(f)}),\by)$. Then
\begin{equation}
    \min_{\bu^{(c)}} \L_c(g_c(\bx;\bu^{(c)}),\by) = \min_{\bu^{(c)}} \L(g_f(\bx;\CP\bu^{(c)}),\by).
\end{equation}
\end{proposition}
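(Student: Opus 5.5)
The plan is to prove a stronger pointwise statement: the two objective functions of $\bu^{(c)}$ agree for every $\bu^{(c)}\in\mathbb{R}^{N_c}$, not just at their minima. Equality of the two minima (or infima) is then immediate, since they are taken over the same feasible set $\mathbb{R}^{N_c}$.

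First fix an arbitrary $\bu^{(c)}$ and set $\bu^{(f)}\coloneqq\CP\bu^{(c)}$. By the properly nested hierarchy condition \eqref{eq:nested} in \Cref{def:prop-nested}, $g_c(\bx;\bu^{(c)}) = g_f(\bx;\CP\bu^{(c)}) = g_f(\bx;\bu^{(f)})$. The hypothesis on the losses says that whenever the coarse and fine model outputs coincide, the coarse and fine losses coincide. Applying it with this choice of $\bu^{(f)}$ gives
\[
\L_c(g_c(\bx;\bu^{(c)}),\by) = \L(g_f(\bx;\CP\bu^{(c)}),\by)
\]
for all $\bu^{(c)}$. So the coarse objective and the pulled-back fine objective are the same function on $\mathbb{R}^{N_c}$, and taking $\min$ (or $\inf$, if a minimizer need not exist) on both sides yields the claim. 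It also follows that the two problems have identical sets of minimizers.

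There is no real obstacle. The only point needing care is interpretive. The statement writes $\min$, so I would remark that the argument shows equality of infima and of minimizer sets, whether or not a minimum is attained. I would also note that the loss hypothesis is really a statement about $\L$ and $\L_c$ as functions of the model output and $\by$, which is the sense in which it is used above.
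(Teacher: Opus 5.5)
Your proof is correct and is the same argument the paper relies on. The paper gives no formal proof: it calls the result simple and reads it as saying that optimizing the coarse loss is the same as optimizing the fine loss over the subspace $\CP\bu^{(c)}$, which is exactly your pointwise identification of the two objectives via \eqref{eq:nested} and the loss hypothesis. Your observation that the argument also gives equal infima and equal minimizer sets, whether or not a minimum is attained, is a harmless sharpening.
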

This is a simple result that formalizes the purpose of the properly nested hierarchy -- optimizing in the coarse space with coarse loss $\L_c$ is equivalent to optimizing the fine loss with weights restricted to a subspace. However, we must be careful with our coarsening to ensure \eqref{eq:nested} is feasible. If we take something like coarsening-in-layer, as considered a number of times in the literature, e.g. \cite{Gunther2020,cyr2025torchbraid,Kopanicakova2022}, such a constraint may not be possible to satisfy. For an arbitrary number of levels, we define our nested multilevel optimization in \Cref{alg:mgopt}. Note, in this paper we do not consider proper multilevel cycling. Nonlinear multigrid and multigrid optimization are notoriously sensitive, and will be a topic for future work. 

\begin{algorithm}[!htb]
\caption{Nested Multilevel Optimization}
\label{alg:mgopt}
\begin{algorithmic}[1]
\REQUIRE Hierarchy $\{g_k\}_{k=1}^K$, transfer operators $\{\CP_{k}^{k-1}\}_{k=2}^K$, training data $\bx$
\ENSURE Solution $\bu_1$ to finest level problem, $\min_{\bu_1} \L(g_1(\bx;\bu_1))$.

\State{Initialize $\bu_K$} \hfill\COMMENT{Initialize weights on coarsest level}
\FOR{$k=K$ to $2$:}
    \STATE Solve $\min_{\bu_k} \L(g_k(\bx; \bu_k))$
    \hfill\COMMENT{Solve level $k$ optimization with initial guess $\bu_k$}
    \STATE $\bu_{k-1} \gets \CP_{k}^{k-1}\bu_\ell$ 
    \hfill\COMMENT{Interpolate solution to finer level as initial guess}
\ENDFOR
\STATE Solve $\min_{\bu_1} \L(g_1(\bx; \bu_1))$
\hfill\COMMENT{Solve optimization on finest level with initial guess $\bu_1$}

\State {\bf return} $\bu_1$
\end{algorithmic}
\end{algorithm}
\subsection{Transfer operators in KANs}\label{sec:multilevel:relax}

Using multiresolution features of splines to build KANs at differing resolutions was introduced in \cite{liu2024kan}. However, the refinement done in \cite{liu2024kan} does not perform geometric refinement; instead, they interpolate between grids of arbitrary sizes. To do so, they solve for each hidden node a least-squares problem of size $N_\text{data} \times (n+r-1)$, which is (unsurprisingly) prohibitively costly. This idea was dismissed by  the authors as being too slow for practical applications.
In contrast, it is substantially more efficient to refine geometrically and then use restriction and prolongation operators from multigrid literature \cite{hollig2003finite, hackbusch2013multi} to perform the grid transfer, thus enabling multilevel training to become much more feasible and cost-effective.

We first make the observation that for two sets of knots $T,\,T'$ where $T \subset T'$, we have $S_r(T') \subset S_r(T)$. Thus there exists an interpolation operator $\CP$ from one grid to another in the case of nested grids.
The splines defined by an arbitrary set of knots $T$ on arbitrary coarse grid with basis $\left\{b_{i,T}^{[r]}\right\}_{i=1-r}^{n-1}$
has a corresponding basis with knots $T'$ on a finer grid given by $\left\{ b_{i,T'}^{[r]} \right\}_{i=1-r}^{m-1}$ with $m>n$.
For a KAN layer with knots $T$, we can therefore define a fine KAN layer with knots $T'$ whose values are identical, since for an interpolation matrix $\CP$, we have
\begin{equation} \begin{split}
    x_q^{(\ell+1)} &= \sum_{p=1}^P \sum_{i=1-r}^{n-1} \widetilde{W}_{qpi,T}^{(\ell)} \, b_{i,T}^{[r]}(x_p^{(\ell)}) 
    =\sum_{p=1}^P \sum_{i=1-r}^{n-1}
    \sum_{j=1-r}^{m-1}  \widetilde{W}_{qpi,T}^{(\ell)} \, \CP_{ij} \, b_{j,T'}^{[r]}(x_p^{(\ell)}) \\
    &\qquad := \sum_{p=1}^P \sum_{j=1-r}^{m-1} \widetilde{W}_{qpj,T'}^{(\ell)} \, b_{j,T'}^{[r]}(x_p^{(\ell)}).
\label{eq:kan-interpolation}
\end{split} \end{equation}
In the general case of arbitrary $T \subset T'$, using the respective change-of-basis matrices $A^{[r]}_T,\,A^{[r]}_{T'}$ defined via Equation~\eqref{eq:A_recurrence}, we can write 
\begin{equation*} 
\CP = A^{[r]}_{T} \, \mathcal{I} \, {A^{[r]}_{T'}}^{-1}, \quad \text{where} \quad \mathcal{I}_{ij} = \begin{cases} 1 &  t_i = t'_j \text{ for } t_i \in T,\, t'_j \in T' \\ 0 & \text{else} \end{cases} .
\end{equation*}
In the case of dyadic refinement on a uniform spline grid, $\CP$ is explicitly given \cite{hollig2003finite} as
\begin{equation}
\CP_{ij} = \begin{cases} 2^{-r} \begin{pmatrix} r \\ s\end{pmatrix} & j=2i+s \\ 0 & \text{else}\end{cases} .
\end{equation}
The refinement process described in Equation~\eqref{eq:kan-interpolation}
produces a new KAN layer that is functionally equal to the previous, but with more trainable parameters. Under this definition, we satisfy \Cref{def:prop-nested}, and can thus define a properly nested hierarchy \eqref{eq:coarse-opt} without having to evaluate a network with fine level cost on every level. This is one of the fundamental features of multilevel KANs, as it means the refinement process does not modify the training progress made on coarse models. 

Thus far, the multilevel method for KANs is not unique to our B-spline approach. The transfer operators and properly nested hierarchy are valid for broad classes of KANs where activation functions are discretized. Any discretization of activation functions with a similar nesting of spaces upon grid (or polynomial) refinement satisfies this same result, which we will state below.

\begin{lemma} \label{lemma:multilevelkans}
Any KAN with activations defined over a basis which is nested under a refinement procedure
can be supplied with interpolation operators that yield a properly nested hierarchy (\Cref{def:prop-nested}).
\end{lemma}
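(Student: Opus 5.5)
We build the interpolation operator layer by layer from the nesting of the activation spaces. Then we check that the resulting block-diagonal operator satisfies \eqref{eq:nested} by the same computation as in \eqref{eq:kan-interpolation}.

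\textbf{Setup.} For every layer $\ell$ and every edge $(p,q)$, the coarse KAN represents $\phi^{(\ell)}_{pq}$ in a coarse basis $\{b_{i}^{c}\}_{i=1}^{n_c}$ spanning a space $V_c$. The fine KAN uses a fine basis $\{b_j^{f}\}_{j=1}^{n_f}$ spanning $V_f$. We read the hypothesis ``nested under refinement'' as $V_c\subseteq V_f$ on the common domain of the activations. An example is $S_r(T)\subseteq S_r(T')$ for $T\subset T'$; we use the inclusion in this direction, which is the one the interpolation requires.

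\textbf{Construction of $\CP$.} Since each $b^c_i\in V_f$ and $\{b^f_j\}$ is a basis, there are unique coefficients $\CP_{ij}$ with $b_i^c=\sum_j \CP_{ij}b_j^f$. This defines a matrix $\CP^{[1]}\in\mathbb{R}^{n_c\times n_f}$, which depends only on the two bases. For the spline case it is the matrix $A_T^{[r]}\mathcal{I}(A_{T'}^{[r]})^{-1}$ above; in general only existence is needed.

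On the full weight vector, ordered by layer, output neuron, input neuron and basis index as in \Cref{sec:kans-mlp:cob}, we set $\CP$ to be block diagonal with one copy of $(\CP^{[1]})^T$ per edge. Equivalently, $\widetilde W_{T'}^{(\ell)}=\widetilde W_T^{(\ell)}\times_3(\CP^{[1]})^T$. If different layers or edges use different bases, we use the corresponding block for each, and each block exists by the same argument. The map $\CP$ is linear, as \Cref{def:prop-nested} requires.

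\textbf{Verification.} Fix an input $\bx$.

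\begin{enumerate}
\item \emph{Single layer.} Exactly as in \eqref{eq:kan-interpolation}, substituting the expansion of $b_i^c$ and swapping the finite sums shows that each fine univariate function equals the coarse one pointwise on the activation domain. Hence each coarse layer map $x^{(\ell)}\mapsto x^{(\ell+1)}$ coincides with the fine layer map using interpolated weights, for every input in the domain.
\item \emph{Whole network.} We induct on $\ell$. Both networks receive the same $x^{(0)}=\bx$. If $x^{(\ell)}$ agrees in the two networks, the single-layer identity gives agreement of $x^{(\ell+1)}$. After the last layer this yields $g_c(\bx;\bu^{(c)})=g_f(\bx;\CP\bu^{(c)})$ for all $\bx$ and $\bu^{(c)}$, which is \eqref{eq:nested}.
\end{enumerate}

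\textbf{Main obstacle.} The delicate point is domains, not algebra. The identity $b_i^c=\sum_j\CP_{ij}b_j^f$ holds as functions only on the domain where nesting is asserted, such as $[a,b]$ for splines, where the fine knot vector must contain the coarse one including the extended boundary knots. The inductive step needs the intermediate values $x^{(\ell)}_p$ to lie in that domain.

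We handle this by stating the hypothesis as equality of functions on the full domain on which activations are evaluated, which also covers any extension outside $[a,b]$ used in practice. The induction then applies verbatim, because the coarse and fine intermediate values are identical at every layer: if the coarse network evaluates an activation at some point, the fine network evaluates its activation at that same point. Any residual technicality is absorbed into making ``nested'' precise as this functional inclusion.
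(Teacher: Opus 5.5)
Your proposal is correct and follows the same route as the paper: nestedness of the activation spaces gives coefficients expressing each coarse basis function in the fine basis, and applying these edgewise as in \eqref{eq:kan-interpolation} preserves every activation exactly. The paper's proof only asserts that this basis-level interpolation operator exists. Your explicit block-diagonal assembly of $\CP$, your induction through the layer composition, and your remark that the identity is only needed where activations are actually evaluated make explicit what the paper leaves implicit; you also use the correct inclusion $S_r(T)\subseteq S_r(T')$, which the paper's text states in the reverse direction.
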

\begin{proof}
Let $\mathcal{B} = \{\phi_i:[a,b]\to\mathbb{R}\}_{i=1}^{n_B}$ be a basis of size $n_B$
for the activation function space (e.g. B-splines, Chebyshev, wavelets)
on a domain $[a,b]$ with $a,b\in\mathbb{R}$, $a<b$,
and let $\mathcal{B}' = \{\phi_i':[a,b]\to\mathbb{R}\}_{i=1}^{n_{B'}}$
be a basis of size $n_{B'}>n_B$ such that $\mathcal{B}' \supset \mathcal{B}$.
Then there is an operator $\CP_{\cal B}^{\cal B'}:\mathcal{B}\to \mathcal{B}'$ which interpolates
basis $\mathcal{B}$ to $\mathcal{B}'$ such that for $\sigma\in \mathcal{B}$
we have $\CP_{\cal B}^{\cal B'}\sigma\in \mathcal{B}'$
and $\sigma(x) \equiv \CP_{\cal B}^{\cal B'}\sigma(x)$.
\end{proof}

Note that this form of network refinement is tied to the discretization of the activation functions; essentially this performs refinement with respect to the channel dimension of the weight tensors, instead of refining the number of hidden nodes at each layer. While there are efforts to refine networks by increasing the number of layers or hidden nodes, the method in Lemma \ref{lemma:multilevelkans} does not change the number of nodes in the network.

\subsection{Complementary relaxation}

Complementary processes to attenuate different error modes is arguably the most fundamental component of a successful multigrid method, and also the most challenging for multilevel ML. Previously we discussed how KANs enable a well-defined nested hierarchy of ML models. In addition, it is critical that the ``relaxation'' or local training on different levels is complementary, in that when you interpolate your model to a finer level, your optimization takes advantage of this new expressivity.

From the results in \Cref{lem:deriv} and \Cref{tab:iterations}, we can impose a strong preference in descent-based optimization of multichannel MLPs/KANs toward learning higher or lower frequency functions with respect to the spline knots based on the geometry of the gradient and objective space. For simplicity consider gradient descent, $D_{\bu}, D_{\bw}=I$. Recall from \eqref{eq:pgd-u}, optimizing in the ReLU geometry and basis imposes a preconditioning in the spline basis. For stability, the learning rate must be chosen to normalize that the largest eigenmode of the preconditioned operator. When the preconditioner $(AA^T)^{-1}$ has a large spectral range, gradients associated with small eigenvalues of $(AA^T)^{-1}$ provide almost no correction to the weights. As discussed in \Cref{sec:feature:eval}, the eigenfunctions of even-order differential operators are ordered from smoothest to most oscillatory in terms of magnitude from smallest to largest, and $(AA^T)^{-1} \sim (\partial_x)^{-r}$ will be opposite. Moreover, the eigenvalues span a range of magnitudes. For $r=1$, we have $(AA^T)^{-1}\sim \Delta^{-1}$, where the smoothest modes are order $\mathcal{O}(1)$ and oscillatory modes on the spline grid are are order $\mathcal{O}(h)^2$ for knot spacing $h$.  Thus gradient based optimization with a basis induced preconditioning $(AA^T)^{-1}$ will weight gradient corrections that are smooth in knot space by orders of magnitude more than oscillatory ones. This is especially true for $r>1$. Thus multichannel MLPs do \emph{not} satisfy a complementary relaxation, as \emph{all} levels will strongly favor optimizing smooth functions in the knot space. This is the opposite of what we want, as after refining our spline knots we have added new expressivity via more complex/less smooth functions. If the optimizer cannot quickly optimize over this new space, the refinement will be a waste. This phenomenon will be demonstrated numerically in \Cref{sec:results}. 

In contrast, in the natural KANs spline basis the free weights being optimized correspond to local basis functions with compact support. As a result, a nonzero gradient at a specific weight results in an update that largely effects a local region centered at the corresponding spline knot within the larger KANs functional composition. To that end, we expect the gradient to transfer directly to these coefficients and naturally support oscillatory functions on the resolution of spline knots, specifically due to the compact support and localization. This is exactly the behavior needed in a multilevel training setting, so that when we perform a uniform refinement of our spline knots, gradient-based optimizers on the refined model immediately start utilizing the new expressivity in reducing the loss. Indeed, this behavior is exactly what we will demonstrate numerically in the following section. 

\section{Numerical results}\label{sec:results}

\subsection{Regression}\label{sec:results:regression}
First we consider function regression of a 0.175 radian counterclockwise ($\approx 10.03^\circ$) coordinate rotation of the nonsmooth function
\begin{equation}
f(x,y) = \cos(4\pi x) + \sin(\pi y) + \sin(2\pi y) + |\sin(3\pi y^2)|,
\label{eq:nonsmooth-regression}
\end{equation}
with a $[2,5,1]$ architecture.
Optimization is performed with geometry and gradient defined by the spline basis ($\bu$ in the context of \Cref{sec:feature}) with descent \eqref{eq:gd-u}, and the ReLU basis ($\bw$ in the context of \Cref{sec:feature}) with descent \eqref{eq:pgd-u}. In both cases, we consider training a coarse model, a fine model, and a multilevel training schedule, as well as a comparable vanilla MLP architecture. We train using L-BFGS, each with equivalent amounts of work (FLOPs) during training. Epochs counts are denoted in a list such as $\{32,16,8,4\}$, meaning 32 epochs are performed on the initial model, and then 16 epochs are performed after transferring to a refined model where the original grid was evenly subdivided once, and so on. The \textit{coarse model} corresponds to $\{128,0,0,0\}$, the \textit{fine model} to $\{0,0,0,16\}$, and the \textit{multilevel model} as $\{32,16,8,4\}$.  Results are shown in \Cref{table:ml-results-regression}
averaged across five random initializations. 

\begin{table}[!ht]
\caption{
Accuracy for regression trained under different bases, architectures, and training regimes. Standard deviations are computed over random initializations ($N=5$).}
\label{table:ml-results-regression}
\vskip 0.15in
\begin{center}
\begin{small}
\begin{sc}
\begin{tabular}{lcllccc}
\toprule
Type & Layers & Basis & Fidelity &\# Param. &  MSE: mean (stdev) \\
\midrule
KAN & $[2,5,1]$ & ReLU & coarse     &  55 & $1.10 \times 10^{-2} \, (8.32 \times 10^{-3})$\\
KAN & $[2,5,1]$ & ReLU & fine       &230 & $1.35 \times 10^{-0} \, (1.50 \times 10^{-1})$\\
KAN & $[2,5,1]$ & ReLU & multilevel &230 & $1.06 \times 10^{-2} \, (8.03 \times 10^{-3})$\\
\midrule
KAN & $[2,5,1]$ & Spline & coarse     & 55 & $1.65 \times 10^{-3} \, (4.18 \times 10^{-5})$\\
KAN & $[2,5,1]$ & Spline & fine       &230 & $2.54 \times 10^{-3} \, (5.64 \times 10^{-3})$\\
KAN & $[2,5,1]$ & Spline & multilevel &230 & $\mathbf{ 3.67 \times 10^{-5}} \, (7.19 \times 10^{-5})  $\\
\midrule
MLP & $[2,5,1]$     & ReLU &&  20 &   $2.94\times 10^{-2} \, (1.60 \times 10^{-2})$\\
MLP & $[2,30,1]$    & ReLU && 120 &   $1.02 \times 10^{-3} \, (9.42 \times 10^{-4})$\\
MLP & $[2,20,20,1]$ & ReLU && 500 &   $3.33 \times 10^{-4} \, (2.92 \times 10^{-4})$ \\
\bottomrule
\end{tabular}
\end{sc}
\end{small}
\end{center}
\vskip -0.1in
\end{table}

First and foremost, we see that training in the spline basis $\bu$ leads to $1-3$ orders of magnitude improvement in accuracy compared with the ReLU basis $\bw$. It is also clear that for fixed number of training epochs, the fine model, which is in principle more expressive, reaches at best a comparable accuracy to the coarse model for spline basis, and two orders of magnitude \emph{worse} for the ReLU basis. Multilevel training in the spline basis significantly accelerates training, obtains orders of magnitude better accuracy than just training on a coarse or fine model, and an order of magnitude better than a larger vanilla MLP. In contrast, multilevel training in the ReLU basis provides effectively no improvement over just the coarse model. Building on theory from \Cref{sec:feature}, this makes perfect sense. When we refine our spline knots and begin gradient-based training of the finer model in a ReLU basis, the associated preconditioning strongly favors geometrically smooth modes by orders of magnitude. However, geometrically smooth modes are largely able to be represented by the coarse model in the first place, which leads to the zero improvement in accuracy when training the refined model in the ReLU basis. This is an important observation in the context of multilevel machine learning, and effectively an extension of multigrid approximation properties to the context of stochastic optimization -- it is critical that training on the fine model is complementary to training on the coarse model.

Convergence histories under the same amount of training work are shown in \Cref{fig:convergence-history} to illustrate the impact of refinement on training history. The points where the multilevel model loss stagnates before abruptly dropping correspond to the epochs where the model is refined. Note that training on the fine models (KAN or MLP) have effectively stalled, with loss decreasing extremely slowly. This emphasizes the value of the multilevel training, where the loss evolution plot demonstrates that standalone fine KAN or MLP models would take significantly more epochs to reach the accuracy obtained by the multilevel training, if at all.
\begin{figure}[!htb]
  \centering
    \centering
    \includegraphics[height=1.45in,clip,trim=0 0 80mm 0]{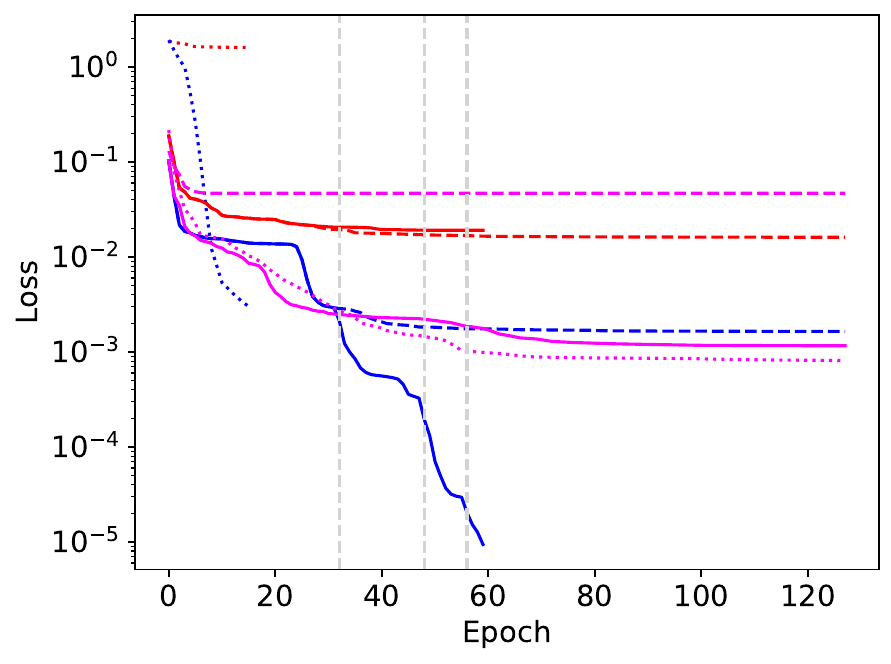}
    \includegraphics[height=1.45in,clip,trim=150mm 0 0 0]{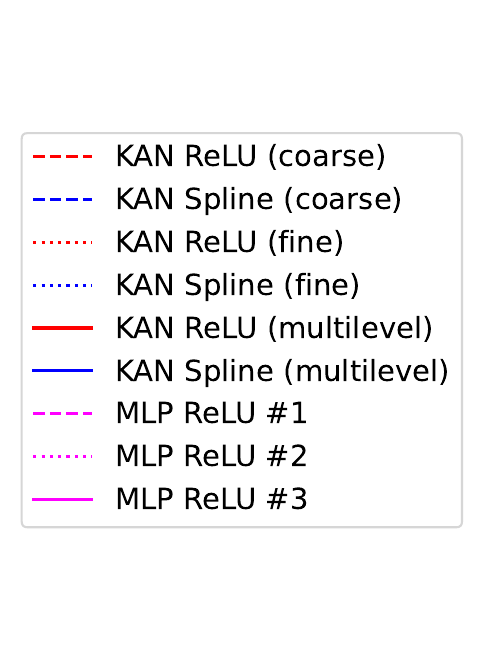}
  \caption{Select convergence history for regression under approximately same amount of work for all different models. Vertical lines indicate refinements for multilevel models.}
  \label{fig:convergence-history}
\end{figure}

\subsection{Physics-informed neural networks}\label{sec:results:pinns}

This section explores the impact of multilevel training physics-informed neural networks (PINNs). PINNs is by now established as a leading neural network approach for approximating the solution of PDEs. 
Training these models can be difficult with a number of tricks required to achieve reasonable solutions efficiently~\cite{wang2023expert,wang2025simulating}. Our goal is to use PINNs to study the performance characteristics of our KANs methodologies compared to traditional architectures. To that end, we limit the use of specialized data modifications and training techniques that yield state-of-the-art PINNs solutions. One \emph{included} enhancement is residual-based attention (RBA), proposed in~\cite{anagnostopoulos2024}, applied to the volumetric loss term. RBA improved convergence without obscuring the interpretation of the results.

\subsubsection{2d Poisson} \label{sec:results:pinns:poisson}
The first PINN considered solves the 2d Poisson equation:
\begin{equation}
\begin{split}
\nabla\cdot\left(\epsilon(\vec{x}) \nabla u(\vec{x})\right) &= f(\vec{x}) \quad \vec{x} \in [-1,1]^2, \\
u(\vec{x}) &= 0 \quad x \in\partial\Omega.
\end{split}
\end{equation}
Letting $\vec{x}=[x,y]$, the coefficient and the manufactured solution are
\begin{equation}
\epsilon(\vec{x}) =
\left\{
\begin{array}{cc}
\epsilon_l = 1 & x < 0 \\
\epsilon_r = \frac12 & x \geq 0
\end{array}
\right., \;\;
u^*(\vec{x}) = \left\{
\begin{array}{cc}
\sin(\pi x)\sin(3\pi y)& x < 0 \\
\sin(2\pi x)\sin(3\pi y) & x \geq 0
\end{array}
\right.
\end{equation}
with the appropriate forcing $f(\vec{x})$. The PINN loss shown in \Cref{fig:2d-poisson-details} (left) is
defined over three sets of equally spaced points; the volume $V$ ($2401$ interior points on the interior), the boundary $B$ ($200$ points on the exterior) and the interface $I$ ($49$ points where $x=0$).
The $\mathcal{L}_I$ terms enforces the correct jump in the gradient at $x=0$. To permit the use of a single neural network approximation (as opposed to separate networks in each subdomain) we augment the network with an additional level set field as proposed in~\cite{tseng2023cusp}.

\begin{figure}[htp]
\noindent
  \begin{minipage}{0.48\textwidth}
    \scriptsize
    \begin{equation*}
    \begin{split}
      \mathcal{L}_V(\theta)
        &= \sum_{\vec x \in V} \bigl|\nabla\!\cdot\!(\epsilon(\vec x)\,\nabla u_N(\vec x))
           \!-\! f(\vec x)\bigr|^2\\
      \mathcal{L}_B(\theta)
        &= \sum_{\vec x \in B} \bigl|u_N(\vec x)\bigr|^2\\
      \mathcal{L}_I(\theta)
        &= \sum_{\vec x \in I} 
           \bigl|(\epsilon_l\!\nabla u_N(\vec x_l)
                 \!-\! \epsilon_r
                 \!\nabla u_N(\vec x_r))\!\cdot\![1,0]\bigr|^2\\
      \mathcal{L}(\theta)
        &= \gamma_V\,\mathcal{L}_V(\theta)
         +  \gamma_B\,\mathcal{L}_B(\theta)
         +  \gamma_I\,\mathcal{L}_I(\theta)
    \end{split}
    \end{equation*}
    \vspace{1mm}
  \end{minipage}
  \hfill
  \begin{minipage}{0.5\textwidth}
    \begin{small}
    \begin{sc}
    \addtolength{\tabcolsep}{-0.4em}
    \begin{tabular}{lclc}
      \toprule
      Type & Layers      & Basis  & \# Param. \\
      \midrule
      KAN  & [2,5,1]   & Spline & 140,\,220,\,380,\,700 \\
      KAN  & [2,5,1]   & ReLU   & 140,\,220,\,380,\,700 \\
      MLP  & [2,16,16,1] & ReLU   & 352                 \\
      \bottomrule
    \end{tabular}
    \addtolength{\tabcolsep}{0.4em}
    \end{sc}
    \end{small}
  \end{minipage}
  \caption{For the 2d Possion problem in Sec.~\ref{sec:results:pinns:poisson} (left) the  volume, boundary and interface components of the PINN loss, and (right) multilevel KAN and MLP architectures and parameter counts. 
  The neural network $u_N$ has an assumed dependence on parameters $\theta$.
  }
  \label{fig:2d-poisson-details}
\end{figure}

\Cref{fig:2d-poisson-details} (right) shows three different networks studied. Both KAN architectures use a multilevel training, with the coarsest level containing $4$ splines. The number of splines doubles at each subsequent finer level. The networks are trained using Adam. Additional hyperparameters including loss term weightings can be found in the Supplemental Material. %
The loss and error as a function of epoch are shown in \Cref{fig:pinns} for KAN with a spline basis (left), KAN with a ReLU basis (middle), and vanilla MLPs (right). The vertical light gray dashed lines indicate an increase in the number of splines as the
multilevel algorithm changes levels.
Both the multilevel KAN with spline basis and the vanilla MLP achieve small loss and small error. Multilevel training with the ReLU basis fails to make progress however, and stagnates slightly below $\mathcal{O}(1)$ relative $\ell^2$-error. Note that as the spline case, and the ReLU case form the same approximation space, the training dynamics alone account for this difference. We hypothesize that the good performance of the spline basis is a result of learning progressively higher frequency modes, as suggested by the theory developed in \Cref{sec:feature}.

We also see the substantial advantages provided by multilevel training. The multilevel KAN with a spline basis achieves smaller relative error faster than a comparable MLP, with significantly less noise in the actual error with respect to epoch. This latter point is particularly important for robustness and generalization when the target solution is \emph{not} known. 
Returning to the multilevel KAN training, we also emphasize the clear stair-casing effect in the multilevel training. For each refinement, particularly the refinements at 1250 and 2500 epochs, the loss and error have largely plateaued before refinement, and immediately after refinement we see rapid decrease in loss and error, demonstrating how the model immediately takes advantages of its increased expressivity. This behavior is indicative of a successful multilevel hierarchy and training methodology, and also something we have not seen anywhere in the literature studying multilevel methods for machine learning. 
\begin{figure}[!htb]
    \centering
    \begin{subfigure}{\textwidth}
        \centering
        \includegraphics[width=\textwidth]{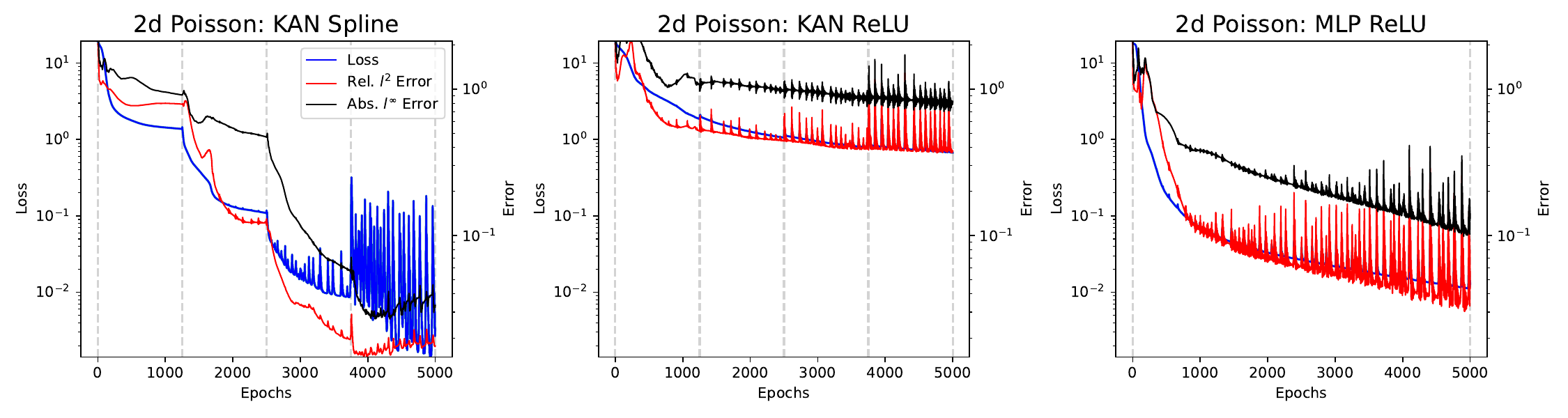}
    \end{subfigure}
    \caption{2d Poisson: Plots of loss and error vs. epoch for multilevel training of KANs under a spline basis (left), multilevel training of KANs under a ReLU basis (center), and training of comparable MLP (right). The KAN Spline and KAN ReLU are identical in terms of approximation power, yet differences in multilevel training performance suggest that the relaxation for KAN Spline complements the refinement strategy.}
    \label{fig:pinns}
\end{figure}

\subsubsection{1d Burger's} \label{sec:results:pinns:burgers}

Next we consider a short example training a PINN for 1d Burger's equation $u_t + u u_x - \nu u_{xx} = 0$ with $\nu = 10^{-2}$ as a physical problem with inherently lower regularity. The solution is represented on space-time collocation points on a $64\times 64$ grid, and the entire batch is used each optimization step.
We use weight-regularized Adam and additionally employ a exponentially-cyclic learning rate scheduler as done in other places, e.g. \cite{wang2023expert}. We run for 10000 steps for the basis results; for refinement, we use $[3200,0,0]$ epochs for our coarse model, $[0,0,800]$ for the fine model (one level coarser than the regression problems), and $[800,400,200]$ for the multilevel model. Given the poor performance of the ReLU basis demonstrated previously, we only show results using the spline basis for this problem. Results are shown in Table \ref{table:results-pinn}. The KAN architectures - even the coarsest ones - outperform comparable MLP architectures in terms of training loss. Multilevel training is where we see significant improvement, obtaining 2--3 orders of magnitude better accuracy than stand alone KANs or MLPs on fine or coarse architectures, for comparable amounts of work.

\begin{table}[!htb]
\caption{Accuracy for PINN problem, comparing the effects of geometric refinement of the spline knots for the KAN basis. Comparable MLPs are included. Standard deviations computed across random initializations ($N=5$).}
\label{table:results-pinn}
\vskip 0.15in
\begin{center}
\begin{small}
\begin{sc}
\begin{tabular}{lclcc}
\toprule
Type & Layers &  Fidelity  & \# Param. & Loss: mean (stdev) \\
\midrule
KAN & $[2,20,20,1]$  & coarse     &  3400 & $6.635 \times 10^{-3} \, (4.060 \times 10^{-3})$\\
KAN & $[2,20,20,1]$  & fine       &  9700 & $4.072 \times 10^{-3} \,(2.514 \times 10^{-3})$\\
KAN & $[2,20,20,1]$  & multilevel &  9700 & $\mathbf{2.402\times 10^{-5}} \,(1.897\times10^{-5})$ \\
\midrule
MLP & $[2,20,20,1]$  & & 500 & $ 2.334 \times 10^{-2} \, (1.740\times 10^{-3})$ \\
MLP & $[2,56,56,1]$  & & 3416 & $ 1.734\times 10^{-2} (7.989\times 10^{-3})$\\
\bottomrule
\end{tabular}
\end{sc}
\end{small}
\end{center}
\end{table}

\subsubsection{Allen-Cahn Equation} \label{sec:results:pinns:ac}

Last, we consider a PINN problem of learning the solution to the Allen-Cahn equation,
\begin{align*}
    \partial_t u(x,t) =& -\epsilon \partial_{xx} u(x,t) + 5 u(x,t)^3 - 5 u(x,t)\quad& \text{ for } (x,t) &\in [-1,1] \times [0,1]    \\
    u(x,0) =& x^2 \cos(\pi x)\quad& \text{ for } (x,t) &\in [-1,1] \times \{0\}.
\end{align*} 
We use this to again demonstrate the power of multilevel training, as well as the spectral shaping in model output that results from the multilevel training. 

This equation has stable attractors as time increases at $u = \pm 1$ and an unstable fixed point at $u = 0$ that, while minimizing the residual of the PDE, yields a solution that is incompatible with the initial condition. As in the previous examples, we use residual-based attention during training, but otherwise do not use any of the specialized modification and training techniques that are prevalent in state-of-the-art solutions, including (but not limited to) time-causality, more advanced optimizers, and adaptive sampling of collocation points. The point here is to emphasize that the multilevel training of KANs for PINNs is fast and robust {out of the box}, without any modifications. We train each model with Adam optimizer with a learning rate of $0.001$ for $2500$ steps at each resolution, using a sigmoid normalization layer for a $[2,5,5,1]$ architecture using cubic b-splines starting at a resolution of three interior spline knots equispaced on the interval $[-1,1]$. The residual during training is evaluated for the loss at $20,000$ collocation points randomly chosen in the domain.%

\begin{figure}[!htb]
    \centering
    \includegraphics[width=0.95\textwidth]{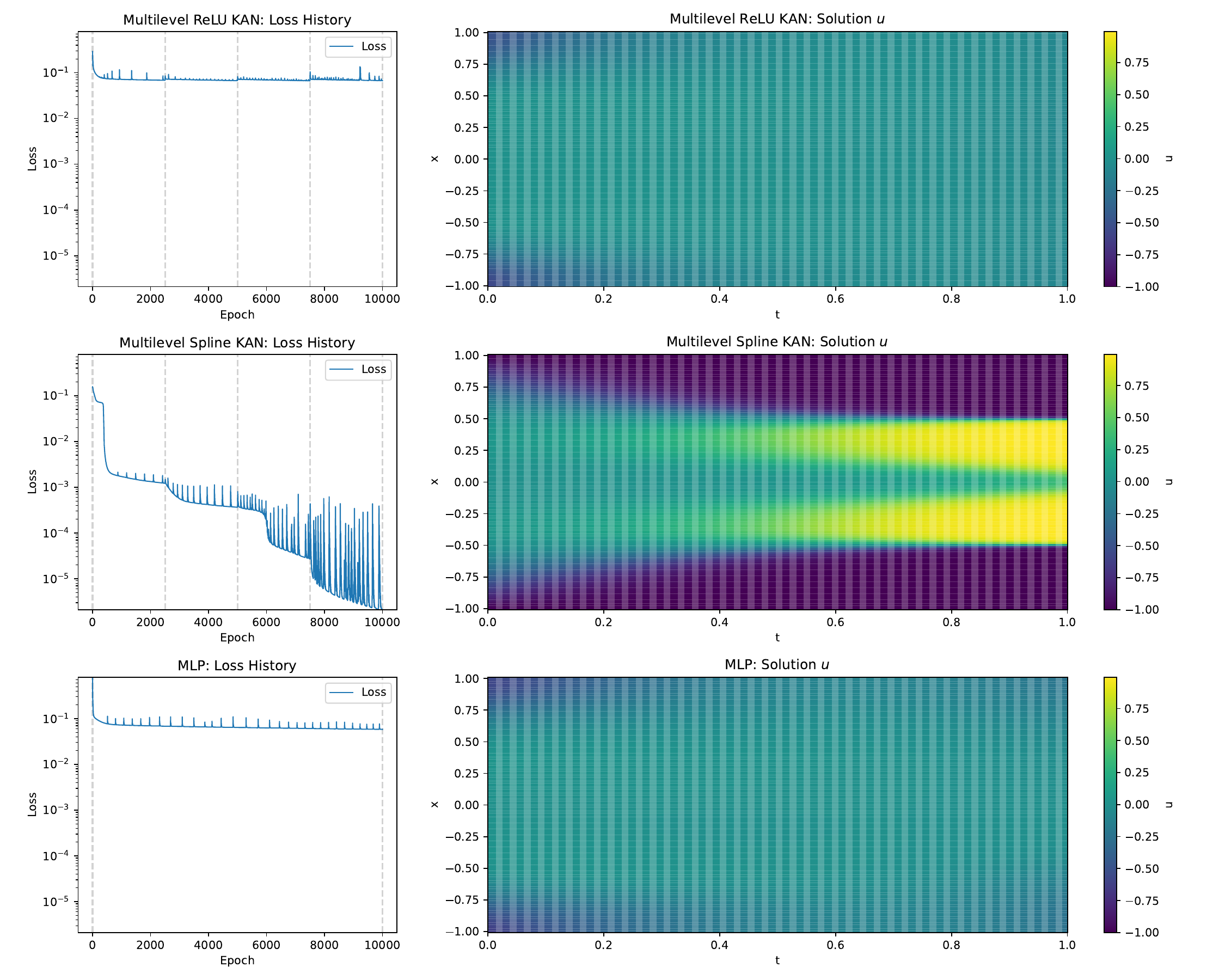}
    \caption{Training histories for multilevel refinement of equivalent KAN architectures in both ReLU and B-spline bases, with a similarly-sized MLP for comparison. \label{fig:pinss:allencahn:history}}
\end{figure}
\begin{figure}[!htb]
    \centering
    \includegraphics[width=0.95\textwidth]{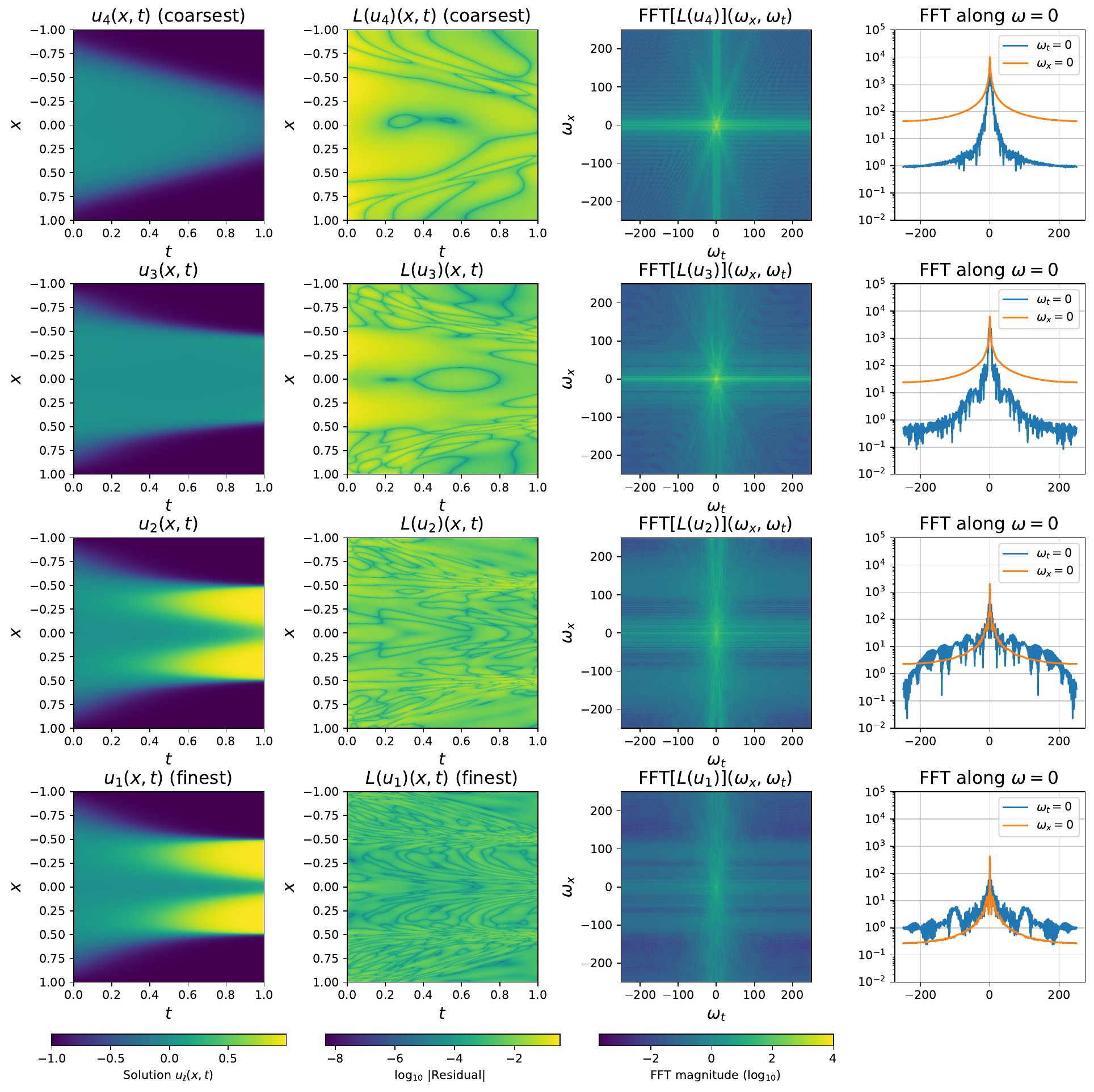}
    \caption{Plots of the trained B-spline-basis solution (leftmost) for the Allen-Cahn Equation in \Cref{sec:results:pinns:ac}, residual in log scale (left center), the Fourier transform of residual (right center), and cross-sections of the Fourier transform along the axes $\omega_t = 0$ and $\omega_x=0$ (rightmost). Each row displays the output after training at each level of refinement, from coarsest (top) to finest (bottom). The first three columns share the same colorbar shown at the bottom of the column.  
    \label{fig:pinns:allencahn:spline}}
\end{figure}

The training loss histories for both bases, and a similarly-sized MLP, are shown in \Cref{fig:pinss:allencahn:history}, along with the obtained physical solution. As previously, we see substantial benefits from multilevel training in the spline basis, including immediate acceleration in training and reduction in loss at each refinement level. In contrast, the ReLU basis is unable to exploit the increased network capacity from refinement, resulting in a very poor solution. Moreover, a conventional MLP of similar size performs similarly to the ReLU KAN, failing to capture any meaningful structure in the solution.

We further examine the residuals and spectra that arise in multilevel training of the spline-based KANs in \Cref{fig:pinns:allencahn:spline} (similar plots for ReLU-based KANs can be found in Supplementary Material). Of note, we clearly see that when using the spline basis, refinement provides not only better accuracy, but also substantial differences in the spectrum of the Fourier modes in the residual: the energy in the Fourier modes decrease with refinement, and the support of the Fourier modes active in the residual widen as we refine the spline representations. These results reinforce the conclusions of our theory from the previous sections, that training hierarchically in the spline basis enables targeted improvements of higher Fourier modes with refinement. In contrast, the ReLU networks exhibit some improvement with increasing spline resolution, but the spectra remains narrow -- as spectral bias implies that multilevel training does not happen in a geometric space in which we can exploit multigrid-style refinement -- and as a result, the accuracy of the model suffers, even though both architectures have the same approximation capabilities and belong to the same class.

\section{Conclusion}

We have presented a theoretical and algorithmic framework for understanding and efficiently training spline-based Kolmogorov-Arnold Networks using multilevel principles. Our main contributions address both fundamental properties of KANs and practical training challenges. From a theoretical perspective, we established that KANs with spline basis functions of order $r$ are equivalent to multichannel MLPs with power ReLU activations through a specific linear change of basis, where the transformation matrix $A^{[r]}$ corresponds to a finite-difference discretization of the $r$th derivative. This equivalence yields immediate algorithmic benefits in a non-recursive implementation of spline-based KANs that is faster than the standard Cox-de Boor implementation. More importantly, our analysis reveals fundamental differences in how gradient-based optimization behaves under different basis representations, despite their equivalence as forward operators. The change of basis matrix $A^{[r]}$ acts as a preconditioner that dramatically affects training dynamics. Specifically, gradient descent in the multichannel MLP (ReLU) basis heavily prioritizes smooth functions over oscillatory ones by orders-of-magnitude scaling. In contrast, the natural spline basis enables strong feature localization through compact support, allowing the network to efficiently learn functions with sharp gradients and low regularity. This theoretical understanding provides rigorous justification for KANs' observed advantages in capturing complex, nonsmooth functions.

Building on these insights, we developed an efficient multilevel training framework based on uniform knot refinement. We introduced the concept of properly nested hierarchies for multilevel optimization, ensuring that a coarse model preserves the functional approximation achieved at finer levels—a critical property that prevents interpolated models from undoing progress. When combined with gradient-based descent methods in the natural spline basis, which automatically provides complementary relaxation across hierarchy levels, this framework achieves remarkable improvements: our numerical experiments consistently demonstrate 2-3 orders of magnitude better accuracy achieved compared to standard training methods or comparable MLPs, while maintaining computational efficiency. In contrast, we also showed that attempting multilevel training with the ReLU basis formulation provides effectively zero improvement over just using the coarse model, confirming that successful multilevel methods require both proper nesting \emph{and} complementary optimization dynamics across levels. This principle has been elusive in multilevel machine learning but is naturally satisfied by KANs in their native spline representation.

More broadly, our multilevel framework offers a principled demonstration that, with appropriate structure and complementary dynamics across levels, multigrid ideas can indeed provide the efficiency gains in neural network training that they have long delivered in scientific computing. Future work will consider multilevel cycling, and extensions to other architectures and types of coarsening and refinement.

\appendix

\section{Notational Shift for MLP Architectures} 
\label{app:notational_shift}

Conventionally, an MLP with $L$ hidden layers, starting with an input $x = x^{(0)}$, has each layer $\ell=0,\dots,L-1$ expressed as
$
x^{(\ell+1)} = \sigma\left( W^{(\ell)} x^{(\ell)} + b^{(\ell)} \right),
$
with the final output  
$NN(x) = x^{(L+1)} = W^{(L)} x^{(\ell)}.$
This compositional structure can be broken down more granularly, as
\begin{equation} 
x^{(\ell + \frac{1}{2})} = W^{(\ell)} x^{(\ell)}, \hspace{4ex}
x^{(\ell + 1)} = \sigma \left( x^{(\ell + \frac{1}{2})} + b^{(\ell)} \right),  
\label{eq:granular} \end{equation}
where then the final layer omits the nonlinear activation step so that $ NN(x) = x^{(L + \frac{1}{2})}$ instead of $NN(x) = x^{(L+1)}.$

The rearranged layers used in our analysis, which post-multiply layers by learnable weights instead of pre-multiplying by learnable weights, simply combine the expressions in \eqref{eq:granular} for $\ell = 1, \dots, L-1$ into a single step,
$
x^{(\ell+\frac{1}{2})} = W^{(\ell)} \sigma\left( x^{(\ell - \frac{1}{2})} + b^{(\ell-1)} \right),
$
yielding the same computation. Notationally, we substitute $t^{(\ell)} = - b^{(\ell)}$, to match the conventions from spline literature. This yields our ultimate expression for each layer, 
\begin{equation}
x^{(\ell+\frac12)} = W^{(\ell)} \sigma\left(x^{(\ell-\frac12)}-t^{(\ell-1)}\right).
\end{equation}

\section{Proofs regarding change of basis matrix $A^{[r]}$}
\label{app:cob_proof}

First, we prove the following result regarding ReLU functions.  
\begin{proposition}\label{lemma:relu_power_recursion}
    For any scalars $a,b$, and for $r > 1$,
    \begin{equation}
        (x - a) \text{ReLU}(x - b)^r = \text{ReLU}(x-b)^{r+1} + (b-a) \text{ReLU}(x - b)^r.
    \end{equation}
\end{proposition}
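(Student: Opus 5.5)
The identity is elementary, and I would prove it pointwise in $x$ by writing $x-a=(x-b)+(b-a)$. Multiplying by $\text{ReLU}(x-b)^r$ gives
\[
(x-a)\,\text{ReLU}(x-b)^r = (x-b)\,\text{ReLU}(x-b)^r + (b-a)\,\text{ReLU}(x-b)^r .
\]
The second term already matches the right-hand side, so it remains to show that $(x-b)\,\text{ReLU}(x-b)^r = \text{ReLU}(x-b)^{r+1}$ for every real $x$.

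I would split this remaining claim into two cases according to the sign of $x-b$.
\begin{itemize}
\item If $x\ge b$, then $\text{ReLU}(x-b)=x-b$, so both sides equal $(x-b)^{r+1}$.
\item If $x<b$, then $\text{ReLU}(x-b)=0$. Since $r>1$ (indeed $r\ge 1$ suffices), $0^r=0$ and $0^{r+1}=0$, so both sides vanish.
\end{itemize}
Together these cases prove the identity.

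The only point needing care is the convention for $\text{ReLU}(\cdot)^0$, which is why $r\ge1$ is required: when $r=0$ the left side is $(x-b)\cdot 0^0$ for $x<b$, and this would be nonzero under the convention $0^0=1$. The hypothesis $r>1$ rules this out, so I expect no real obstacle.

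The identity will then be used in the proof of \Cref{lemma:basis}. There, the factors $(x-t_i)$ and $(t_{i+r}-x)$ in the Cox--de Boor recursion \eqref{eq:spline-basis} multiply the ReLU expansion of $b^{[r-1]}$. Applying the proposition to each term $\text{ReLU}(x-t_j)^{r-2}$ rewrites the products in the $B_R^{[r]}$ basis. The lower-order terms $(t_j-t_i)\text{ReLU}(x-t_j)^{r-2}$ must cancel between the two pieces, because the result lies in $S_r(T)$; that cancellation leaves the recurrence \eqref{eq:A_recurrence}.
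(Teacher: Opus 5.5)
Your proof is correct and is essentially the paper's argument. Both split pointwise on the sign of $x-b$: the case $\text{ReLU}(x-b)=0$ is trivial, and the other case reduces to $x-a=(x-b)+(b-a)$ (the paper factors the right-hand side where you expand the left, which is the same computation). Your remark that $r\ge 1$ suffices is also right and worth keeping, since the proof of \Cref{lemma:basis} applies the identity with exponent $r-2$, which equals $1$ when $r=3$.
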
 
\begin{proof}
    If $x \le b$, then both sides equal zero. Thus, it suffices to consider $x > b$, where then
    $\text{ReLU}(x-b) = x-b.$
    Thus, it suffices to show that 
    $(x-a)(x-b)^r = (x-b)^{r+1} + (b-a)(x-b)^r.$
    Indeed, factoring the righthand side, we see
    \begin{equation} \begin{split}
        (x-b)^{r+1} + (b-a)(x-b)^r &= (x-b)^r \left( (x-b) + (b-a) \right) = (x-b)^r (x-a),
    \end{split} \end{equation}
    which is the desired result.
\end{proof}

We now derive a recursive definition for $A^{[r]}$, as stated in \eqref{eq:A_recurrence}.
\begin{lemma}[\Cref{lemma:basis}]
    For $r > 1$, let $A^{[r-1]}$ denote splines of order $r-1$ constructed on knots for splines of order $r$. Then the matrix $A^{[r]}$ is defined entry-wise as:
    \begin{equation} \label{eq:A_recurrence2}
         A^{[r]}_{ij} = \frac{1}{t_{i+r-1} - t_i} A^{[r-1]}_{ij} - \frac{1}{t_{i+r} - t_{i+1}} A^{[r-1]}_{i+1,j},
    \end{equation}
    where we define $A^{[r-1]}_{n+r,:} = 0$ to account for the special case of the final row $i=n+r-1$. 
\end{lemma}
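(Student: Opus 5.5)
We argue by induction on $r$, substituting the Cox-de Boor recursion \eqref{eq:spline-basis} into the change of basis \eqref{eq:cob} and then using \Cref{lemma:relu_power_recursion} to turn the linear prefactors into higher ReLU powers. The inductive hypothesis is that on the (extended) knot set each order-$(r-1)$ B-spline expands as
\[
b^{[r-1]}_i(x) = \sum_j A^{[r-1]}_{ij}\,\relu(x-t_j)^{r-2},
\]
with $A^{[r-1]}_{n+r,:}=0$ covering the index $i+1$ that falls off the end in the final row. We substitute this expansion for both $b^{[r-1]}_i$ and $b^{[r-1]}_{i+1}$ in \eqref{eq:spline-basis}, which gives
\[
b^{[r]}_i(x)=\sum_j \Big[\tfrac{A^{[r-1]}_{ij}}{t_{i+r-1}-t_i}(x-t_i) - \tfrac{A^{[r-1]}_{i+1,j}}{t_{i+r}-t_{i+1}}(x-t_{i+r})\Big]\relu(x-t_j)^{r-2}.
\]
(At the base case $r=2$, order-one splines are indicator functions. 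We write $b^{[1]}_i = \relu(x-t_i)^0-\relu(x-t_{i+1})^0$ with the convention $\relu(\cdot)^0=$ Heaviside, and the same manipulation applies.)

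Next we apply \Cref{lemma:relu_power_recursion} with $b=t_j$ to each product:
\[
(x-t_i)\relu(x-t_j)^{r-2} = \relu(x-t_j)^{r-1} + (t_j-t_i)\relu(x-t_j)^{r-2},
\]
and similarly with $t_{i+r}$ in place of $t_i$. The terms of power $r-1$ already carry exactly the coefficient claimed in \eqref{eq:A_recurrence2}. So the proof reduces to showing that the leftover lower-power terms cancel:
\[
\sum_j \Big[\tfrac{A^{[r-1]}_{ij}(t_j-t_i)}{t_{i+r-1}-t_i} - \tfrac{A^{[r-1]}_{i+1,j}(t_j-t_{i+r})}{t_{i+r}-t_{i+1}}\Big]\relu(x-t_j)^{r-2}=0 .
\]

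This cancellation is the main obstacle. The cleanest route is to avoid coefficient bookkeeping and argue at the level of functions. The leftover sum is a piecewise polynomial of degree at most $r-2$ lying in the span of the $\relu(x-t_j)^{r-2}$. It also equals $b^{[r]}_i$ minus a function in $S_r(T)$, since the power-$(r-1)$ part is a combination of the $\psi^{[r]}_j$. A function of degree $\le r-2$ on each interval that is simultaneously of the form [order-$r$ spline] minus [combination of $\relu^{r-1}$] must therefore have its truncated powers of degree $r-2$ combining into a function that is globally $C^{r-2}$ at every knot. The jump of its $(r-2)$nd derivative at $t_j$ is proportional to its $\relu(x-t_j)^{r-2}$ coefficient, so every such coefficient must vanish, given independence of truncated powers.

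If this smoothness argument proves too delicate, the fallback is to verify the cancellation directly. Restrict to $x$ beyond the support, say $x>t_{i+r}$, where $b^{[r]}_i=0$ and both order-$(r-1)$ expansions vanish identically. The polynomial identities $\sum_j A^{[r-1]}_{ij}(x-t_j)^{r-2}\equiv 0$ then give the moment conditions $\sum_j A^{[r-1]}_{ij}t_j^k=0$ for $k\le r-2$. Comparing the $\relu^{r-2}$ coefficient knot by knot, or equivalently matching jumps in the $(r-2)$nd derivative of \eqref{eq:spline-basis} at each $t_j$, then yields the cancellation. Finally, the last row is checked separately using the convention $A^{[r-1]}_{n+r,:}=0$.
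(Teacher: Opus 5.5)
Your proposal follows the paper's proof essentially step for step: induction on $r$, substitution of the Cox--de Boor recursion \eqref{eq:spline-basis}, \Cref{lemma:relu_power_recursion} to split off the $\relu(x-t_j)^{r-1}$ terms, and cancellation of the leftover $\relu(x-t_j)^{r-2}$ coefficients because $b^{[r]}_i$ and $\relu(\cdot-t_j)^{r-1}$ are $C^{r-2}$ while $\relu(\cdot-t_j)^{r-2}$ jumps in its $(r-2)$nd derivative at $t_j$. The differences from the paper are minor: you absorb $r=2$ into the inductive step via Heaviside functions instead of citing the known $A^{[2]}$, and your moment-condition fallback is not needed. One point to tighten: run the jump argument on all of $\mathbb{R}$ with the extended knots, as your inductive hypothesis suggests, rather than on $[a,b]$ via $S_r(T)$, since on $[a,b]$ the truncated powers at knots $t_j\le a$ are plain polynomials with no jump. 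The paper's own proof also glosses over this point.
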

\begin{proof}
It is easily verified that 
\begin{equation} \label{eq:A_1}
A^{[1]}_{ij} = \begin{cases} 1 & j = i \\ -1 & j = i+1 \end{cases}.
\end{equation}
We now prove the main statement by induction. For the base case of $r=2$, we have the known (e.g. \cite{hong2022activation}) result that
\begin{equation} \label{eq:A_2}
    A^{[2]}_{ij} = \begin{cases} \frac{1}{t_{i+1} - t_i} & j = i \\
    -\left(\frac{1}{t_{i+1} - t_i} + \frac{1}{t_{i+2}-t_{i+1}} \right) & j = i+1 \\
    \frac{1}{t_{i+2}-t_{i+1}} & j = i+2 \\
    0 & \text{else}
    \end{cases}.
\end{equation}
Directly from \eqref{eq:A_1} and \eqref{eq:A_2}, we see that $A^{[2]}$ satisfies Equation \eqref{eq:A_recurrence2} for $r=2$.
We proceed now to the inductive step for $r \ge 2$. Suppose the result holds through $s=1,\dots,r-1$. By the recurrence relation used to define $b^{[r]}_i$, we have
\begin{align*}
    b^{[r]}_i(x) &= \frac{x - t_i}{t_{i+r-1} - t_i} b^{[r-1]}_i(x) + \frac{t_{i+r} - x}{t_{i+r} - t_{i+1}} b^{[r-1]}_{i+1}(x) \\
    &= \left( \frac{1}{t_{i+r-1} - t_i} \right) \sum_{j} A^{[r-1]}_{ij} \left( x- t_i \right)  \text{ReLU}(x - t_j)^{r-2} - \\
    &\qquad\qquad \left(\frac{1}{t_{i+r} - t_{i+1}} \right) \sum_{j} A^{[r-1]}_{i+1,j} \left( x - t_{i+r} \right) \text{ReLU}(x - t_j)^{r-2}.
\end{align*}
By Lemma \ref{lemma:relu_power_recursion}, 
\begin{align*}
    b^{[r]}_i(x) 
    &= \left( \frac{1}{t_{i+r-1} - t_i} \right) \sum_{j} A^{[r-1]}_{ij} \left( x- t_i \right)  \text{ReLU}(x - t_j)^{r-2} - \\
    &\qquad\qquad \left(\frac{1}{t_{i+r} - t_{i+1}} \right) \sum_{j} A^{[r-1]}_{i+1,j} \left( x - t_{i+r} \right) \text{ReLU}(x - t_j)^{r-2} \\
    &= \left( \frac{1}{t_{i+r-1} - t_i} \right) \sum_{j} A^{[r-1]}_{ij} \left( \text{ReLU}(x - t_j)^{r-1} + (t_j - t_i) \text{ReLU}(x - t_j)^{r-2} \right) - \\
    & \left(\frac{1}{t_{i+r} - t_{i+1}} \right) \sum_{j} A^{[r-1]}_{i+1,j} \left( \text{ReLU}(x - t_j)^{r-1} + (t_j - t_{i+r}) \text{ReLU}(x - t_j)^{r-2} \right).
\end{align*}
Since $b_i^{[r]} \in C^{r-2}([a,b])$ and $\text{ReLU}(\, \cdot - t_j)^{r-1} \in C^{r-2}([a,b])$ but $\text{ReLU}(\, \cdot - t_j)^{r-2} \notin C^{r-2}([a,b])$,
the coefficients of the $\text{ReLU}(\cdot - t_j)^{r-2}$ terms must cancel. This is specifically because expanding the $\text{ReLU}(\, \cdot - t_j)^{r-2}$ terms as piecewise polynomials, none of the expanded terms are sufficiently differentiable at the knot, and thus all terms must cancel for the full summation to be in $C^{r-2}([a,b])$. This can also be shown directly through laborious arithmetic calculation. Therefore,
\begin{align*}
    b^{[r]}_i(x) 
    &= \left( \frac{1}{t_{i+r-1} - t_i} \right) \sum_{j} A^{[r-1]}_{ij} \text{ReLU}(x - t_j)^{r-1}  - \\&\quad\quad \left(\frac{1}{t_{i+r} - t_{i+1}} \right) \sum_{j} A^{[r-1]}_{i+1,j} \text{ReLU}(x - t_j)^{r-1} \\
    &= \sum_{j} \left( \frac{1}{t_{i+r-1} - t_i}   A^{[r-1]}_{ij}  - \frac{1}{t_{i+r} - t_{i+1}}  A^{[r-1]}_{i+1,j} \right) \text{ReLU}(x - t_j)^{r-1},
\end{align*}
which completes the proof.
\end{proof}

We now derive a closed-form for $A^{[r]}$ for uniform knots. For $r=2$, this is the matrix in \cite{hong2022activation}, $A = I - 2S + S^2$, for shift operator $S\mathbf{e}_j = \mathbf{e}_{j+1}$.
\begin{corollary*}[\Cref{cor:uni}]
In the case of uniform knots, with spacing $h = t_i - t_{i-1} = \frac{1}{n}$, $A^{[r]}$ takes the direct form
\begin{equation}\label{eq:uni-Ar}
A^{[r]} = \frac{h^{1-r}}{(r-1)!} (A^{[1]})^r.
\end{equation}
Moreover, $A^{[r]}$ is upper triangular Toeplitz, with 
entries
    \begin{equation}\label{eq:uni-Ar_ij}
        A^{[r]}_{ij} = \begin{cases}
            \frac{(-1)^{j-i} }{(j-i)!\,(r-j+i)!} \, \frac{r}{h^{r-1}}  & i \le j \le i+r \\
            0 & \text{else}
        \end{cases}.
    \end{equation}
\end{corollary*}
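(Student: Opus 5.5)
The plan is to specialize the recurrence of \Cref{lemma:basis} to uniform knots, read it as a matrix product, and then expand a binomial. With uniform spacing $h$, both denominators in \eqref{eq:A_recurrence2} are equal: $t_{i+r-1}-t_i = t_{i+r}-t_{i+1} = (r-1)h$. The recurrence therefore becomes
\begin{equation*}
A^{[r]}_{ij} = \frac{1}{(r-1)h}\left(A^{[r-1]}_{ij} - A^{[r-1]}_{i+1,j}\right).
\end{equation*}
By \eqref{eq:A_1}, $A^{[1]}$ is the upper bidiagonal matrix with $1$ on the diagonal and $-1$ on the superdiagonal. Row $i$ of $A^{[1]}M$ is exactly (row $i$ of $M$) minus (row $i+1$ of $M$). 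The convention $A^{[r-1]}_{n+r,:}=0$ is precisely what the finite product gives in the last row, since the last row of $A^{[1]}$ has only its diagonal entry. Hence, on the appropriately sized index set,
\begin{equation*}
A^{[r]} = \frac{1}{(r-1)h}\, A^{[1]} A^{[r-1]}.
\end{equation*}

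Next I would induct on $r$, starting from the trivial case $r=1$, where the prefactor is $h^{0}/0! = 1$. Assuming $A^{[r-1]} = \frac{h^{2-r}}{(r-2)!}(A^{[1]})^{r-1}$, the identity above gives
\begin{equation*}
A^{[r]} = \frac{1}{(r-1)h}\cdot\frac{h^{2-r}}{(r-2)!}(A^{[1]})^{r} = \frac{h^{1-r}}{(r-1)!}(A^{[1]})^r,
\end{equation*}
which is \eqref{eq:uni-Ar}. As a sanity check, the case $r=2$ reproduces \eqref{eq:A_2} with equal spacings, namely $h^{-1}(I-2N+N^2)$.

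For the entrywise formula, write $A^{[1]} = I - N$, where $N$ is the finite nilpotent upper shift ($N_{i,i+1}=1$). Since $I$ and $N$ commute, the binomial theorem gives $(A^{[1]})^r = \sum_{k=0}^{r}(-1)^k\binom{r}{k}N^k$. The matrix $N^k$ has ones exactly on the $k$th superdiagonal (and vanishes once $k$ reaches the matrix size). So $A^{[r]}$ is upper triangular Toeplitz, supported on $i\le j\le i+r$, with
\begin{equation*}
A^{[r]}_{ij} = \frac{h^{1-r}}{(r-1)!}(-1)^{j-i}\frac{r!}{(j-i)!\,(r-j+i)!} = \frac{(-1)^{j-i}}{(j-i)!\,(r-j+i)!}\,\frac{r}{h^{r-1}},
\end{equation*}
which is \eqref{eq:uni-Ar_ij}.

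The main obstacle I expect is bookkeeping rather than algebra. \Cref{lemma:basis} builds $A^{[r-1]}$ on the knot set for order $r$, so the matrices in the recursion must be indexed consistently. I must also check that the truncated finite matrices (the ``zero Dirichlet'' last rows) multiply exactly as the bi-infinite Toeplitz operators would. I would handle this by noting that finite upper triangular Toeplitz matrices of a fixed size form a commutative algebra closed under products, so no boundary corrections appear.
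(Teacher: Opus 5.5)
Your proposal is correct and follows essentially the same route as the paper: you specialize the recurrence of \Cref{lemma:basis} to uniform spacing so that it becomes left multiplication by $\tfrac{1}{(r-1)h}A^{[1]}$, iterate to get $A^{[r]} = \tfrac{h^{1-r}}{(r-1)!}(A^{[1]})^r$, and then read off the upper triangular Toeplitz structure. The only differences are that you compute the entries explicitly from the binomial expansion of $(I-N)^r$, where the paper cites a lemma on powers of bidiagonal Toeplitz matrices, and that you address the last-row truncation, which the paper's proof passes over.
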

\begin{proof}
    This follows by noting that if $t_{i+r-1}-t_i = t_{i+r}-t_{i+1} = h(r-1)$, \eqref{eq:A_recurrence} corresponds to a left scaling of $A^{[r-1]}$ by $\tfrac{1}{h(r-1)}A^{[1]}$ \eqref{eq:A_1}. Repeating this for $r=2,3,...$ yields \eqref{eq:uni-Ar}. For the Toeplitz property, from \eqref{eq:uni-Ar} we see that $A^{[r]}$ is a power of $A^{[1]}$. Since $A^{[1]}$ is an upper bidiagonal Toeplitz matrix, it follows that $A^{[r]}$ is also upper triangular Toeplitz (see, e.g. \cite[Lemma 22]{southworth2019necessary}), with coefficients given in \eqref{eq:uni-Ar_ij}.
\end{proof}

\subsection*{Acknowledgements}
This work was funded in part by the National Nuclear Security Administration Interlab Laboratory Directed Research and Development program under project number 20250861ER. Los Alamos National Laboratory report number LA-UR-26-21552. 
This article has been authored by employees of National Technology \& Engineering Solutions of Sandia, LLC under Contract No. DE-NA0003525 with the U.S. Department of Energy (DOE). The employees own all right, title and interest in and to the article and are solely responsible for its contents. The United States Government retains and the publisher, by accepting the article for publication, acknowledges that the United States Government retains a non-exclusive, paid-up, irrevocable, world-wide license to publish or reproduce the published form of this article or allow others to do so, for United States Government purposes. The DOE will provide public access to these results of federally sponsored research in accordance with the DOE Public Access Plan \url{https://www.energy.gov/downloads/doe-public-access-plan}
BSS would like to thank Achi Brandt and Oren Livne for many early and engaging discussions on multilevel methods for machine learning. 

\bibliographystyle{siamplain}
\bibliography{refs}

\begin{thebibliography}{10}

\bibitem{abueidda2024deepokan}
{\sc D.~W. Abueidda, P.~Pantidis, and M.~E. Mobasher}, {\em Deepokan: Deep
  operator network based on kolmogorov arnold networks for mechanics problems},
  arXiv preprint arXiv:2405.19143,  (2024).

\bibitem{actor2018computation}
{\sc J.~Actor}, {\em Computation for the kolmogorov superposition theorem},
  master's thesis, Rice University, 2018.

\bibitem{anagnostopoulos2024}
{\sc S.~J. Anagnostopoulos, J.~D. Toscano, N.~Stergiopulos, and G.~E.
  Karniadakis}, {\em Residual-based attention in physics-informed neural
  networks}, Computer Methods in Applied Mechanics and Engineering, 421 (2024),
  p.~116805, \url{https://doi.org/https://doi.org/10.1016/j.cma.2024.116805},
  \url{https://www.sciencedirect.com/science/article/pii/S0045782524000616}.

\bibitem{Botsaris.1978}
{\sc C.~Botsaris}, {\em Differential gradient methods}, Journal of Mathematical
  Analysis and Applications, 63 (1978), pp.~177--198,
  \url{https://doi.org/10.1016/0022-247x(78)90114-2}.

\bibitem{braun2009constructive}
{\sc J.~Braun and M.~Griebel}, {\em On a constructive proof of kolmogorov’s
  superposition theorem}, Constructive approximation, 30 (2009), pp.~653--675.

\bibitem{bresson2024kagnns}
{\sc R.~Bresson, G.~Nikolentzos, G.~Panagopoulos, M.~Chatzianastasis, J.~Pang,
  and M.~Vazirgiannis}, {\em Kagnns: Kolmogorov-arnold networks meet graph
  learning}, arXiv preprint arXiv:2406.18380,  (2024).

\bibitem{cacciatore2024preliminary}
{\sc A.~Cacciatore, V.~Morelli, F.~Paganica, E.~Frontoni, L.~Migliorelli, and
  D.~Berardini}, {\em A preliminary study on continual learning in computer
  vision using kolmogorov-arnold networks}, arXiv preprint arXiv:2409.13550,
  (2024).

\bibitem{cang2024can}
{\sc Y.~Cang, L.~Shi, et~al.}, {\em Can kan work? exploring the potential of
  kolmogorov-arnold networks in computer vision}, arXiv preprint
  arXiv:2411.06727,  (2024).

\bibitem{Chang2018}
{\sc B.~Chang, L.~Meng, E.~Haber, F.~Tung, and D.~Begert}, {\em Multi-level
  residual networks from dynamical systems view}, in International Conference
  on Learning Representations, 2018.

\bibitem{chen2018neural}
{\sc R.~T. Chen, Y.~Rubanova, J.~Bettencourt, and D.~K. Duvenaud}, {\em Neural
  ordinary differential equations}, Advances in neural information processing
  systems, 31 (2018).

\bibitem{cheon2024demonstrating}
{\sc M.~Cheon}, {\em Demonstrating the efficacy of kolmogorov-arnold networks
  in vision tasks}, arXiv preprint arXiv:2406.14916,  (2024).

\bibitem{chui1988multivariate}
{\sc C.~K. Chui}, {\em Multivariate splines}, SIAM, 1988.

\bibitem{cyr2025torchbraid}
{\sc E.~C. Cyr, J.~Hahne, N.~S. Moore, J.~B. Schroder, B.~S. Southworth, and
  D.~A. Vargas}, {\em Torchbraid: High-performance layer-parallel training of
  deep neural networks with mpi and gpu acceleration}, ACM Transactions on
  Mathematical Software, 51 (2025), pp.~1--30.

\bibitem{deboor1978practical}
{\sc C.~De~Boor}, {\em A practical guide to splines}, vol.~27, springer New
  York, 1978.

\bibitem{dong2024kolmogorov}
{\sc C.~Dong, L.~Zheng, and W.~Chen}, {\em Kolmogorov-arnold networks (kan) for
  time series classification and robust analysis}, arXiv preprint
  arXiv:2408.07314,  (2024).

\bibitem{Eliasof2023}
{\sc M.~Eliasof, J.~Ephrath, L.~Ruthotto, and E.~Treister}, {\em {MGIC}:
  Multigrid-in-channels neural network architectures}, {SIAM} Journal on
  Scientific Computing, 45 (2023), pp.~S307--S328,
  \url{https://doi.org/10.1137/21m1430194}.

\bibitem{Feischl2024t1}
{\sc M.~Feischl, A.~Rieder, and F.~Zehetgruber}, {\em Towards optimal
  hierarchical training of neural networks}, {arXiv},  (2024),
  \url{https://doi.org/10.48550/arxiv.2407.02242},
  \url{https://arxiv.org/abs/2407.02242}.

\bibitem{GaedkeMerzhauser2021}
{\sc L.~Gaedke-Merzh\"auser, A.~Kopaničáková, and R.~Krause}, {\em
  Multilevel minimization for deep residual networks}, {ESAIM}: Proceedings and
  Surveys, 71 (2021), pp.~131--144,
  \url{https://doi.org/10.1051/proc/202171131}.

\bibitem{gao2024convergence}
{\sc Y.~Gao and V.~Y. Tan}, {\em On the convergence of (stochastic) gradient
  descent for kolmogorov--arnold networks}, arXiv preprint arXiv:2410.08041,
  (2024).

\bibitem{Gunther2020}
{\sc S.~G\"unther, L.~Ruthotto, J.~B. Schroder, E.~C. Cyr, and N.~R. Gauger},
  {\em Layer-parallel training of deep residual neural networks}, {SIAM}
  Journal on Mathematics of Data Science, 2 (2020), pp.~1--23,
  \url{https://doi.org/10.1137/19m1247620}.

\bibitem{haber2017stable}
{\sc E.~Haber and L.~Ruthotto}, {\em Stable architectures for deep neural
  networks}, Inverse problems, 34 (2017), p.~014004.

\bibitem{hackbusch2013multi}
{\sc W.~Hackbusch}, {\em Multi-grid methods and applications}, vol.~4, Springer
  Science \& Business Media, 2013.

\bibitem{He2019}
{\sc J.~He and J.~Xu}, {\em {MgNet}: A unified framework of multigrid and
  convolutional neural network}, Science China Mathematics, 62 (2019),
  pp.~1331--1354, \url{https://doi.org/10.1007/s11425-019-9547-2},
  \url{https://arxiv.org/abs/1901.10415}.

\bibitem{he2024mlp}
{\sc Y.~He, Y.~Xie, Z.~Yuan, and L.~Sun}, {\em Mlp-kan: Unifying deep
  representation and function learning}, arXiv preprint arXiv:2410.03027,
  (2024).

\bibitem{hollig2003finite}
{\sc K.~H{\"o}llig}, {\em Finite element methods with B-splines}, SIAM, 2003.

\bibitem{hong2022activation}
{\sc Q.~Hong, J.~W. Siegel, Q.~Tan, and J.~Xu}, {\em On the activation function
  dependence of the spectral bias of neural networks}, arXiv preprint
  arXiv:2208.04924,  (2022).

\bibitem{hou2024comprehensive}
{\sc Y.~Hou and D.~Zhang}, {\em A comprehensive survey on kolmogorov arnold
  networks (kan)}, arXiv preprint arXiv:2407.11075,  (2024).

\bibitem{howard2024finite}
{\sc A.~A. Howard, B.~Jacob, S.~H. Murphy, A.~Heinlein, and P.~Stinis}, {\em
  Finite basis kolmogorov-arnold networks: domain decomposition for data-driven
  and physics-informed problems}, arXiv preprint arXiv:2406.19662,  (2024).

\bibitem{igelnik2003kolmogorov}
{\sc B.~Igelnik and N.~Parikh}, {\em Kolmogorov's spline network}, IEEE
  transactions on neural networks, 14 (2003), pp.~725--733.

\bibitem{jacob2024spikans}
{\sc B.~Jacob, A.~A. Howard, and P.~Stinis}, {\em Spikans: Separable
  physics-informed kolmogorov-arnold networks}, arXiv preprint
  arXiv:2411.06286,  (2024).

\bibitem{jacot2018neural}
{\sc A.~Jacot, F.~Gabriel, and C.~Hongler}, {\em Neural tangent kernel:
  Convergence and generalization in neural networks}, Advances in neural
  information processing systems, 31 (2018).

\bibitem{ke2017multigrid}
{\sc T.-W. Ke, M.~Maire, and S.~X. Yu}, {\em Multigrid neural architectures},
  in Proceedings of the IEEE Conference on Computer Vision and Pattern
  Recognition (CVPR), 2017, pp.~6665--6673.

\bibitem{kiamari2024gkan}
{\sc M.~Kiamari, M.~Kiamari, and B.~Krishnamachari}, {\em Gkan: Graph
  kolmogorov-arnold networks}, arXiv preprint arXiv:2406.06470,  (2024).

\bibitem{kingma2013auto}
{\sc D.~P. Kingma, M.~Welling, et~al.}, {\em Auto-encoding variational bayes},
  2013.

\bibitem{koenig2024kan}
{\sc B.~C. Koenig, S.~Kim, and S.~Deng}, {\em Kan-odes: Kolmogorov--arnold
  network ordinary differential equations for learning dynamical systems and
  hidden physics}, Computer Methods in Applied Mechanics and Engineering, 432
  (2024), p.~117397.

\bibitem{kolda2009tensor}
{\sc T.~G. Kolda and B.~W. Bader}, {\em Tensor decompositions and
  applications}, SIAM review, 51 (2009), pp.~455--500.

\bibitem{kolmogorov1957representation}
{\sc A.~N. Kolmogorov}, {\em On the representation of continuous functions of
  many variables by superposition of continuous functions of one variable and
  addition}, in Doklady Akademii Nauk, vol.~114:5, Russian Academy of Sciences,
  1957, pp.~953--956.

\bibitem{Kopanicakova2022}
{\sc A.~Kopaničáková and R.~Krause}, {\em Globally convergent multilevel
  training of deep residual networks}, {SIAM} Journal on Scientific Computing,
  0 (2022), pp.~S254--S280, \url{https://doi.org/10.1137/21m1434076}.

\bibitem{leni2013kolmogorov}
{\sc P.-E. Leni, Y.~D. Fougerolle, and F.~Truchetet}, {\em The kolmogorov
  spline network for image processing}, in Image Processing: Concepts,
  Methodologies, Tools, and Applications, IGI Global, 2013, pp.~54--78.

\bibitem{liu2024kan}
{\sc Z.~Liu, Y.~Wang, S.~Vaidya, F.~Ruehle, J.~Halverson,
  M.~Solja{\v{c}}i{\'c}, T.~Y. Hou, and M.~Tegmark}, {\em Kan:
  Kolmogorov-arnold networks}, arXiv preprint arXiv:2404.19756,  (2024).

\bibitem{lorentz1962metric}
{\sc G.~Lorentz}, {\em Metric entropy, widths, and superpositions of
  functions}, The American Mathematical Monthly, 69 (1962), pp.~469--485.

\bibitem{mcculloch1943logical}
{\sc W.~S. McCulloch and W.~Pitts}, {\em A logical calculus of the ideas
  immanent in nervous activity}, The bulletin of mathematical biophysics, 5
  (1943), pp.~115--133.

\bibitem{noorizadegan2025practitioner}
{\sc A.~Noorizadegan, S.~Wang, and L.~Ling}, {\em A practitioner's guide to
  kolmogorov-arnold networks}, arXiv preprint arXiv:2510.25781,  (2025).

\bibitem{patra2024physics}
{\sc S.~Patra, S.~Panda, B.~K. Parida, M.~Arya, K.~Jacobs, D.~I. Bondar, and
  A.~Sen}, {\em Physics informed kolmogorov-arnold neural networks for
  dynamical analysis via efficent-kan and wav-kan}, arXiv preprint
  arXiv:2407.18373,  (2024).

\bibitem{pinkus1999approximation}
{\sc A.~Pinkus}, {\em Approximation theory of the mlp model in neural
  networks}, Acta numerica, 8 (1999), pp.~143--195.

\bibitem{qiu2024relu}
{\sc Q.~Qiu, T.~Zhu, H.~Gong, L.~Chen, and H.~Ning}, {\em Relu-kan: New
  kolmogorov-arnold networks that only need matrix addition, dot
  multiplication, and relu}, arXiv preprint arXiv:2406.02075,  (2024).

\bibitem{qiu2025powermlp}
{\sc R.~Qiu, Y.~Miao, S.~Wang, Y.~Zhu, L.~Yu, and X.-S. Gao}, {\em Powermlp: An
  efficient version of kan}, in Proceedings of the AAAI Conference on
  Artificial Intelligence, vol.~39:19, 2025, pp.~20069--20076.

\bibitem{rigas2024adaptive}
{\sc S.~Rigas, M.~Papachristou, T.~Papadopoulos, F.~Anagnostopoulos, and
  G.~Alexandridis}, {\em Adaptive training of grid-dependent physics-informed
  kolmogorov-arnold networks}, IEEE Access,  (2024).

\bibitem{rosenblatt1958perceptron}
{\sc F.~Rosenblatt}, {\em The perceptron: a probabilistic model for information
  storage and organization in the brain.}, Psychological review, 65 (1958),
  p.~386.

\bibitem{ruthotto2020deep}
{\sc L.~Ruthotto and E.~Haber}, {\em Deep neural networks motivated by partial
  differential equations}, Journal of Mathematical Imaging and Vision, 62
  (2020), pp.~352--364.

\bibitem{shukla2024comprehensive}
{\sc K.~Shukla, J.~D. Toscano, Z.~Wang, Z.~Zou, and G.~E. Karniadakis}, {\em A
  comprehensive and fair comparison between mlp and kan representations for
  differential equations and operator networks}, arXiv preprint
  arXiv:2406.02917,  (2024).

\bibitem{sloane1995encyclopedia}
{\sc N.~J.~A. Sloane and S.~Plouffe}, {\em The encyclopedia of integer
  sequences}, (No Title),  (1995).

\bibitem{so2024higher}
{\sc C.~C. So and S.~P. Yung}, {\em Higher-order-relu-kans (hrkans) for solving
  physics-informed neural networks (pinns) more accurately, robustly and
  faster}, arXiv preprint arXiv:2409.14248,  (2024).

\bibitem{somvanshi2024survey}
{\sc S.~Somvanshi, S.~A. Javed, M.~M. Islam, D.~Pandit, and S.~Das}, {\em A
  survey on kolmogorov-arnold network}, arXiv preprint arXiv:2411.06078,
  (2024).

\bibitem{southworth2019necessary}
{\sc B.~S. Southworth}, {\em Necessary conditions and tight two-level
  convergence bounds for parareal and multigrid reduction in time}, SIAM
  Journal on Matrix Analysis and Applications, 40 (2019), pp.~564--608.

\bibitem{sprecher1993universal}
{\sc D.~A. Sprecher}, {\em A universal mapping for kolmogorov's superposition
  theorem}, Neural networks, 6 (1993), pp.~1089--1094.

\bibitem{sprecher2017algebra}
{\sc D.~A. Sprecher}, {\em From Algebra to Computational Algorithms: Kolmogorov
  and Hilbert's Problem 13}, Docent Press, 2017.

\bibitem{tilli1998singular}
{\sc P.~Tilli}, {\em Singular values and eigenvalues of non-hermitian block
  toeplitz matrices}, Linear algebra and its applications, 272 (1998),
  pp.~59--89.

\bibitem{toscano2024pinns}
{\sc J.~D. Toscano, V.~Oommen, A.~J. Varghese, Z.~Zou, N.~A. Daryakenari,
  C.~Wu, and G.~E. Karniadakis}, {\em From pinns to pikans: Recent advances in
  physics-informed machine learning}, arXiv preprint arXiv:2410.13228,  (2024).

\bibitem{tseng2023cusp}
{\sc Y.-H. Tseng, T.-S. Lin, W.-F. Hu, and M.-C. Lai}, {\em A cusp-capturing
  pinn for elliptic interface problems}, Journal of Computational Physics, 491
  (2023), p.~112359.

\bibitem{vaca2024kolmogorov}
{\sc C.~J. Vaca-Rubio, L.~Blanco, R.~Pereira, and M.~Caus}, {\em
  Kolmogorov-arnold networks (kans) for time series analysis}, arXiv preprint
  arXiv:2405.08790,  (2024).

\bibitem{vaswani2017attention}
{\sc A.~Vaswani}, {\em Attention is all you need}, Advances in Neural
  Information Processing Systems,  (2017).

\bibitem{wang2025simulating}
{\sc S.~Wang, S.~Sankaran, X.~Fan, P.~Stinis, and P.~Perdikaris}, {\em
  Simulating three-dimensional turbulence with physics-informed neural
  networks}, arXiv preprint arXiv:2507.08972,  (2025).

\bibitem{wang2023expert}
{\sc S.~Wang, S.~Sankaran, H.~Wang, and P.~Perdikaris}, {\em An expert's guide
  to training physics-informed neural networks}, arXiv preprint
  arXiv:2308.08468,  (2023).

\bibitem{wu2024kolmogorov}
{\sc Y.~Wu, T.~Su, B.~Du, S.~Hu, J.~Xiong, and D.~Pan}, {\em Kolmogorov--arnold
  network made learning physics laws simple}, The Journal of Physical Chemistry
  Letters, 15 (2024), pp.~12393--12400.

\bibitem{yu2024kan}
{\sc R.~Yu, W.~Yu, and X.~Wang}, {\em Kan or mlp: A fairer comparison}, arXiv
  preprint arXiv:2407.16674,  (2024).

\bibitem{zeng2024kan}
{\sc C.~Zeng, J.~Wang, H.~Shen, and Q.~Wang}, {\em Kan versus mlp on irregular
  or noisy functions}, arXiv preprint arXiv:2408.07906,  (2024).

\end{thebibliography}

\newpage
\begin{center}
\LARGE Supplementary Material
\end{center}

\section{Neural tangent kernel and change of basis}

The Neural Tangent Kernel (NTK) is a powerful tool for understanding the training dynamics of neural networks in the infinite-width limit \cite{jacot2018neural}. The NTK describes how the network's output changes with respect to its parameters during training. For a neural network $f(\bx;\bw)$ with parameters $\bw$, the NTK is defined as $\mathbf{W}(\bx, \bx') \coloneqq \left\langle \frac{\partial f(\bx;\bw)}{\partial \bw}, \frac{\partial f(\bx';\bw)}{\partial \bw} \right\rangle$. For a batch of $k$ data points, we can organize these gradients into a Jacobian matrix $J \in \mathbb{R}^{k \times N_w}$, where each row contains $\frac{\partial f(\bx^{(i)};\bw)}{\partial \bw}$ for the $i$-th data point. The NTK matrix for this batch is then given by the Gram matrix $\text{NTK} = JJ^T \in \mathbb{R}^{k \times k}$, whose $(i,j)$-th entry is precisely $\mathbf{W}(\bx^{(i)}, \bx^{(j)})$. Under our change of basis from spline weights $\bu$ to ReLU weights $\bw = A^T\bu$, the Jacobian transforms as $J_g = J_f A^T$, yielding $\text{NTK}_S^{[r]} = J_f A^{[r]}(A^{[r]})^T J_f^T$ for the spline basis.

The following theorem introduces a specific result from the field of block-Toeplitz operator theory, which is used in proving the theorem that follows. 
\begin{theorem}[Maximum singular value of block-Toeplitz operators \cite{tilli1998singular}]\label{th:toeplitz2}
Let $T_N(F)$ be an $N\times N$ block-Toeplitz matrix, with continuous generating function $F(x): [0,2\pi]\to\mathbb{C}^{m\times m}$.
Then, the maximum singular value is bounded above by
\begin{align*}
    \sigma_{max}(T_N(F)) \leq \max_{x\in[0,2\pi]} \sigma_{max}(F(x)),
\end{align*}
 for all $N\in\mathbb{N}$.
\end{theorem}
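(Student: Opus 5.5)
The plan is to realize $T_N(F)$ as a compression of the multiplication operator by $F$ on vector-valued $L^2$. The bound then follows from a Cauchy--Schwarz argument combined with Parseval's identity.

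\textbf{Setup.} Write the Fourier coefficients of the generating function as
\[
\hat F_k = \frac{1}{2\pi}\int_0^{2\pi} F(x)e^{-\mathrm{i}kx}\,dx \in \mathbb{C}^{m\times m},
\]
so that the $(j,k)$ block of $T_N(F)$ is $\hat F_{j-k}$ for $j,k = 0,\dots,N-1$. Since $F$ is continuous on the compact interval $[0,2\pi]$, each entry is bounded. Moreover, $x\mapsto\sigma_{max}(F(x)) = \|F(x)\|_2$ is continuous, so $M \coloneqq \max_x \sigma_{max}(F(x))$ exists and is finite.

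\textbf{Bilinear form.} I use the variational characterization
\[
\sigma_{max}(T_N(F)) = \sup_{\|u\|=\|v\|=1} |v^*T_N(F)u|,
\qquad u=(u_0,\dots,u_{N-1}),\ v=(v_0,\dots,v_{N-1}),\ u_k,v_j\in\mathbb{C}^m .
\]
To each block vector I associate the vector-valued trigonometric polynomials
\[
u(x)=\sum_k u_k e^{\mathrm{i}kx}, \qquad v(x)=\sum_j v_j e^{\mathrm{i}jx}.
\]
Expanding the integral and using the orthogonality of the exponentials, I will verify the identity
\[
v^*T_N(F)u = \sum_{j,k} v_j^*\hat F_{j-k}u_k = \frac{1}{2\pi}\int_0^{2\pi} v(x)^*F(x)u(x)\,dx .
\]
This is the only step requiring care: the sign convention in $\hat F_{j-k}$ must match the exponents chosen in $u$ and $v$. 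If the paper's convention differs, I will simply replace $e^{\mathrm{i}kx}$ by $e^{-\mathrm{i}kx}$ in both $u$ and $v$.

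\textbf{Estimate.} Pointwise, $|v(x)^*F(x)u(x)| \le \|v(x)\|_2\,\sigma_{max}(F(x))\,\|u(x)\|_2 \le M\,\|v(x)\|_2\|u(x)\|_2$. Integrating and applying Cauchy--Schwarz in $L^2(0,2\pi)$ gives
\[
|v^*T_N(F)u| \le M \left(\frac{1}{2\pi}\int_0^{2\pi}\|v(x)\|_2^2\,dx\right)^{1/2}\left(\frac{1}{2\pi}\int_0^{2\pi}\|u(x)\|_2^2\,dx\right)^{1/2}.
\]
By Parseval, these two factors equal $\|v\|$ and $\|u\|$ respectively. Taking the supremum over unit vectors yields $\sigma_{max}(T_N(F))\le M$ uniformly in $N$, which is the claim. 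Beyond the index and sign bookkeeping in the identity above, I expect no substantive obstacle.
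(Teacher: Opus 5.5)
Your proof is correct. Realizing $T_N(F)$ as the compression of multiplication by $F$ on $L^2(0,2\pi;\mathbb{C}^m)$ to the span of $e^{\mathrm{i}kx}$, $0\le k\le N-1$, and then applying the pointwise operator-norm bound, Cauchy--Schwarz and Parseval is the standard argument for this estimate. The paper gives no proof of its own and simply cites Tilli, so the comparison is between a citation and a self-contained derivation.

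Your route makes the result self-contained and gives a slightly stronger conclusion. Nothing beyond boundedness of $F$ is used, so the bound holds for any $F\in L^\infty$ once the maximum is replaced by the essential supremum of $\sigma_{max}(F(x))$. Continuity only serves to make the maximum attained.

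The sign bookkeeping you flagged is harmless here. The paper later defines the generator by $F(x)=\sum_\ell \alpha_\ell e^{-\mathrm{i}\ell x}$, with $\alpha_\ell$ on the $\ell$th diagonal and $\alpha_{-1}$ the first subdiagonal. So the $(j,k)$ entry is $\alpha_{k-j}=\hat F_{j-k}$, which is exactly your convention.

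What the citation offers that your argument does not is a link to the sharper asymptotic theory. When the paper remarks that the bound is asymptotically tight as the matrix grows, that needs a matching lower bound. One way is via the norm of the infinite Toeplitz operator and strong convergence of its finite sections; another is via Tilli's singular value distribution results. That lower bound is not part of the statement and is not delivered by your argument.
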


We now relate the spectral radius of the NTK matrix under a ReLU and spline basis. 

\begin{theorem}
For symmetric positive semidefinite $M$ let $\rho(M)\geq 0$ denote the spectral radius. 
Then for uniform knot spacing,   
\begin{equation}
    \rho(\text{NTK}_S^{[r]}) \leq 4\rho(\text{NTK}_R).
\end{equation}
\end{theorem}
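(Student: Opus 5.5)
The plan is to reduce the statement to an operator-norm bound on the change of basis $A$, and then bound that norm with \Cref{th:toeplitz2}.

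\textbf{Step 1 (reduction to $\|A\|_2$).} Since both NTK matrices are symmetric positive semidefinite Gram matrices, their spectral radii are squared spectral norms:
\[
\rho(\text{NTK}_R)=\|J_f\|_2^2, \qquad \rho(\text{NTK}_S^{[r]})=\|J_fA\|_2^2 .
\]
Here I use $J_g = J_fA^T$, so that $\text{NTK}_S = J_f A A^T J_f^T = (J_fA)(J_fA)^T$. Submultiplicativity then gives
\[
\rho(\text{NTK}_S^{[r]})\le \|J_f\|_2^2\,\|A\|_2^2 = \|A\|_2^2\,\rho(\text{NTK}_R).
\]
It therefore suffices to show $\|A\|_2\le 2$. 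Because $A$ is block diagonal with diagonal blocks $A^{[r]}$ (possibly of differing sizes across layers), $\|A\|_2=\max_{\text{blocks}}\|A^{[r]}\|_2$. So only a single block needs to be bounded, uniformly in its dimension.

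\textbf{Step 2 (Toeplitz symbol).} For uniform knots, \Cref{cor:uni} shows that $A^{[r]}$ is the finite $N\times N$ upper triangular Toeplitz matrix $T_N(F)$. By \eqref{eq:uni-Ar} it equals $\frac{h^{1-r}}{(r-1)!}(A^{[1]})^r$. The truncation to $N\times N$ is exactly the finite section of the Toeplitz operator, so no boundary modification is needed. I apply \Cref{th:toeplitz2} with $m=1$ (a scalar, continuous symbol). The symbol of $A^{[1]}$ is $F_1(x)=1-e^{\mathrm{i}x}$, with $\max_x|F_1(x)|=|1-(-1)|=2$. Hence $\|A^{[1]}\|_2\le 2$ for every $N$, which gives the claimed factor $4$ for $r=1$.

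For general $r$, the symbol is $F_r(x)=\frac{h^{1-r}}{(r-1)!}(1-e^{\mathrm{i}x})^r$, whose maximum modulus is $\frac{2^r h^{1-r}}{(r-1)!}$. Equivalently, I can bound $\|(A^{[1]})^r\|_2\le 2^r$ directly by submultiplicativity.

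\textbf{Main obstacle (normalization).} The constant $4$ is exact only for $r=1$, or when $A^{[r]}$ carries the normalization implicit in \Cref{lem:deriv}, where $A^{[r]}$ is identified with a finite-difference stencil ``up to constant scaling''. Without such a scaling, the argument above yields
\[
\rho(\text{NTK}_S^{[r]})\le\Bigl(\frac{2^r h^{1-r}}{(r-1)!}\Bigr)^2\rho(\text{NTK}_R),
\]
which does not reduce to $4$. I would therefore make the normalization explicit before concluding. One option is to scale the spline basis, replacing $A^{[r]}$ by $\frac{(r-1)!\,h^{r-1}}{2^{r-1}}A^{[r]}$, which is an equivalent rescaling of the learnable weights and the learning rate. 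Under that scaling the symbol is bounded by $2$ in modulus, and Steps 1 and 2 give $\rho(\text{NTK}_S^{[r]})\le 4\rho(\text{NTK}_R)$. I expect pinning down this normalization, so that it is consistent with the stated constant and with how $A$ is used in \Cref{sec:feature}, to be the only delicate point. The rest is the norm inequality together with Tilli's bound.
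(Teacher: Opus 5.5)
Your skeleton is the paper's: $\rho(\text{NTK}_S^{[r]})\le\|\mathbb{A}\|^2\rho(\text{NTK}_R)$ by submultiplicativity, reduction to a single diagonal block, and Tilli's bound on the scalar symbol $(1-e^{\mathrm{i}x})^r$.

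Two small points on that skeleton. From $J_g=J_fA^T$ one gets $\text{NTK}_S=J_fA^TAJ_f^T$ rather than $J_fAA^TJ_f^T$; this is harmless since $\|A^T\|=\|A\|$. Tilli is also not really needed: $A^{[1]}=I-S$ with $S$ the upper shift, so $\|A^{[1]}\|\le 2$ by the triangle inequality.

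You are right that some normalization is unavoidable, and this is where you and the paper differ. The paper removes the $h^{1-r}$ by rescaling the ReLU basis, $\varphi_i^{[r]}(x)=\text{ReLU}\bigl((x-t_i)/h\bigr)^{r-1}$, while keeping the standard B-splines. This turns $A^{[r]}$ into $\widetilde A^{[r]}=h^{r-1}A^{[r]}=\frac{1}{(r-1)!}(A^{[1]})^r$, and the paper concludes from $\|\widetilde A^{[r]}\|\le 2^r/(r-1)!\le 4$. But, exactly as your Step 1 says, the NTK ratio is controlled by $\|\widetilde A^{[r]}\|^2$. So the paper's argument actually gives $\rho(\text{NTK}_S^{[r]})\le\bigl(2^r/(r-1)!\bigr)^2\rho(\text{NTK}_R)$. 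That constant is at most $4$ only for $r=1$ and $r\ge5$; it is $16$ for $r=2,3$ and $64/9$ for $r=4$. The paper's last line bounds the largest singular value where its square is needed, and your accounting catches this.

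Your remedy does close the argument with constant $4$, but be explicit about what it proves. Your matrix is $\frac{(r-1)!}{2^{r-1}}\widetilde A^{[r]}$. Relative to the paper's pairing (standard B-splines with the $h$-scaled ReLU basis), this amounts to rescaling the spline basis by $\frac{(r-1)!}{2^{r-1}}$ (by $\tfrac12$ for $r=2,3$). The rescaled basis is then no longer a partition of unity, and the factor is chosen precisely to hit the target constant. That is a valid convention, but not the paper's; under the paper's convention your own argument yields the $r$-dependent constant above.

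No sharper norm estimate rescues $4$ in that setting. Since $\|T_N(F)\|\to\max_x|F(x)|$ as $N\to\infty$, any argument that uses only $\|\mathbb{A}\|$ (treating $J$ as arbitrary) is stuck at $\bigl(2^r/(r-1)!\bigr)^2$. Getting $4$ for $r=2,3,4$ with standard B-splines would require exploiting the structure of the Jacobian. So state your result as a normalization-dependent bound, not as the theorem with constant $4$.
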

\begin{proof}
Define an auxiliary equivalent ReLU$^{r-1}$ basis 
$\varphi_i^{[r]}(x) = \text{ReLU}\left( \frac{x - t_i}{h} \right)^{r-1}.$
As ReLU is a homogeneous function with respect to positive scalars, this basis is equivalent again to the ReLU basis.

We therefore repeat the process of the above construction, yielding an equivalent change-of-basis matrix $\widetilde{A}^{[r]}$, where
$\widetilde{A}^{[r]} = \frac{1}{h^{1-r}} A^{[r]}.$
Let $J$ denote the Jacobian of $f$ with respect to the weights $W$, so that
\begin{equation}
\text{NTK}_R = J J^T , \hspace{4ex}
\text{NTK}_S^{[r]} = J \mathbb{A}\mathbb{A}^T J^T,
\end{equation}
where $\mathbb{A}$ is the reordered block diagonal operator of layer-specific $A^{[r]}$.

Note that $JJ^T$ and $J\mathbb{A}\mathbb{A}^TJ^T$ are symmetric with nonnegative eigenvalues, and largest eigenvalue given by the square of the largest singular value of $J$ and $J\mathbb{A}$, respectively. Noting that the largest singular value of an operator is given by the $\ell^2$-norm, by sub-multiplicativity $\|J\mathbb{A}\| \leq \|J\|\|\mathbb{A}\|$. Thus we will bound the spectrum of $JA$ by considering the maximum singular value of the change of basis matrix $\mathbb{A}$. Recall in the correct ordering $\mathbb{A}$ is a block-diagonal matrix, with diagonal blocks given by $\widetilde{A} ^{[r]}$ for each layer. The maximum singular value of a block-diagonal matrix is given by the maximum over maximum singular values of each block. Thus, we will proceed by considering $\|\widetilde{A}^{[r]}\|$ for arbitrary $P,Q$ and fixed $r$.

From Corollary \ref{cor:uni}, for uniform knots $\widetilde{A}^{[r]}$ is upper triangular Toeplitz. Appealing to Toeplitz matrix theory, asymptotically (in $n$) tight bounds can be placed on the maximum singular value of $\widetilde{A}^{[r]}$ (and thus equivalently the maximum eigenvalue of $\widetilde{A}^{[r]}(\widetilde{A}^{[r]})^T$) by way of considering the operator's generator function. 
Let $\alpha_\ell$ denote the Toeplitz coefficient for the $i$th diagonal of a Toeplitz matrix, where $\alpha_0$ is the diagonal, $\alpha_{-1}$ the first subdiagonal, and so on. Then the Toeplitz matrix corresponds with a Fourier generator function,
${F}(x) = \sum_{\ell=-\infty}^\infty \alpha_\ell e^{-\mathrm{i}\ell x}.$
In our case we have scalar (i.e. blocksize $N=1$) generating function of $\widetilde{A}^{[r]}$ given by
\begin{equation}
F_r(x) = r \sum_{\ell=0}^r \frac{(-1)^{\ell} }{(\ell)!\,(r-\ell)!} \, e^{-\mathrm{i}\ell x} = r \cdot \frac{(1-e^{\mathrm{i}x})^r}{r!}
= n^{r-1}\frac{(1-e^{\mathrm{i}x})^r}{(r-1)!}.
\end{equation}
Thus from \Cref{th:toeplitz2}, we have for $r\in\mathbb{Z}^+$
\begin{equation}
    \|\widetilde{A}^{[r]}\| \leq \max_x |F_r(x)| = \frac{2^r}{(r-1)!} \leq 4
    \quad\text{for }r\geq1,
\end{equation}
which completes the proof.
\end{proof}

\section{Hyper-parameters and Specializations}

The examples from this paper were run on Nvidia A100 GPUs with a variety of different problem configurations and seeds.

\subsection{Function Regression}
The function regression example in Section 5.1 utilizes a dataset with 20000 random uniform points on the grid $[0.0001,0.9999]^2$.
The network output is normalized by an affine transformation to $[0,1]$.
The random number generator seeds are fixed at 1234, and any ensemble of five runs utilizes seeds 1234 through 1238. The optimizer is L-BFGS with a learning rate of 1.0.

\subsection{Physics Informed Neural Networks (PINNs)}

The 2d Poisson problem uses residual based attention (RBA)~\cite{anagnostopoulos2024} applied to the volumetric term. Assume their are $N_V = |V|$ points in set $V$, and initialize a weighting vector $\vec{\lambda}$ such that $\lambda_i = 1$ where $i=1\ldots N$. The RBA scheme provides the update rule 
\begin{equation}
\lambda_i = (1-\mu) \lambda_i + \mu \frac{e_i}{\max_j e_j} \text{ where }
e_i = \bigl|\nabla\!\cdot\!(\epsilon(\vec x)\,\nabla u_N(\vec x))
           \!-\! f(\vec x)\bigr|.
\end{equation}
The weighting vector scales each point in the volumetric loss term and is consequently incorporated into the gradient. If $e_i$ is relatively large the contribution of that error to the loss is more heavily weighted. 

The multilevel KANs training algorithm employs a linear ramp of the learning rate at the start of training, and at the transition between levels. In the tables we denote $\eta_0$ as the starting learning rate, and $\eta$ as the final target learning rate. The number of steps (or epochs for single batch PINNs training) that is required to transition between them is denoted $S_{LR}$. 

    {
\vspace{12pt}  
    \addtolength{\tabcolsep}{-0.4em}
    \begin{tabular}{lll}
      \multicolumn{3}{c}{\bf Section 5.2.1: 2d Poisson Hyper-Parameters} \\
      \toprule
      {\bf Parameter\;\;}      & {\bf Symbol} \;\; & {\bf Value} \\
      \midrule
        Loss Scaling & $\Gamma_V$ & $10^{-2}$ \\
                   & $\Gamma_I$ & $10^{-1}$ \\
                   & $\Gamma_B$ & $10^{0}$ \\
      Volume Points & $|V|$ & 2401 \\
      Boundary Points & $|B|$ & 200 \\
      Interface Points & $|I|$ & 49 \\
      \midrule
      MLP Layers & & $[2,16,16,1]$  \\
      \midrule
      KAN Layers & & $[2,5,1]$  \\
      KAN Spline Order & & $3$ \\
      KAN Coarse Grid Size \;\; & & $4$ \\
      \midrule
      Optimizer     & & Adam \\
      Learning Rate  & $\eta$ & $10^{-2}$ \\
      Initial Learning Rate & $\eta_0$ &  $10^{-4}$ \\
      Ramping Steps & $S_{LR}$ & $10$ \\
      RBA Damping & $\mu$ & $10^{-4}$ \\
      \bottomrule
    \end{tabular}
    \addtolength{\tabcolsep}{0.4em}
    }

The 1d Burger's example uses random number generator seeds that start at 1234 and increase by 10 for each new run.

    {
\vspace{12pt}  
    \addtolength{\tabcolsep}{-0.4em}
    \begin{tabular}{lll}
      \multicolumn{3}{c}{\bf Section 5.2.2: 1d Burger's Hyper-Parameters} \\
      \toprule
      {\bf Parameter\;\;}      & {\bf Symbol} \;\; & {\bf Value} \\
      \midrule
        Loss Scaling & $\Gamma_V$ & $10^{0}$ \\
                     & $\Gamma_B$ & $10^{0}$ \\
      Volume Points & $|V|$ & 4096 \\
      Initial Condition Points & $|B|$ & 64 \\
      \midrule
      Optimizer     & & Adam \\
      Learning Rate  & $\eta$ & $10^{-3}$ \\
      Cycle & $S_{LR}$ & $100$ \\
      Exponential Decay Parameter & $\gamma$ & $0.9995$ \\
      \bottomrule
    \end{tabular}
    \addtolength{\tabcolsep}{0.4em}
    }

The Allen-Cahn example uses the random number generator seed 1234. It uses the Adam optimizer with a linear learning rate scheduler over the first 10 epochs to increase the learning rate from $10^{-4}$ to $10^{-2}$.

    {
\vspace{12pt}  
    \addtolength{\tabcolsep}{-0.4em}
    \begin{tabular}{lll}
      \multicolumn{3}{c}{\bf Section 5.2.3: Allen-Cahn Hyper-Parameters} \\
      \toprule
      {\bf Parameter\;\;}      & {\bf Symbol} \;\; & {\bf Value} \\
      \midrule
        Loss Scaling & $\Gamma_V$ & $10^{-1}$ \\
                   & $\Gamma_B$ & $10^{0}$ \\
      Volume Points & $|V|$ & 251001 \\
      Initial Condition Points & $|B|$ & 501 \\
      \midrule
      Optimizer     & & Adam \\
      Learning Rate  & $\eta$ & $10^{-2}$ \\
      \bottomrule
    \end{tabular}
    \addtolength{\tabcolsep}{0.4em}
    }

\section{Allen-Cahn ReLU training}

See \Cref{fig:pinns:allencahn:relu}.
\begin{figure}[H]
    \centering
    \includegraphics[width=0.95\textwidth]{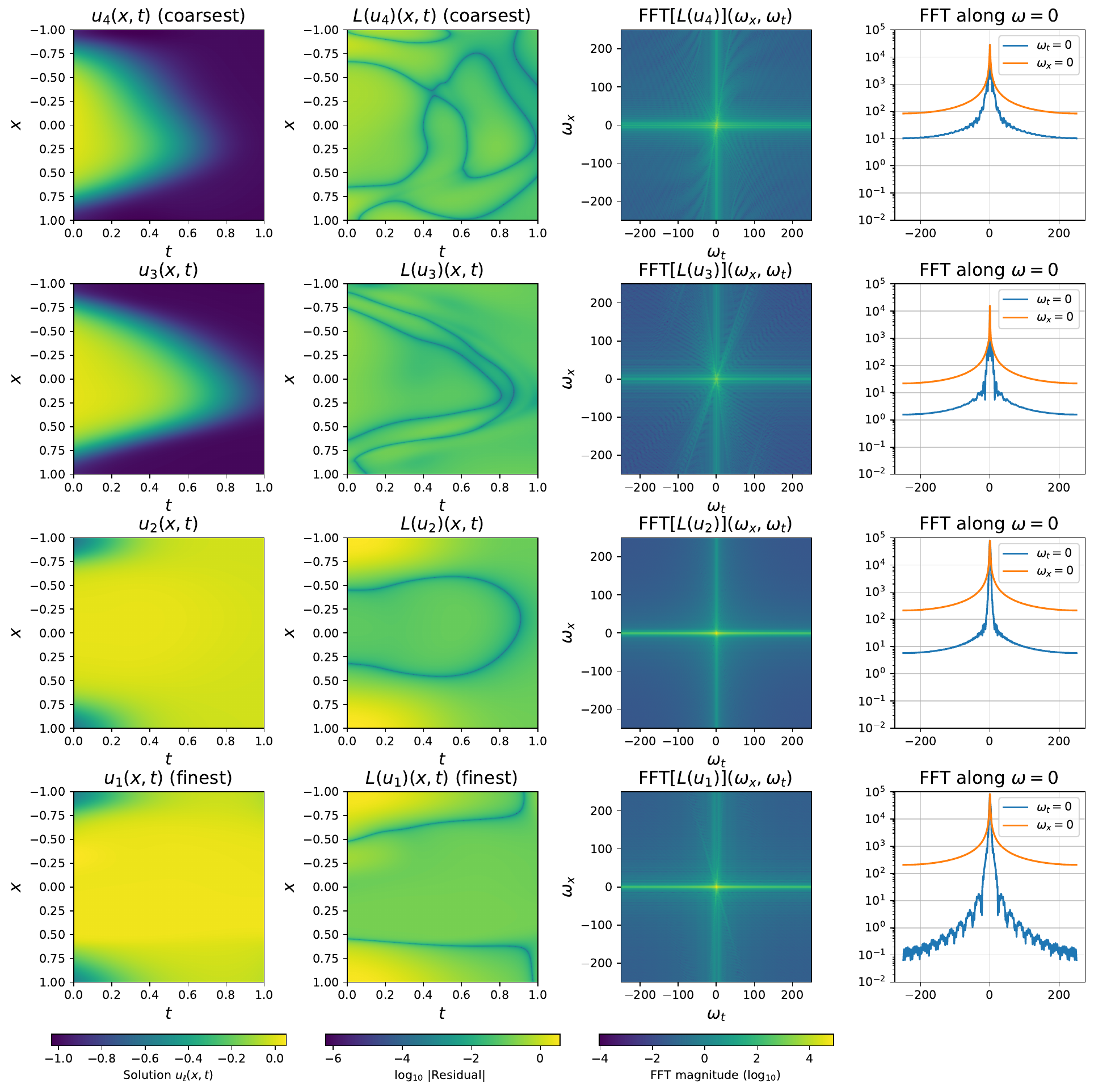}
    \caption{Plots of the trained ReLU-basis solution (leftmost) for the Allen-Cahn Equation, residual in log scale (left center), the Fourier transform of residual (right center), and cross-sections of the Fourier transform along the axes $\omega_t = 0$ and $\omega_x=0$ (rightmost). Each row displays the output after training at each level of refinement, from coarsest (top) to finest (bottom). The first three columns share the same colorbar shown at the bottom of the column.  
    \label{fig:pinns:allencahn:relu}}
\end{figure}

\end{document}